\documentclass{article}

\usepackage[square,numbers,sort&compress]{natbib}
\bibliographystyle{abbrvnat}

\usepackage[final]{neurips_2023}




\usepackage[utf8]{inputenc} 
\usepackage[T1]{fontenc}    
\usepackage{url}            
\usepackage{booktabs}       
\usepackage{amsfonts}       
\usepackage{amsmath}
\usepackage{amssymb}
\usepackage{amsthm}
\usepackage{bbm}
\usepackage{dsfont}
\usepackage{nicefrac}       
\usepackage{microtype}      
\usepackage[usenames,dvipsnames]{xcolor}         
\usepackage[title]{appendix}
\usepackage{caption}
\usepackage{subcaption}
\usepackage{graphicx}
\usepackage{tikz}
\usepackage{tkz-tab}
\usetikzlibrary{shapes.multipart}
\usepackage{bigints}
\usepackage{centernot}
\usepackage{wrapfig}
\usepackage[inline, shortlabels]{enumitem}
\usepackage{mdframed}
\usepackage{caption}
\usepackage[
  plainruled,
  linesnumbered,
]{algorithm2e}
\usepackage[inline]{enumitem}

\setlength{\algoheightrule}{1pt}

\SetAlgoCaptionLayout{centerline}
\DontPrintSemicolon

\makeatletter
\def\@algocf@capt@plainruled{above}
\renewcommand{\algocf@caption@plainruled}{%
  \vskip\AlCapSkip%
  \box\algocf@capbox%
  \vskip 5\algoheightrule}%
\makeatother

\usepackage{multicol}
\usepackage{tikz}
 \usepackage{tikz-3dplot}
 \usetikzlibrary{shapes.multipart}
 \usetikzlibrary{shapes.symbols}
\usepackage{pgfplots}
\newcommand{%
	\scalebox{}{\input{}}
}[2]{%
	\scalebox{#1}{\input{#2}}
}
\usepackage{standalone}

\usepackage{etoolbox}
\newcommand{\zerodisplayskips}{%
  \setlength{\abovedisplayskip}{2pt}%
  \setlength{\belowdisplayskip}{2pt}%
  \setlength{\abovedisplayshortskip}{2pt}%
  \setlength{\belowdisplayshortskip}{2pt}}
\appto{\normalsize}{\zerodisplayskips}
\appto{\small}{\zerodisplayskips}
\appto{\footnotesize}{\zerodisplayskips}
\allowdisplaybreaks

\usepackage{todonotes}

\usepackage{xr-hyper}
\usepackage[pagebackref=true]{hyperref} 
\renewcommand*{\backrefalt}[4]{%
    \ifcase #1 \footnotesize{(Not cited.)}%
    \or        \footnotesize{(Cited on page~#2)}%
    \else      \footnotesize{(Cited on pages~#2)}%
    \fi}

\definecolor{darkblue}{rgb}{0.0, 0.0, 0.55}
\definecolor{cobalt}{rgb}{0.0, 0.28, 0.67}

\usepackage[pagebackref=true]{hyperref} 
\hypersetup{
    colorlinks,
    linkcolor={darkblue},
    filecolor={darkgreen},
    citecolor={darkblue},
    urlcolor={cobalt}
}
\newcommand{\Br}[1]{\left[#1\right]}
\newcommand{\Paren}[1]{\left(#1\right)}

\newcommand{\norm}[1]{\left\lVert#1\right\rVert}
\newcommand{\abs}[1]{\left\lvert#1\right\rvert}
\newcommand{\fronorm}[1]{\norm{#1}_{\rm F}}
\newcommand{\fronormsq}[1]{\norm{#1}_{\rm F}^2}
\newcommand{\normsq}[1]{\norm{#1}_2^2}
\newcommand{\maxnorm}[1]{\norm{#1}_{\rm max}}
\newcommand{\infnorm}[1]{\norm{#1}_\infty}
\newcommand{\specnorm}[1]{\norm{#1}_2}
\newcommand{\subgaussnorm}[1]{\norm{#1}_{\psi_2}}
\newcommand{\pinv}[1]{#1^\dagger}
\newcommand{\widesim}[2][1.5]{
  \mathrel{\overset{#2}{\scalebox{#1}[1]{$\sim$}}}
}

\usepackage{color, soul, listings}

\newcommand{\Prob}{\mathop{\mathrm{Pr}}}

\newcommand{\Var}[1]{\textrm{Var}\left(#1\right)}
\newcommand{\argminimize}{\mathop{\rm arg \hspace{1mm} min}}
\newcommand{\argmaximize}{\mathop{\rm arg \hspace{1mm} max}}
\newcommand{\Tr}[1]{\textrm{Tr}\left[#1\right]}

\theoremstyle{plain}
\newtheorem{theorem}{Theorem}[section]
\newtheorem{proposition}[theorem]{Proposition}
\newtheorem{lemma}[theorem]{Lemma}

\theoremstyle{definition}
\newtheorem{definition}[theorem]{Definition}
\newtheorem{assumption}[theorem]{Assumption}
\newtheorem{remark}{Remark}
\newmdtheoremenv{lemma_boxed}[theorem]{Lemma}
\newmdtheoremenv{theorem_boxed}[theorem]{Theorem}
\newmdtheoremenv{definition_boxed}[theorem]{Definition}

\newcommand{\St}{\widetilde{S}}

\newcommand{\av}{\mathbf{a}}

\newcommand{\ev}{\mathbf{e}}

\newcommand{\sv}{\mathbf{s}}

\newcommand{\uv}{\mathbf{u}}
\newcommand{\vv}{\mathbf{v}}

\newcommand{\xv}{\mathbf{x}}
\newcommand{\yv}{\mathbf{y}}

\newcommand{\Av}{\mathbf{A}}
\newcommand{\Bv}{\mathbf{B}}

\newcommand{\Ev}{\mathbf{E}}
\newcommand{\Gv}{\mathbf{G}}

\newcommand{\Iv}{\mathbf{I}}
\newcommand{\Lv}{\mathbf{L}}

\newcommand{\Rv}{\mathbf{R}}
\newcommand{\Sv}{\mathbf{S}}
\newcommand{\Uv}{\mathbf{U}}
\newcommand{\Vv}{\mathbf{V}}
\newcommand{\Wv}{\mathbf{W}}
\newcommand{\Yv}{\mathbf{Y}}
\newcommand{\Xv}{\mathbf{X}}

\newcommand{\Ecal}{{\cal E}}

\newcommand{\Ncal}{{\cal N}}

\newcommand{\Qcal}{{\cal Q}}

\newcommand{\Wcal}{{\cal W}}


\newcommand{\Ebb}{\mathbb{E}}

\newcommand{\Real}{\mathbb{R}}

\newcommand{\xtv}{\widetilde{\xv}}

\newcommand{\Shat}{\widehat{S}}

\newcommand{\ehv}{\widehat{\ev}}

\newcommand{\xhv}{\widehat{\xv}}

\newcommand{\Ahv}{\widehat{\Av}}
\newcommand{\Shv}{\widehat{\Sv}}

\newcommand{\Stv}{\widetilde{\Sv}}
\newcommand{\Utv}{\widetilde{\Uv}}

\newcommand{\Xtv}{\widetilde{\Xv}}

\newcommand{\muv}{\boldsymbol{\mu}}

\newcommand{\epsb}{\boldsymbol{\epsilon}}

\newcommand{\Gammav}{\boldsymbol{\Gamma}}
\newcommand{\Lambdav}{\boldsymbol{\Lambda}}
\newcommand{\Phiv}{\boldsymbol{\Phi}}
\newcommand{\Psiv}{\boldsymbol{\Psi}}
\newcommand{\Sigmav}{\boldsymbol{\Sigma}}

\newcommand{\Bm}{\mathrm{B}}
\newcommand{\Om}{\mathrm{O}}
\newcommand{\Qm}{\mathrm{Q}}
\newcommand{\Rm}{\mathrm{R}}

\newcommand{\Otm}{\widetilde{\mathrm{O}}}

\definecolor{viridian}{RGB}{0,90,0}

\title{Matrix Compression via Randomized Low Rank \\ and Low Precision Factorization}

%

\author{Rajarshi Saha,  Varun Srivastava, Mert Pilanci\\\\
Department of Electrical Engineering\\
Stanford University\\
Stanford, CA 94305, USA \\
\texttt{\{rajsaha,vsriva,pilanci\}@stanford.edu}
}

\pgfplotsset{compat=1.18}
\begin{document}

\maketitle

\begin{abstract}
    Matrices are exceptionally useful in various fields of study as they provide a convenient framework to organize and manipulate data in a structured manner.
    However, modern matrices can  involve billions of elements, making their storage and processing quite demanding in terms of computational resources and memory usage.
    Although prohibitively large, such matrices are often approximately low rank.
    We propose an algorithm that exploits this structure to obtain a low rank decomposition of any matrix $\Av$ as $\Av \approx \Lv\Rv$, where $\Lv$ and $\Rv$ are the low rank factors.
    The total number of elements in $\Lv$ and $\Rv$ can be significantly less than that in $\Av$.
    Furthermore, the entries of $\Lv$ and $\Rv$ are quantized to low precision formats -- compressing $\Av$ by giving us a low rank and low precision factorization.
    Our algorithm first computes an approximate basis of the range space of $\Av$ by randomly sketching its columns, followed by a quantization of the vectors constituting this basis.
    It then computes approximate projections of the columns of $\Av$ onto this quantized basis.
    We derive upper bounds on the approximation error of our algorithm, and analyze the impact of target rank and quantization bit-budget.
    The tradeoff between compression ratio and approximation accuracy allows for flexibility in choosing these parameters based on specific application requirements.
    We empirically demonstrate the efficacy of our algorithm in image compression, nearest neighbor classification of image and text embeddings, and compressing the layers of LlaMa-$7$b.
    Our results illustrate that we can achieve compression ratios as aggressive as one bit per matrix coordinate, all while surpassing or maintaining the performance of traditional compression techniques.
\end{abstract}

\section{Introduction}
\label{sec:introduction}

Low-rank structures for matrices have proven to be incredibly valuable and ubiquitous across numerous fields of study.
Several real-world matrices approximately exhibit low-rank structure due to inherent redundancy or patterns, allowing them to be approximated using low-rank factors.
\citet{udell-townsend-siam-2019} provide a potential justification by considering a generative model with latent variables for real-world matrices.
Applications where low-rank structures in matrices are exploited include, but are not limited to, imaging (\citet{lingala-mri-low-rank}), fine-tuning large language models (\citet{hu2022lora, valipour2023dylora, mahbadi-2021-compacter, wang-2021-adapter, aghajanyan2020intrinsic}), compressing neural networks (\citet{ben-noach-goldberg-2020-compressing, winata2019effectiveness, eccv-2020-low-rank-cnn-compression, Wang_2020, tahaei-etal-2022-kroneckerbert, yu-2017-lowrank-sparse-decomposition-dnn,swaminathan-2020-sparse-low-rank-dnn,  mao-2020-ladabert, idelbayev-low-rank-compression-neural-nets}), obtaining efficient NN architectures (\citet{tai2016convolutional, jaderberg2014speeding}), etc.

Given a matrix $\Av \in \Real^{n \times d}$, a low-rank approximation is given by $\Av \approx \Lv\Rv$, where $\Lv \in \Real^{n \times m}$, $\Rv \in \Real^{m \times d}$ denote the {\it left} and {\it right} low-rank factors with $m \ll \min\{n,d\}$.
Since $m(n + d) \ll nd$, the total number of entries in $\Lv\Rv$ can be significantly smaller than $\Av$, enabling matrix compression.
The need for compression is driven by the overwhelming storage demands and computational complexity linked to large matrices.
In a comparable but distinct line of work, low-precision (LP) representations have also been studied extensively as a means to reduce the memory footprint.
Quantization of a continuous valued variable using a small number of bits reduces storage requirement while trading off accuracy.
In addition, they also facilitate low latency for real-time inference and low energy consumption (see \citet{gholami_2021}).
Works of \citet{alimisis-icml-2021-quantized-preconditioners, safaryan-icml-2022-fednl} have studied various matrix compression operators for distributed optimization.

In this work, we introduce {\bf LPLR}: {\it Low-Precision Low-Rank} factorization -- a general matrix compression algorithm that simultaneously exploits the low-rank structure of a matrix and quantizes it, to obtain a low-rank factorization, such that the elements of the factors are represented using a small number of bits.
Although LPLR can generically refer to a class of algorithms whose main goal is to obtain low-precision representations and exploiting low-rank structures in matrices, in the rest of the paper, we reserve this acronym for our proposed algorithm in Alg. \ref{algo:lplr-gaussian-pseudocode}.
Fig. \ref{fig:shepp} shows the effectiveness of LPLR in preserving the semantic features in image compression.
Other algorithms designed towards achieving the same objective have been named and described appropriately throughout the paper.

\begin{figure}[t]
  \centering
  \begin{subfigure}[b]{0.19\textwidth}
    \includegraphics[width=\linewidth]{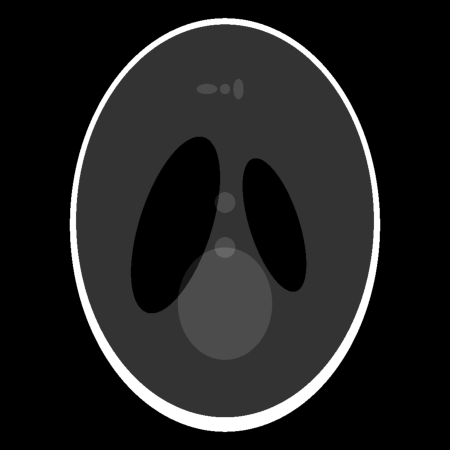}
    \caption{Original image}
  \end{subfigure}%
  \hfill
  \begin{subfigure}[b]{0.19\textwidth}
    \includegraphics[width=\linewidth]{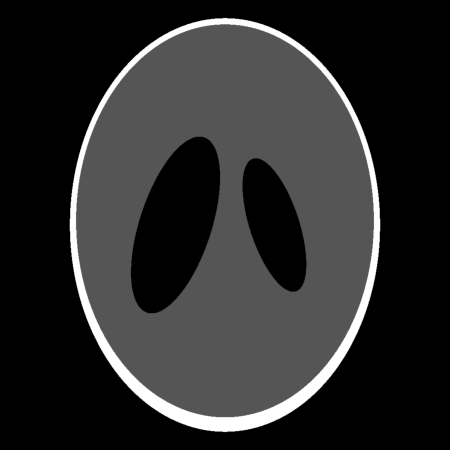}
    \caption{Na\"ive quant.}
  \end{subfigure}%
  \hfill
  \begin{subfigure}[b]{0.19\textwidth}
    \includegraphics[width=\linewidth]{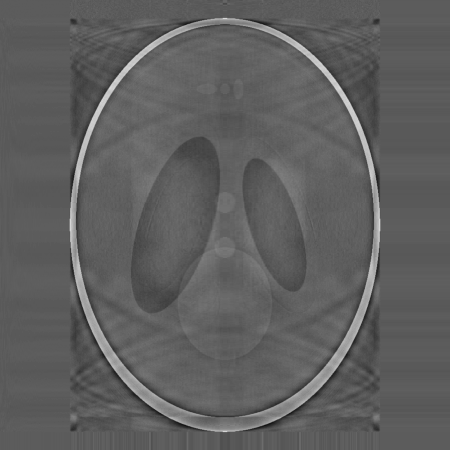}
    \caption{DSVD}
  \end{subfigure}%
  \hfill
  \begin{subfigure}[b]{0.19\textwidth}
    \includegraphics[width=\linewidth]{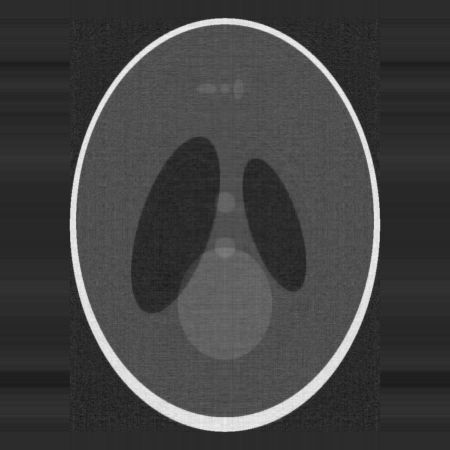}
    \caption{\textbf{LPLR} (ours)}
  \end{subfigure}
  \hfill
  \begin{subfigure}[b]{0.19\textwidth}
    \includegraphics[width=\linewidth]{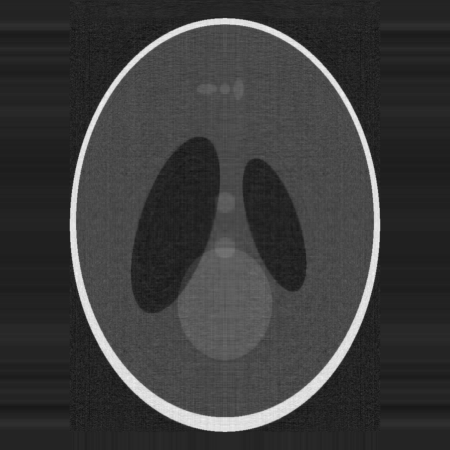}
    \caption{\textbf{LSVD} (ours)}
  \end{subfigure}
  \caption{\small Compression of Shepp-Logan phantom (a standard test image for medical image reconstruction). Naive quant. was done with $2$-bits per pixel of this $10^3 \times 10^3$ image. Quantizing the SVD factors ``directly" (i.e., DSVD) and (our) LPLR/LSVD algorithms factorize the image into a product of tall \& wide matrices reduces the total number of elements, allowing each entry to be represented using upto $8$-bits of precision per pixel. Despite the increase in precision per pixel, the total number of bits remains the same at $2 \cdot 10^6$.}
  \label{fig:shepp}
  \vspace{-5mm}
\end{figure}

\subsection{Related Works}
\label{subsec:related-works}

{\bf Low-rank approximation}:
The optimal rank-$k$ approximation of a matrix $\Av \in \Real^{n \times d}$, denoted as $\Av_k$, can be obtained by performing the singular value decomposition (SVD) of $\Av$, which gives $\Av = \Uv\Sigmav\Vv^{\top}$. 
To obtain $\Av_k$, we keep only the top $k$ singular values along with their corresponding left and right singular vectors, setting the remaining singular values to zero. 
In other words, $\Av_k = \Uv_k\Sigmav_k\Vv_k^{\top} = \sum_{i=1}^{k}\sigma_i\uv_i\vv_i^{\top}$, where $\sigma_i$, $\uv_i$, and $\vv_i$ represent the $i$-th singular value, left singular vector, and right singular vector, respectively.
The matrix $\Av_k$ minimizes the Frobenius norm $\lVert\Av - \Ahv\rVert_{\rm F}$ under the constraint that the rank of $\Ahv$ is at most $k$.
This result is known as the Eckart-Young-Mirsky theorem \citep{eckart-young}, which states that $\lVert\Av - \Av_k\rVert_{\rm F}^2 = \sum_{i > k}\sigma_i^2$. 
However, computing the SVD has a high computational complexity of $\Om(nd^2)$ ($n \geq d$ without loss of generality), which can be impractical for large matrices. 
Therefore, alternative methods such as in \citet{jin-neurips-2021-sparse-low-rank, zhang-neurips-2021-geometric-low-rank, tian_ye_neurips_2021} have been proposed which solve variants of the minimization problem.
\citet{chi-nonconvex-low-rank} provide a survey of optimization-based approaches for obtaining low-rank approximation.

Along a parallel line of work, randomized low-rank factorization algorithms have demonstrated their effectiveness in handling massive datasets efficiently while maintaining competitive accuracy levels.
Several works, including those by \citet{halko_randomized_svd, drineas_monte_carlo, drineas_randNLA, mahoney_lecture_notes, witten-candes-2015-randomized-low-rank, tropp-2017-practical-sketching-algorithms, martinsson-2011, derezinski-neurips-2020-low-rank-randomized-newton, linjian-neurips-2021-low-rank-tensors} have focused on reducing the complexity of computing SVD to approximately $\Om(nm^2)$, where $m \ll d$.
These algorithms aim to approximate the range space of matrix $\Av$ by utilizing a sketched version $\Av\Sv$, where $\Sv \in \Real^{d \times m}$ is a random matrix with $m \ll d$.
The columns of $\Av$ are then projected onto this approximate range space.
By choosing $m = k + p$, where $p \geq 2$ is a small integer, the resulting low-rank approximation $\Av \approx \Lv\Rv$ provided by these algorithms can be proven to be within a small multiplicative factor of the optimal rank-$k$ approximation, stated as $\Ebb\fronormsq{\Lv\Rv - \Av} \leq (1 + \delta)\fronormsq{\Av_k - \Av}$.
The sketching matrix $\Sv$ is typically selected from the widely used class of Johnson-Lindenstrauss (JL) embeddings.
A popular choice when $\Av$ is dense is to sample a Gaussian matrix where each entry $S_{ij} \sim \Ncal(0,\frac{1}{m})$.

{\bf Randomized quantization}:
JL embeddings also have practical uses in vector quantization.
In this context, when quantizing a vector $\xv \in \mathbb{R}^d$, instead of quantizing $\mathbf{x}$ directly, the encoder quantizes $\mathbf{Sx}$, and then the decoder obtains an approximation of $\xv$ through $\mathbf{S}^\top\mathbf{Q}(\mathbf{Sx}) \approx \xv$.
This randomized transformation equalizes the coordinate values of $\xv$, allowing for a smaller range of values and higher precision for the quantizer $\Qm$ under a fixed bit-budget. As a result, it leads to a smaller $\ell_2$ error compared to independently quantizing each coordinate in a naive manner.
Works such as \citet{transform-quant-young-2021, lyubarskii, studer2012, saha-icassp-2023-model-compression, saha-jsait-2022-dist-optimization, saha-arxiv-2022-model-compression, safaryan2021, chen-2020-breaking-trilemma, vargaftik2021drive, theertha-2017-dme-limited-communications, theertha-icml-2022-correlated-quant, vargaftik2022eden, mayekar-transit-2021-ratq} explore different variations of this concept for different applications.

\section{Proposed Algorithm}
\label{sec:proposed-algorithm}

Prior to detailing our algorithm, we first review the characteristics of the {\it uniformly dithered quantizer} that we employ within our approach.
Uniformly dithered quantization has been utilized in various prior works \citet{alistarh_NeurIPS_2017_qsgd, theertha-2017-dme-limited-communications, theertha-icml-2022-correlated-quant, mayekar-transit-2021-ratq, gray-tit-1993-dithered-quant, bernardo-icassp-2023-non-uniform-task-based-quant}, and is succinctly described in \S \eqref{subsec:uniformly-dithered-quantizer} below.

\subsection{Uniformly dithered quantizer}
\label{subsec:uniformly-dithered-quantizer}

Let us consider quantizing a scalar $x$ with $|x| \leq \Rm$.
Given a {\it bit-budget} of $\Bm$ bits, the scalar quantizer with {\it dynamic range} $\Rm$ is described by first specifying the $M = 2^{\Bm}$ quantization points as:
\begin{equation*}
    q_1 = -\Rm, q_2 = -\Rm + \Delta, q_3 = -\Rm + 2\Delta, \ldots, q_M = -\Rm + (M-1)\Delta. 
\end{equation*}
Here, the resolution is given by $\Delta = \frac{2\Rm}{M-1}$, and the quantizer operation is defined as:
\begin{align}
    \label{eq:uniform-dithered-quantizer}
    \Qm_{\Rm,\Bm}(x) = 
    \begin{cases}
        q_{k+1} \;\; \text{with probability} \;\; r, \\
        \hspace{2mm} q_k \;\; \hspace{2mm}\text{with probability} \;\; 1-r,
    \end{cases}
\end{align}
where $k = \argmaximize_{j}\{q_j \leq x\}$, i.e., $x \in [q_k, q_{k+1})$, and $r = \frac{x - q_k}{\Delta}$.
Such a quantizer satisfies
\begin{equation}
    \label{eq:uniform-dithered-quantizer-properties}
    \Ebb\;[\Qm_{\Rm, \Bm}(x)] = x \;\; \text{ and } \;\; \Ebb\Paren{\Qm_{\Rm, \Bm}(x) - x}^2 \leq \frac{\Delta^2}{4} = \frac{{\rm R}^2}{\Paren{2^{\rm B} - 1}^2},
\end{equation}
i.e., it is unbiased and the quantization error variance is dictated by $\Rm$ and $\Bm$.
Here, the $\Ebb(\cdot)$ is over the randomness from dithering in \eqref{eq:uniform-dithered-quantizer} (ref. App. \ref{app:quantization-error-uniformly-dithered-scalar-quantizer}).
If the input $x$ to the quantizer falls outside this range, i.e., $x > {\rm R}$ or $x < -{\rm R}$, the quantizer is said to be {\it saturated}.
Finally, to quantize any matrix $\Xv$, we obtain $\Qm_{\Rm, \Bm}(\Xv)$ by quantizing each entry independently, i.e., $\Br{\Qm_{\Rm, \Bm}(\Xv)}_{ij} \triangleq \Qm_{\Rm, \Bm}(X_{ij})$.

\subsection{Proposed algorithm: LPLR}
\label{subsec:proposed-algorithm-lplr}

In order to comprehend low precision and low rank representations, we first introduce a simple strategy, which we name as {\it direct-SVD quant}.
This method involves two main steps: It first computes the optimal rank-$k$ decomposition $\Av_k = \Uv_k\Sigmav_k\Vv_k^{\top}$, and then, it quantizes the low-rank factors independently, namely, $\Lv = \Qm(\Uv_k\Sigmav_k)$ and $\Rv = \Qm'(\Vv^{\top}_k)$.
Here, $\Qm$ and $\Qm'$ are uniform scalar quantizers with respective bit-budgets $\Bm$ and $\Bm'$.
A detailed analysis of this direct-SVD quantization approach can be found in Appendix \ref{app:direct-svd-quant-approximation-error-analysis}, along with pseudocode provided in Algorithm \ref{algo:direct-svd-quant-pseudocode}.
Despite its simplicity, this is not the optimal strategy due to two reasons. 
Firstly, it necessitates computing the SVD of $\Av$, which is computationally expensive at $\Om(nd^2)$. 
Secondly, let us consider improving the approximation to $\Av$ when the first factor is fixed to $\Qm(\Uv_k)$ by solving the optimization problem:
\begin{equation}
    \label{eq:optimization-problem-projecting-onto-quantized-basis}
    \Xv^* = \argminimize_{\Xv \in \Real^{k \times d}} \fronormsq{\Qm(\Uv_k)\Xv - \Av} = \Qm(\Uv_k)^{\dagger}\Av.
\end{equation}
This computes the projection of the columns of $\Av$ onto the range space of $\Qm(\Uv_k)$.
Suppose the resulting projection coefficients are further quantized to get $\Qm'(\pinv{\Qm(\Uv_k)}\Av)$.
Since $\Xv^*$ is the solution of \eqref{eq:optimization-problem-projecting-onto-quantized-basis}, for a sufficiently large value of $\Bm'$, it is evident that $\fronormsq{\Qm(\Uv_k)\Qm'(\pinv{\Qm(\Uv_k)}\Av) - \Av} \leq \fronormsq{\Qm(\Uv_k\Sigmav_k)\Qm'(\Vv^{\top}_k)  - \Av}$, and hence, better than direct-SVD quant.
However, this approach of projecting onto the range space of $\Qm(\Uv_k)$ (which we refer to as LPLR-SVD and analyze in App. \ref{app:lplr-svd-approximation-error-analysis}), still requires the computation of $\Uv_k$, so we can replace $\Uv_k$ by $\Av\Sv$, i.e., an approximation of the range space obtained through random linear combinations of the columns of $\Av$ (also known as a randomized rangefinder \cite{martinsson2020randomized}).
This leads us to our {\bf LPLR} algorithm, described in Alg. \ref{algo:lplr-gaussian-pseudocode}, which finds $\mathbf{W}^* = \argminimize_{\mathbf{W} \in \Real^{k \times d}} \fronormsq{\Qm(\Av\Sv)\mathbf{W} - \Av}$ and forms the low-rank low-precision approximation $\Qm(\Av\Sv)\Qm'(\mathbf{W}^*)$ where $\Qm,\Qm'$ are quantization operators.
While the solution of this problem is available in closed form as $\Wv^* = \Qm(\Av\Sv)^\dagger\Av$, one can also use an approximation of $\Wv^*$ by solving this least-squares minimization using an iterative method such as conjugate gradient descent.

In addition to the above argument supporting the superiority of LPLR compared to other baselines (ref. to Tabs. \ref{tab:comparison-with-existing-baselines} and \ref{tab:comparison-with-existing-baselines-bounded-entries}), there exists another essential reason why LPLR outperforms them.
This reason directly relates to the selection of $\Sv$ as a Gaussian matrix, which is an integral component of LPLR. 
Random Gaussian matrices are JL embeddings and possess an equalization property that enhances the precision of uniform quantizers.
In particular, let us consider an arbitrary vector $\xv \in \Real^d$ and obtain an estimate as $\xhv = \Sv^{\top} \Qm(\Sv\xv)$ using a uniform quantizer $\Qm$.
It can be shown that the vector quantization error remains constant and does not grow with the dimension $d$, expressed as $\Ebb\normsq{\xhv - \xv} = \Om(1)$.
This represents a substantial improvement compared to the na\"ive strategy of independently quantizing each coordinate of $\xv$, which leads to a quantization error growth rate of $\Om(d)$.
We provide a detailed explanation of this phenomenon in App. \ref{app:equalization-effect-gaussian-embeddings}. 
Furthermore, even when the quantization is $1$-bit per coordinate, e.g., $\Qm(\Sv\xv)=\mathrm{Sign}(\Sv\xv)$, this embedding provides strong near-isometric embedding properties due to the properties of random hyperplane tessellations \cite{plan2014dimension}.

While the strong equalization property of Gaussian embeddings is a known result, certain works such as \citet{theertha-2017-dme-limited-communications, theertha-icml-2022-correlated-quant, saha-jsait-2022-dist-optimization, saha-icassp-2023-model-compression} opt for using randomized Hadamard embeddings instead of Gaussian ones.
The reason behind this choice is twofold: $\rm (i)$ Gaussian matrices are dense, requiring $\Om(d^2)$ multiplications when computing $\Sv\xv$, and $\rm (ii)$ the entries of $\Sv$ are floating point numbers that must be stored in full precision, contradicting the objective of quantizing $\xv$ using fewer bits.
However, these concerns are not problematic for {\bf LPLR} because the effects of $\Sv$ in the first low-rank factor are neutralized by the second low-rank factor, and computing $\Av\Sv$ is not the rate determining step.
In fact, we exploit both the equalization property and the subspace approximation property of Gaussian matrices to derive a superior upper bound for the approximation error, as discussed next in \S \ref{sec:approximation-error-analysis}.

\begin{algorithm}[t]
\caption{{\bf LPLR}: Randomized Low-Precision Low-Rank factorization}
\label{algo:lplr-gaussian-pseudocode}

    \SetAlgoLined
    \SetKwInOut{Input}{Input}
    \SetKwInOut{Output}{Output}
    
    \Input{Matrix $\Av \in \Real^{n \times d}$, sketch size $m$, Quantizers $\Qm$, $\Qm'$ with dynamic ranges $\Rm_\Qm$, $\Rm_{\Qm'}$ and bit-budgets $\Bm, \Bm'$ respectively.}
    \Output{Factorization: $\Lv\Rv$ where $\Lv \in \Real^{n \times m}$, $\Rv \in \Real^{m \times d}$}
    
    \BlankLine
    Sample a Gaussian sketching matrix $\Sv \in \Real^{d \times m}$ with entries $S_{ij} \sim \Ncal\Paren{0, \frac{1}{m}}$.\\
    
    Compute an approximate basis of column space of $\Av$ by forming the sketch: $\Av\Sv$.\\
    
    Quantize the approximate basis with $\Qm$ to get $\Qm(\Av\Sv)$.\\

    Find $\Wv^* = \argminimize_{\Wv}\fronormsq{\Qm(\Av\Sv)\Wv - \Av}$.\\
    
    Quantize $\Wv^*$ using quantizer $\Qm'$ to get $\Qm'(\Wv^*)$.
    
    \BlankLine
    \Return{Low-rank and low-precision approximation $\Lv\Rv$ where $\Lv = \Qm(\Av\Sv)$, $\Rv = \Qm'(\Wv^*)$.}
\end{algorithm}

\section{Approximation Error Analysis}
\label{sec:approximation-error-analysis}

We are now in a position to state the approximation error guarantee of {\bf LPLR}
Let us denote the $i^{\rm th}$ row of our input matrix $\Av$ as $\av^{(i)}$.
For convenience of analysis, we make the following assumption.

\begin{assumption}
\label{asm:bounded-row-norm}
Rows of matrix $\Av$ have bounded norm, i.e., $\lVert\av^{(i)}\rVert \leq \Rm$ for some known $\Rm > 0$.
\end{assumption}

The following result gives an informal upper bound on the expected Frobenius norm error of the factorization returned by Alg. \ref{algo:lplr-gaussian-pseudocode}.

\begin{theorem}
\label{thm:lplr-approximation-result}
{\bf LPLR approximation error (Informal)}
    Suppose our input matrix $\Av \in \Real^{n \times d}$ with $\lVert\av^{(i)}\rVert \leq \Rm = \Om(1)$ has singular values $\sigma_1, \ldots, \sigma_r$ with $r = {\rm rank}(\Av)$ and target rank as $k$.
    Let $\kappa(\Av) = \sigma_1/\sigma_r$ and $\kappa(\Av_k) = \sigma_1/\sigma_k$ respectively be the condition numbers of $\Av$ and the best rank-$k$ approximation of $\Av$, and let us denote $\kappa = {\rm min}\left\{\kappa(\Av), \kappa(\Av_k)\Paren{1 - c_4\sigma_{k+1}/\sigma_k}^{-1}\right\}$.
    Furthermore, for a sufficiently small constant $\epsilon >0$, suppose the dynamic range of $\Qm$ is set to be $c_1\sqrt{\log\Paren{n/\epsilon}/m}$, and that of $\Qm'$ is set to $2\kappa\sqrt{m/d}$.
    Then, the {\bf LPLR} factorization returned by Alg. \ref{algo:lplr-gaussian-pseudocode} satisfies
    \begin{equation*}
        \Ebb \fronormsq{\Lv\Rv - \Av} \leq \Paren{1 + \frac{k}{m-k-1}}\fronormsq{\Av_k - \Av} + \epsilon,
    \end{equation*}
    while utilizing a total budget of $\log_2\Paren{\frac{c\kappa(\Av_k)\kappa}{\epsilon}\frac{nm}{\sqrt{d}}\sqrt{\log\Paren{\frac{mn^2}{\epsilon}}}}$ bits for $n \approx d$.
    Here, $c_1$, $c_2$, $c_3$, and $c_4$ are constants that depend on $\Rm$.
\end{theorem}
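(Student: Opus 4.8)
Write $\Ytv \triangleq \Qm(\Av\Sv)$ for the quantized sketch; recall $\Wv^* = \pinv{\Ytv}\Av$ and $\Lv\Rv = \Ytv\,\Qm'(\Wv^*)$. The plan is to split the error into a second-factor quantization term and a projection-residual term,
\begin{equation*}
\Lv\Rv - \Av \;=\; \underbrace{\Ytv\bigl(\Qm'(\Wv^*) - \Wv^*\bigr)}_{=:\;\Ev_1}\;+\;\underbrace{\bigl(\Ytv\pinv{\Ytv} - \Iv\bigr)\Av}_{=:\;\Ev_2},
\end{equation*}
where $\Ev_2 = -(\Iv - \Pv_{\Ytv})\Av$ is the residual of projecting the columns of $\Av$ onto $\mathrm{range}(\Ytv)$. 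I will show (a) $\Ebb\fronormsq{\Ev_1}$ is negligible (absorbed into $\epsilon$) when $\Bm'$ is of the stated order, and (b) $\Ebb\fronormsq{\Ev_2} \le \bigl(1 + \tfrac{k}{m-k-1}\bigr)\fronormsq{\Av_k - \Av}$ up to a negligible amount when $\Bm$ is of the stated order, and then combine via $\fronorm{\Lv\Rv - \Av}^2 \le \fronormsq{\Ev_2} + 2\fronorm{\Ev_1}\fronorm{\Ev_2} + \fronormsq{\Ev_1}$, using $\fronorm{\Ev_2} \le \fronorm{\Av} \le \sqrt{n}\,\Rm$ to also absorb the last two terms into $\epsilon$.

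The whole argument is conditioned on a ``good'' event $\Ecal$ over the randomness of $\Sv$ (and the dithering of $\Qm$) on which: (i) $\Av\Sv$ has full column rank with $\sigma_m(\Av\Sv)$ bounded below, using that $\Vv^\top\Sv$ is (a scaling of) an $r\times m$ standard Gaussian matrix, the Gaussian smallest-singular-value estimates, and $\sigma_m(\Sigmav\Vv^\top\Sv)\ge \sigma_{\min}(\Sigmav)\,\sigma_m(\Vv^\top\Sv)$; (ii) $\maxnorm{\Av\Sv}\le\Rm_\Qm = c_1\sqrt{\log(n/\epsilon)/m}$, which holds since each of the $nm$ entries of $\Av\Sv$ is mean-zero Gaussian with variance at most $\Rm^2/m$, so a union bound with $c_1$ large makes the failure probability tiny; (iii) $\maxnorm{\Wv^*}\le\Rm_{\Qm'} = 2\kappa\sqrt{m/d}$, so $\Qm'$ does not saturate. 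Part (iii) is the crux: writing $\Wv^* \approx \pinv{(\Av\Sv)}\Av = \pinv{(\Sigmav\Vv^\top\Sv)}\,\Sigmav\Vv^\top$ (the sketch-quantization perturbation $\Qm(\Av\Sv)-\Av\Sv$ is handled as in (b)), one gets $\specnorm{\Wv^*}\lesssim \sigma_1/\sigma_m(\Sigmav\Vv^\top\Sv)$, and the per-entry bound carries a further $\sqrt{r/d}$ from the row norms of the right singular vectors; the two candidates inside the $\min$ defining $\kappa$ correspond to lower-bounding $\sigma_m(\Sigmav\Vv^\top\Sv)$ either through $\sigma_r$ (giving $\kappa(\Av)$) or through $\sigma_k$ together with a Neumann-series/perturbation estimate needing the spectral gap (giving the factor $(1 - c_4\sigma_{k+1}/\sigma_k)^{-1}$). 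On $\Ecal^c$ one uses the crude deterministic bound $\fronorm{\Lv\Rv}\le\fronorm{\Lv}\fronorm{\Rv}\le m\sqrt{nd}\,\Rm_\Qm\Rm_{\Qm'}$ (quantizer outputs lie in $[-\Rm_\Qm,\Rm_\Qm]$ and $[-\Rm_{\Qm'},\Rm_{\Qm'}]$); with the constants chosen large, $\Prob(\Ecal^c)$ is small enough that its contribution to the expectation is at most $\epsilon$.

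For (a), conditioning on $\Ytv,\Wv^*$ and using unbiasedness, entrywise independence, and per-entry variance $\le (\Rm_{\Qm'})^2/(2^{\Bm'}-1)^2$ of $\Qm'$ gives $\Ebb_{\Qm'}\fronormsq{\Ytv(\Qm'(\Wv^*)-\Wv^*)} = \Tr{\Ytv^\top\Ytv\,\Ebb_{\Qm'}[(\Qm'(\Wv^*)-\Wv^*)(\Qm'(\Wv^*)-\Wv^*)^\top]} \le d(\Rm_{\Qm'})^2\fronormsq{\Ytv}/(2^{\Bm'}-1)^2$; on $\Ecal$, $\fronormsq{\Ytv}\le 2\fronormsq{\Av\Sv}+2\fronormsq{\Qm(\Av\Sv)-\Av\Sv}\lesssim n\Rm^2$ (after $\Ebb_\Sv$, using $\Ebb_\Sv\fronormsq{\Av\Sv}=\fronormsq{\Av}\le n\Rm^2$ and concentration), so $\Ebb\fronormsq{\Ev_1}\lesssim n\Rm^2\kappa^2 m/(2^{\Bm'}-1)^2 \le\epsilon$ once $2^{\Bm'}\gtrsim\Rm\kappa\sqrt{mn/\epsilon}$. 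For (b), since $\Pv_{\Ytv}$ gives the best projection onto its range,
\begin{equation*}
\fronorm{\Ev_2} = \fronorm{(\Iv-\Pv_{\Ytv})\Av} \le \fronorm{\Av - \Ytv\pinv{(\Av\Sv)}\Av} = \fronorm{(\Iv-\Pv_{\Av\Sv})\Av - \bigl(\Qm(\Av\Sv)-\Av\Sv\bigr)\pinv{(\Av\Sv)}\Av},
\end{equation*}
whence $\fronorm{\Ev_2}\le\fronorm{(\Iv-\Pv_{\Av\Sv})\Av} + \fronorm{\Qm(\Av\Sv)-\Av\Sv}\cdot\sigma_1/\sigma_m(\Av\Sv)$; the second term is at most $\sqrt{nm}\,\Rm_\Qm\,\sigma_1/((2^\Bm-1)\,\sigma_m(\Av\Sv))$ and is made negligible by taking $\Bm$ of the stated order (this is where the residual $\kappa(\Av_k)$ enters, via $\sigma_1/\sigma_m(\Av\Sv)$), while $\Ebb_\Sv\fronormsq{(\Iv-\Pv_{\Av\Sv})\Av}\le\bigl(1+\tfrac{k}{m-k-1}\bigr)\fronormsq{\Av_k-\Av}$ is exactly the randomized-rangefinder bound of \citet{halko_randomized_svd} ($\Pv_{\Av\Sv}$ is scale-invariant, so the $1/\sqrt m$ normalization of $\Sv$ is irrelevant). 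Squaring the two triangle inequalities, collecting cross terms with $\fronorm{\Av}\le\sqrt n\Rm$, and balancing the constants in $\Bm,\Bm'$ against the pieces of $\epsilon$ yields both the claimed error bound and the stated per-coordinate bit budget $\log_2\bigl(\tfrac{c\,\kappa(\Av_k)\kappa}{\epsilon}\tfrac{nm}{\sqrt d}\sqrt{\log(mn^2/\epsilon)}\bigr)$ (with $n\approx d$).

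I expect the main obstacle to be part (iii) --- the entrywise (not merely spectral) no-saturation bound on $\Wv^* = \pinv{\Qm(\Av\Sv)}\Av$. It requires simultaneously passing from $\Qm(\Av\Sv)$ back to $\Av\Sv$, a sharp lower bound on the $m$-th singular value of the Gaussian sketch restricted to the row space of $\Av$, and control of the row norms of the right singular vectors; this is precisely where the two-part definition of $\kappa$ (a minimum of two spectral quantities, with the $(1-c_4\sigma_{k+1}/\sigma_k)^{-1}$ correction on the second) becomes unavoidable. A secondary nuisance is cleanly marrying the in-expectation randomized-rangefinder bound with the high-probability invertibility and no-saturation events, handled by the $\Ecal$/crude-fallback split above.
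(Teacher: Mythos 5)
Your architecture is essentially the paper's: the same split of $\Lv\Rv-\Av$ into a projection residual plus second-quantizer noise, the same max-norm-to-spectral-norm route (with the two-branch $\min$ defining $\kappa$ via $\sigma_r$ versus the gap-corrected $\sigma_k$) for the no-saturation event of $\Qm'$, the same good-event/fallback treatment of saturation, and the same $\frac{m-1}{m-k-1}=1+\frac{k}{m-k-1}$ rangefinder constant (the paper reaches it through a ``sketched least squares with quantized response'' lemma rather than by perturbing off $\Pv_{\Av\Sv}$ and invoking Halko et al.\ directly, but both are sound).

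The genuine gap is in how you combine the pieces: you propose Cauchy--Schwarz on the cross terms, and that does not recover the stated bound. For the cross term $2\fronorm{\Ev_1}\fronorm{\Ev_2}$, bounding $\fronorm{\Ev_2}\le\sqrt{n}\,\Rm$ and $\Ebb\fronormsq{\Ev_1}\le\epsilon$ only yields $O(\sqrt{n\epsilon})$, not $O(\epsilon)$; forcing it down to $\epsilon$ requires $\Ebb\fronormsq{\Ev_1}\lesssim\epsilon^2/(n\Rm^2)$, which inflates $\Bm'$ by roughly $\frac{1}{2}\log_2(n/\epsilon)$ beyond the theorem's budget. Similarly, inside your bound on $\fronorm{\Ev_2}$, the cross term between $(\Iv-\Pv_{\Av\Sv})\Av$ and $\Ev\pinv{(\Av\Sv)}\Av$ is of order $\fronorm{\Av_k-\Av}\cdot\sqrt{\epsilon}$, and AM--GM can only trade it for a multiplicative $(1+\eta)$ loss on the leading term, so you land at $(1+\eta)\Paren{1+\frac{k}{m-k-1}}\fronormsq{\Av_k-\Av}$ rather than the exact stated constant. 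Both defects vanish if you use the tool you already deploy in part (a): the dithered quantizers are conditionally unbiased, so $\Ebb_{\Qm'}[\Qm'(\Wv^*)-\Wv^*]=\mathbf{0}$ and $\Ebb_{\Qm}[\Ev]=\mathbf{0}$ make every one of these cross terms exactly zero in expectation; this is precisely how the paper gets the clean orthogonal decomposition. Two smaller points: your claim that the entrywise bound on $\Wv^*$ picks up ``a further $\sqrt{r/d}$ from the row norms of the right singular vectors'' is not how the $\sqrt{m/d}$ arises (and would need an incoherence assumption the paper does not make) --- the factor comes entirely from $\sigma_{\min}(\Av\Sv)\gtrsim\mu\sqrt{d/m}$ in the spectral-norm bound you also state; and your crude fallback $\fronorm{\Lv\Rv}\le m\sqrt{nd}\,\Rm_{\Qm}\Rm_{\Qm'}$ on $\Ecal^c$, multiplied by $\Prob(\Ecal^c)\lesssim\epsilon/(n\Rm^2)$, leaves residual factors of $m^2\kappa^2\log(n/\epsilon)$, so you must either shrink $\Prob(\Ecal^c)$ further or adopt the paper's convention that the algorithm returns $\Lv=\Rv=\mathbf{0}$ upon saturation, for which the error is simply $\fronormsq{\Av}\le n\Rm^2$.
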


We provide a less formal version of our main result here, suitable for interpretation.
The formal statement of this result, including specific constant values, can be found in Thm. \ref{thm:lplr-approximation-error-formal} of App. \ref{app:lplr-approximation-error-analysis}.
It does not necessitate the assumption $n \approx d$ and provides distinct thresholds for $\Bm$ and $\Bm'$.
Thm. \ref{thm:lplr-approximation-error-formal} asserts that, for a target rank-$k$, as long as $m \geq k+2$, one can ensure an arbitrarily small approximation error of $\epsilon$ by selecting the number of bits to be at least above a certain threshold budget.
The threshold depends on the error tolerance $\epsilon$, dimensions $n$ and $d$, the sketch size $m$, and the spectrum of $\Av$.
The value of $\kappa$ is determined by taking the smaller of two quantities.
In the case of matrices with a sharp decline in singular values (e.g., matrices of exact rank-$k$), where the ratio $\sigma_{k}/\sigma_{k+1}$ approaches zero, $\kappa \approx \kappa(\Av_k)$.
For matrices with a smoother spectrum (e.g., all singular values are equal), $\kappa = \kappa(\Av)$, the condition number of the input matrix $\Av$.

\begin{remark}
    We consider two distinct scenarios based on Asm. \ref{asm:bounded-row-norm}.
    The first case assumes that the row norms are bounded by a constant, represented by $\Rm  = \Om(1)$.
    This assumption is reasonable when the rows of $\Av$ correspond to different normalized features of a data point.
    The second case assumes that the individual entries of $\Av$ are bounded by a constant, i.e., $A_{ij} = \Om(1)$.
    This implies that $\Rm = \Om(\sqrt{d})$, which is a reasonable assumption for scenarios like images, where it is known that each pixel value is bounded.
    In Tab. \ref{tab:comparison-with-existing-baselines}, we compare the performance of the algorithms when $\Rm = \Om(1)$, while in Tab. \ref{tab:comparison-with-existing-baselines-bounded-entries}, we assume $\Rm = \Om(\sqrt{d})$.
    Thm. \ref{thm:lplr-approximation-result} assumes that $\Rm = \Om(1)$.
    The expressions in Tab. \ref{tab:comparison-with-existing-baselines-bounded-entries} can be obtained in a similar manner from the formal version in Thm \ref{thm:lplr-approximation-error-formal}.
\end{remark}

\subsection{Analysis outline}
\label{subsec:analysis-outline}

The derivation of the upper bound on the approximation error of LPLR is presented in App. \ref{app:lplr-approximation-error-analysis}.
In this section we outline a brief proof sketch that highlights the main challenges of the proof.
As mentioned already in \S \ref{subsec:proposed-algorithm-lplr}, the analysis of LPLR utilizes the subspace embedding and equalization properties of random Gaussian matrices.
A key component in Alg. \ref{algo:lplr-gaussian-pseudocode} is the choice of dynamic range for quantizers $\Qm$ and $\Qm'$.
Note that when either $\Qm$ or $\Qm'$ gets saturated in lines $2$ and $5$ of Alg. \ref{algo:lplr-gaussian-pseudocode}, a trivial factorization of $\Lv\Rv = \mathbf{0}$ is returned.
We choose the dynamic ranges $\Rm_{\Qm}$ and $\Rm_{\Qm'}$ to be sufficiently high enough so that this happens with a very low probability.
Lemma \ref{lemma:max-norm-first-quantizer-input} states this formally for quantizer $\Qm$,
\begin{equation*}
\label{eq:maxnorm-first-quantizer-input-main-text}
   \maxnorm{\Av\Sv} \leq \Rm\sqrt{\frac{2\log\Paren{\frac{16\Rm^2n^2m}{\epsilon}}}{m}} \quad \text{ with probability exceeding } 1 - \frac{\epsilon}{8n\Rm^2}. 
\end{equation*}
Here, $\maxnorm{\Av\Sv}$ is max-norm of the matrix $\Av\Sv$, i.e., the coordinate with maximum magnitude.
This concentration result is a consequence of the equalization property of Gaussian matrix $\Sv$.

On the other hand, the input to the second quantizer is $\Qm(\Av\Sv)^{\dagger}\Av$ and Lemma \ref{lem:max-norm-second-quantizer-input} provides a concentration result for the max-norm of this matrix.
Although in a general worst-case scenario, the coordinate values of the pseudo-inverse of a matrix with small entries can be large (\citet{alon-anti-hadamard}), because we compute the pseudo-inverse of the matrix $\Av\Sv$, which is rectangular ($m \ll n$) and with random Gaussian entries, $\lVert\Paren{\Av\Sv}^{\dagger}\rVert_2$ does not shoot up arbitrarily as shown by \citet{rudelson-2008-smallest-sv}.
We then show that $\lVert \Qm(\Av\Sv)^{\dagger} \rVert_2$ is not too far from $\lVert (\Av\Sv)^{\dagger}\rVert_2$, allowing us to derive an expression for $\Rm_{\Qm'}$.
We get for $\gamma = \frac{d}{m}$ and $t = \sqrt{\frac{2\log\Paren{\frac{32n\Rm^2}{\epsilon}}}{m}}$,
\begin{equation*}
\label{eq:maxnorm-second-quantizer-input-main-text}
     \maxnorm{\pinv{\Qm(\Av\Sv)}\Av} \leq \frac{2\kappa}{\sqrt{\gamma} - 1 - t} \;\; \text{with probability exceeding} \;\; 1 - \frac{\epsilon}{4n\Rm^2}.
\end{equation*}

The second part of the analysis deals with upper bounding the approximation error when the second low-rank factor is unquantized, i.e., $\lVert \Qm(\Av\Sv)\pinv{\Qm(\Av\Sv)}\Av - \Av \rVert_{\rm F}^2$ conditioned on the event that $\Qm$ and $\Qm'$ are unsaturated.
We reduce this problem to analyzing the solution of the following:
\begin{equation}
\label{eq:gaussian-sketched-LS}
    \Xtv = \argminimize_{\Xv} \fronormsq{\Xv^{\top}\Av_k\Sv -  \Qm(\Av\Sv)}.
\end{equation}
We refer to \eqref{eq:gaussian-sketched-LS} as the {\it sketched least squares problem with quantized response} as it is a variant of the generalized least squares problem, $\Xv^* = \argminimize_{\Xv}\fronormsq{\Xv^{\top}\Av_k - \Av}$.
This is potentially a problem of independent interest, and we analyze the solution of \eqref{eq:gaussian-sketched-LS} in detail in App. \ref{app:sketched-least-squares-problem-with-quantized-response}.
Exploiting the subspace embedding property of $\Sv$ we show that
\begin{equation*}
    \fronormsq{{\Xv^*}^{\top}\Av_k - \Av} \leq \Ebb\fronormsq{{\Xtv}^{\top}\Av_k - \Av} \leq \frac{m-1}{m-k-1}\fronormsq{{\Xv^*}^{\top}\Av_k - \Av} + \text{ quantization error term.}
\end{equation*}
This leads us to the proof of Lemma \ref{lem:lplr-approximation-result-quantizers-unsaturated} which gives the approximation error of LPLR when $\Qm$ and $\Qm'$ are unsaturated. 
Finally, taking into account the low-probability saturation events, for which the error is $\lVert \Av \rVert_{\rm F}^2$, we derive our main result in Thm. \ref{thm:lplr-approximation-error-formal}.
Subsequently, we discuss the approximation made in App. \ref{subapp:derivation-informal-version-lplr-approx-result} and arrive at the informal result of Thm. \ref{thm:lplr-approximation-result}.

\subsection{Comparison with baselines}
\label{subsec:comparison-with-baselines}

\begin{table}[t]
  \caption{\small Comparison with baselines (row-norm bound is constant, i.e., $\lVert \av^{(i)} \rVert = \Om(1)$). $k, m \ll {\rm min}\{d,n\}$. $n$: no. of rows, $d$: no. of columns, $m$: sketch size, $\epsilon$: error tolerance, $\delta = k / (m-k-1)$. The expressions for bit-budget (per entry) ignores constant multiplicative factors inside the $\log_2(\cdot)$. We assume $n \geq d$.}
  \label{tab:comparison-with-existing-baselines}
  \centering
  \small
  \begin{tabular}{cccc}
    \toprule
    Algorithms  & Approximation error & Bit-budget (per entry) & Computation \\
    \midrule
    Na\"ive uniform & $\epsilon$  & $\frac{1}{2}\log_2\Paren{\frac{\textcolor{red}{nd}}{\epsilon}}$ & $\Om(\textcolor{red}{nd})$  \\
    Direct-SVD & $\fronormsq{\Av_k - \Av} + \epsilon$ & $\frac{1}{2}\log_2\Paren{\frac{\textcolor{blue}{k\sigma_1^2}}{\epsilon}\sqrt{\textcolor{blue}{nd}}}$  & $\Om(\textcolor{blue}{nd^2})$    \\
    {\bf LPLR} ({\it ours}) & $\Paren{1 + \delta}\fronormsq{\Av_k - \Av} + \epsilon$  & $\frac{1}{2}\log_2\Paren{\frac{\textcolor{viridian}{\kappa(\Av_k)\kappa}}{\epsilon}\frac{\textcolor{viridian}{nm}}{\sqrt{\textcolor{viridian}{d}}}\sqrt{\log\Paren{\frac{\textcolor{viridian}{mn^2}}{\epsilon}}}}$  & $\Om(\textcolor{viridian}{ndm})$ \\
    \bottomrule
  \end{tabular}
\end{table}

\begin{table}[t]
  \caption{\small Comparison with baselines (individual entries of $\Av$ are bounded by a constant, i.e., $A_{ij} = \Om(1)$). Dimension dependent terms are color highlighted for ease of comparison with Tab. \ref{tab:comparison-with-existing-baselines}.}
  \label{tab:comparison-with-existing-baselines-bounded-entries}
  \centering
  \small
  \begin{tabular}{cccc}
    \toprule
    Algorithms  & Approximation error & Bit-budget (per entry) & Computation \\
    \midrule
    Na\"ive uniform & $\epsilon$  & $\frac{1}{2}\log_2\Paren{\frac{\textcolor{red}{nd}}{\epsilon}}$ & $\Om(\textcolor{red}{nd})$  \\
    Direct-SVD & $\fronormsq{\Av_k - \Av} + \epsilon$ & $\frac{1}{2}\log_2\Paren{\frac{\textcolor{blue}{k\sigma_1^2}}{\epsilon}\sqrt{\textcolor{blue}{nd}}}$ & $\Om(nd^2)$    \\
    {\bf LPLR} ({\it ours})     & $\Paren{1 + \delta}\fronormsq{\Av_k - \Av} + \epsilon$  & \hspace{-0.5mm}$\frac{1}{2}\log_2\Paren{\frac{\textcolor{viridian}{\kappa(\Av_k)\kappa}}{\epsilon}\textcolor{viridian}{nm}\sqrt{\textcolor{viridian}{d}\log\Paren{\frac{\textcolor{viridian}{mn^2d^2}}{\epsilon}}}}$  & $\Om(\textcolor{viridian}{ndm})$ \\
    \bottomrule
  \end{tabular}
\end{table}

We are now in a position to compare the performance with baselines in Tabs. \ref{tab:comparison-with-existing-baselines} and \ref{tab:comparison-with-existing-baselines-bounded-entries}.

{\bf Naive quantization}: The most straightforward baseline for matrix quantization is na\"ive quantization where each coordinate of the matrix is quantized independently, agnostic to any low-rank structure in the matrix $\Av$.
In this, we allocate $\Bm$ bits to each coordinate of $\Av$ and since there are $nd$ entries in the matrix, from \eqref{eq:uniform-dithered-quantizer}, the Frobenius norm error is upper bounded by $\frac{\Rm^2nd}{\Paren{2^{\Bm} - 1}^2}$.
Note that this holds true irrespective of whether $\Rm = \Om(1)$ or $\Om(\sqrt{d})$ because when $\lVert \av^{(i)} \Vert \leq \Rm$ implies $A_{ij} \leq \Rm$.
To ensure that the error is within a certain tolerance $\epsilon$, we then require $\frac{1}{2}\log_2\Paren{\frac{nd}{\epsilon}}$ bits.
In this, and also other expressions for bit-budget requirements of algorithms in Tabs. \ref{tab:comparison-with-existing-baselines} and \ref{tab:comparison-with-existing-baselines-bounded-entries}, we have ignored the multiplicative constant factors inside the $\log_2(\cdot)$ for simplicity of exposition.
The exact expressions can be found in the corresponding appendices where we derive them.

One of the primary reasons why both direct-SVD and LPLR are expected to perform better than na\"ive is that the former strategies exploit the low-rank structure of the matrices to reduce the total number of parameters being quantized, i.e., $k(n+d)$ for direct-SVD and $m(n + d)$ for LPLR, vs. $nd$ for na\"ive.
Given a total number of bit-budget for the entire matrix $\Av$, since we now quantize fewer parameters than before, we can allocate a higher number of bits to each parameter, enabling higher precision.
The price we pay for exploiting the low-rank structure of matrices is the additional $\lVert \Av_k - \Av \rVert_{\rm F}^2$ dependent term, which is usually very small for matrices that can be well approximated by a low-rank structure.
For matrices that are exactly rank $k$, this term is $0$.
As we see in our numerical simulations in \S \ref{sec:numerical-simulations}, several real-world matrices can be well-approximated by a low-rank structure.

{\bf Direct-SVD quant.}: From Tab. \ref{tab:comparison-with-existing-baselines}, we see that to achieve an $\epsilon$-quantization error, direct-SVD requires $\frac{1}{2}\log_2(k\sqrt{nd})$ bits per entry, which is greater than $\frac{1}{2}\log_2\Paren{\frac{nm}{\sqrt{d}}}$ required by LPLR (ignoring the logarithmic terms).
Evidently, LPLR demands fewer bits than direct-SVD because $k, m \ll {\rm min}\{n,d\}$ for inherently low-rank matrices.
For the regime presented in Tab. \ref{tab:comparison-with-existing-baselines-bounded-entries}, the bit requirement for direct-SVD remains unchanged.
However, LPLR now requires $\frac{1}{2}\log_2\Paren{nm\sqrt{d}}$, slightly more than direct-SVD, due to the additional $\sqrt{n}$ factor.
Thus, it makes sense to expect that direct-SVD can perform better in this regime.
This is supported by our numerical experiments in Tabs. \ref{tab:cifar10} to \ref{tab:emotion}, where direct-SVD indeed outperforms LPLR in certain scenarios.
Nevertheless, it is crucial to emphasize that direct-SVD necessitates computing the SVD, which can be prohibitive for very large matrices due to the current memory limitations of available GPUs, making LPLR the only viable option.

{\bf Computational complexity}: Unsurprisingly, na\"ive quant. requires the least computation, i.e., $\Om(nd)$, as it does just a single pass over all the elements of $\Av$.
The $\Om(nd^2)$ complexity of direct-SVD quant. stems from the requirement of computing SVD (assuming $n \geq d$).
LPLR is the best of both worlds -- for the same bit-budget, LPLR has a smaller approximation error than both direct-SVD and na\"ive, and a complexity of $\Om(ndm)$, arising from the requirement to compute $\Av\Sv$, i.e., a product of two dense matrices of dimensions $n \times d$ and $d \times m$, which is better than direct-SVD, since $m \ll d$.

\section{Numerical Simulations}
\label{sec:numerical-simulations}

\subsection{Overview}
\label{sec:over}

We evaluate the robustness of LPLR on multiple tasks, namely, image compression, binary, and multi-class classification across disparate domains including vision, text and raw images, and neural network weight matrices.
We consider a range of input configurations to showcase the performance and non linear effects of joint quantization and low rank approximation on a given dataset, especially at lower bit budgets. 
LPLR provides competitive results at bit budgets as low as a single bit, providing extreme model compression while maintaining non trivial performance for the task at hand.

\textbf{Baselines.} We employ naive quantization, which quantizes the input matrix by rounding to the nearest scalar in the underlying data type's quantization grid, as our primary benchmark. 
Na\"ive quant. and its variants -- \citet{dettmers2022llm,yao2022zeroquant}, are the most popular method in use across domains, as their memory and computational run time requirements scale extremely well with model and dataset sizes. 
In addition, we also evaluate the performances of direct-SVD quantization and LPLR-SVD to disambiguate between the entangled effects of quantization and exploiting low rank structure.

\textbf{Metrics.} We evaluate LPLR performance using task specific goodness of fit metrics, as well as relative Frobenius norm error between the original and matrix reconstructed from its low-rank factors, i.e., $\fronormsq{\Lv\Rv - \Av}$. 
In addition to this, we enforce parity between the number of bits used by all quantization schemes, so that the total space required (in bits) for storing the approximated matrix is \textit{identical} across LPLR, LPLR-SVD, direct-SVD quant., and na\"ive quant.

\textbf{Notation.} In all our experiments, we denote the bit budget for the left low rank factor $\Lv$ as $\Bm$, for the right low rank factor $\Rv$ as $\Bm'$, and the corresponding bit budget for naive quantization as $\Bm_{\rm nq}$. For simplicity, we maintain equal bit budgets $\Bm = \Bm'$ for both quantizers $\mathrm{Q}$ and $\mathrm{Q}'$.
Wherever necessary, we also abbreviate direct-SVD quant. as DSVD, LPLR-SVD as LSVD., and na\"ive quant. as NQ.

The main algorithm is implemented in Pytorch (\citet{paszke2017automatic}), and utilizes Hugging Face \cite{wolf2019huggingface} implementations of all datasets and large language models.
All experiments were performed on a single GPU NVIDIA TITAN RTX. 
Further simulations and experimental details can be found in App. \ref{app:additional-numerical-simulations}.
Our code is available at \href{https://github.com/pilancilab/matrix-compressor}{\texttt{https://github.com/pilancilab/matrix-compressor}}.

\subsection{Image Compression}

Image compression is a prototypical application of low rank matrix compression, as images are known to be significantly rank deficient in many practical scenarios (\citet{zhou2014low,zhang2013learning}).
In this task, we apply LPLR on $1000 \times 1000$ dimensional Shepp Logan phantom images from \citet{gach20082d}.
These are a set of synthetic 2D images designed to simulate the typical characteristics and structures found in computed tomography (CT) scans.
They consist of geometric shapes, including circles and ellipses, representing different tissues or organs within the scanned object. 

The main results are summarized in Tab. \ref{tab:shepp}. To ensure a fair comparison, we adjust the rank of the matrix so that bit budgets are identical between na\"ive quant. and LPLR. This allows us to preserve the original datatype of the image (consequently a large dynamic range $\Rm$), while substantially reducing the pixels used to represent the image, as low as 1 bit per pixel (on average).
Specifically, in Figure \ref{fig:shepp}, we can observe the least visual distortion in the case of LPLR, which preserves \textit{critical} semantic features of the images, such as the small ellipses.
It is clear that LPLR outperforms both techniques at lower naive quantization bit budgets. 
We attribute the better visual and quantitative performance to the higher dynamic range available to LPLR as well as structure preserved in the low rank decomposition.

\begin{table}[!htbp]
\centering
\caption{\small Comparison of {\bf LPLR} and {\bf LPLR-SVD (LSVD)} Frobenius norm errors with baselines, for different input LPLR bit budgets. Each triplet $({\rm B}, {\rm B'}, {\rm B_{nq}})$ of configurations has an {\bf identical compression ratio}. Here, $\rm B = B'$. The second column specifies the sketch size $m$ for LPLR, and target rank $k$ for DSVD or LSVD. We provide results for input bit budgets at a finer granularity to identify regimes where na\"ive quant. is outperformed.}
\label{tab:shepp}
\resizebox{\textwidth}{!}{%
\begin{tabular}{@{}cccccccllccccccc@{}}
\cmidrule(r){1-7} \cmidrule(l){10-16}
\multicolumn{1}{c}{$\Bm$} & \multicolumn{1}{c}{\textbf{\begin{tabular}[c]{@{}c@{}}Target Rank $(k)$ /\\ Sketch Size $(m)$\end{tabular}}} & \multicolumn{1}{c}{$\Bm_{\rm nq}$} & \multicolumn{1}{c}{\textbf{LPLR}} & \multicolumn{1}{c}{\textbf{DSVD}} & \multicolumn{1}{c}{\textbf{LSVD}} & \multicolumn{1}{c}{\textbf{NQ}} &  &  & \multicolumn{1}{c}{\hspace{1mm}$\Bm$} & \multicolumn{1}{c}{\textbf{\begin{tabular}[c]{@{}c@{}}Target Rank $(k)$ /\\ Sketch Size $(m)$\end{tabular}}} & \multicolumn{1}{c}{$\rm B_{nq}$} & \multicolumn{1}{c}{\textbf{LPLR}} & \multicolumn{1}{c}{\textbf{DSVD}} & \multicolumn{1}{c}{\textbf{LSVD}} & \multicolumn{1}{c}{\textbf{NQ}} \\ \cmidrule(r){1-7} \cmidrule(l){10-16} 
\hspace{2mm}32 & 15 & 1 & 0.610 & 0.553 & {\bf 0.506} & 0.532 &  &  & \hspace{1mm}32 & 31 & 2 & 0.447 & 0.523 & 0.392 & {\bf 0.312} \\ \cmidrule(r){1-7} \cmidrule(l){10-16} 
\hspace{2mm}28 & 17 & 1 & 0.557 & 0.546 & {\bf 0.490} & 0.532 &  &  & \hspace{1mm}28 & 35 & 2 & 0.434 & 0.521 & 0.380 & {\bf 0.312} \\ \cmidrule(r){1-7} \cmidrule(l){10-16} 
\hspace{2mm}24 & 20 & 1 & 0.540 & 0.537 & {\bf 0.454} & 0.532 &  &  & \hspace{1mm}24 & 41 & 2 & 0.401 & 0.517 & 0.358 & {\bf 0.312} \\ \cmidrule(r){1-7} \cmidrule(l){10-16} 
\hspace{2mm}20 & 25 & 1 & 0.485 & 0.529 & {\bf 0.426} & 0.532 &  &  & \hspace{1mm}20 & 50 & 2 & 0.371 & 0.513 & 0.331 & {\bf 0.312} \\ \cmidrule(r){1-7} \cmidrule(l){10-16} 
\hspace{2mm}16 & 31 & 1 & 0.447 & 0.523 & {\bf 0.391} & 0.532 &  &  & \hspace{1mm}16 & 62 & 2 & 0.341 & 0.509 & {\bf 0.308} & 0.312 \\ \cmidrule(r){1-7} \cmidrule(l){10-16} 
\hspace{2mm}12 & 41 & 1 & 0.402 & 0.518 & {\bf 0.360} & 0.532 &  &  & \hspace{1mm}12 & 83 & 2 & 0.310 & 0.506 & {\bf 0.286} & 0.312 \\ \cmidrule(r){1-7} \cmidrule(l){10-16} 
\hspace{2mm}8 & 62 & 1 & 0.340 & 0.508 & {\bf 0.326} & 0.532 &  &  & \hspace{1mm}8 & 125 & 2 & 0.267 & 0.499 & {\bf 0.284} & 0.312 \\ \cmidrule(r){1-7} \cmidrule(l){10-16}  
\end{tabular}%
}
\end{table}
 
\subsection{Embeddings extracted from pre-trained models}

The efficacy of pre-trained embeddings is well established in vision (\citet{parisi2022unsurprising,li2020oscar}), text (\citet{qi2018and,rezaeinia2019sentiment}) for rapid feature computation as an input to a variety of downstream tasks. Embeddings also play a crucial role in a number of software applications, including but not limited to, open source vector search libraries (\citet{Liu_LlamaIndex_2022, marqoaim64:online}), semantic search engines (\citet{Enterpri49:online}), vector databases (\citet{VectorDa22:online}).
Since most applications rely on proximity in ``embedded space'', it is essential that common operations on embeddings be computationally efficient. 
Specifically, one would like to optimize nearest neighbor (NN) searches which solve the optimization problem $\argminimize_i{\norm{\xv_i - \yv}_2^2}$ (reducible to $\argmaximize_i \Xv \yv$ where $\Xv$ is the matrix with training vectors $\{\xv_i\}$ as its rows, and $\yv$ is the query vector). 
The time complexity of NN search scales linearly with dimensions of $\Xv \in \Rm^{n \times d}$ and number of neighbors ($k$). 
By embedding the data matrix in a dimension $m \ll d$, we directly speedup the run-time and reduce storage costs. 
Moreover, as datasets grow exponentially in size (especially document databases) and transfer learning becomes the dominant modality of training new models, embedding compression becomes a necessity for storing data without a corresponding exponential increase in hardware requirements.

\subsubsection{Embedding Classification}
\label{sec:embclass}

In this experiment, we evaluate several embeddings of standard datasets, namely CIFAR-10, CIFAR-100, IMDB and Emotion datasets. CIFAR-10 consists of 60,000 color images divided into 10 classes, with each class containing 6,000 images. 
The dataset is split into 50,000 training images and 10,000 test images, with a resolution of 32x32 pixels. 
CIFAR-100 increases the number of classes to 100 categories for an identical training and test size as CIFAR-10.
The IMDB dataset consists of 25,000 train and test sentences containing annotated binary sentiment labels for movie reviews, plot summaries and other rating information. The Emotion dataset is a sentiment analysis dataset, containing 16,000 train and 2000 test sentences, each exemplifying a singular emotion, which represents the sentiment label for that sentence.

For CIFAR-10 and CIFAR-100, we embed the entire dataset using MobileNet v3 (\citet{howard2019searching}) pretrained on ImageNet (\citet{deng2009imagenet}) producing an embedding matrix of dimension $60000 \times 1024$, which we compress using LPLR and compare with the baselines in \S \ref{sec:over}. 
To evaluate the goodness of embeddings, we build a $3$-NN, a KNN Classifier using $K = 3$ nearest neighbors under Euclidean distance). 
We report the performance of the model using standard classification metrics -- classification accuracy and weight averaged F1 score.
We utilize a uniform bit budget $\Bm = \Bm' = 8$ bits for the quantizers $\Qm, \Qm'$  across all cases. Tabs. \ref{tab:cifar10} and \ref{tab:cifar100} present our results under this setup. 
For each case we benchmark absolute performance using a $3$-NN classifier on the training set. 

Similarly, we embed text sentences from IMDB and Emotion databases using BeRT (\citet{devlin2018bert}) into $512$ dimensional vectors, and construct a $3$-NN classifier using Euclidean distance to perform binary and multi-class classification on the respective embeddings, and report classification metrics in Tabs. \ref{tab:imdb} and \ref{tab:emotion}.
We see that LPLR outperforms direct-SVD quant. and na\"ive quant at lower bit budgets, and has performance parity as we increase $\Bm_{\rm nq}$ to $4$ bits. 
We find that we match (and even exceed) the unquantized benchmark at single bit precision, which we attribute to the dominating low rank factorization, and its regularizing effect on data under extreme rank constraints. 
It is important to note that performance parity with direct-SVD quant. is also a successful outcome, since LPLR provides runtime improvements over taking an SVD to compress the data. 

\begin{table}[htbp]
\centering
\caption[CIFAR 10 embeddings]{\small {\bf CIFAR10 embeddings generated by MobileNetV3 with an unquantized accuracy and F1 score $91\%$}:Results on LPLR and LPLR-SVD with $\Bm=\Bm'=8$ bits \protect\footnotemark}
\label{tab:cifar10}
\resizebox{\columnwidth}{!}{%
\begin{tabular}{@{}ccccccccccccc@{}}
\toprule
  & \multicolumn{4}{c}{Frobenius Norm Error} & \multicolumn{4}{c}{Accuracy (\%)}        & \multicolumn{4}{c}{Weighted F1 Score (\%)} \\ \midrule
$\Bm_{\rm nq}$ & \textbf{LPLR} & \textbf{LSVD} & DSVD & NQ   & \textbf{LPLR} & \textbf{LSVD} & DSVD        & NQ & \textbf{LPLR} & \textbf{LSVD} & DSVD        & NQ \\
1              & \textbf{1.05} & 1.08          & 1.09 & 7.17 & \textbf{92}   & \textbf{92}   & \textbf{92} & 11 & \textbf{92}   & \textbf{92}   & \textbf{92} & 4  \\
2 & \textbf{1.08}   & 1.1    & 1.1   & 2.29  & \textbf{92} & \textbf{92} & 91 & 30 & \textbf{92}  & \textbf{92}  & 91 & 23 \\
4 & \textbf{1.1}    & 1.11   & 1.11  & 1.15  & 91          & \textbf{92} & 91 & 91 & 91           & \textbf{92}  & 91 & 91 \\ \bottomrule
\end{tabular}%
}
\vspace{-4mm}
\end{table}
\protect\footnotetext{LPLR-SVD used in the simulations computes the left low-rank factor $\Lv$ as $\Qm(\Uv_k\Sv)$ where $S_{ij} \sim \Ncal(0, 1/m)$ instead of simply $\Uv_k$ in order to exploit the equalization property of Gaussian embeddings.}
\begin{table}[htbp]
\centering
\caption{\small {\bf CIFAR100 embeddings generated by MobileNetV3 with an unquantized accuracy and F1 score $76\%$}:Results on LPLR and LPLR-SVD with $\Bm=\Bm'=8$ bits}
\label{tab:cifar100}
\resizebox{\columnwidth}{!}{%
\begin{tabular}{@{}ccccccccccccc@{}}
\toprule
  & \multicolumn{4}{c}{Frobenius Norm Error} & \multicolumn{4}{c}{Accuracy (\%)}                 & \multicolumn{4}{c}{Weighted F1 Score (\%)}        \\ \midrule
$\Bm_{\rm nq}$ & \textbf{LPLR} & \textbf{LSVD} & DSVD & NQ   & \textbf{LPLR} & \textbf{LSVD} & DSVD        & NQ  & \textbf{LPLR} & \textbf{LSVD} & DSVD        & NQ  \\
1 & \textbf{1.04}   & 1.08   & 1.09  & 6.75  & 79          & \textbf{82} & \textbf{82} & 1  & 79          & \textbf{82} & \textbf{82} & 0  \\
2              & \textbf{1.08} & 1.1           & 1.12 & 2.18 & \textbf{80}   & \textbf{80}   & \textbf{80} & 1.7 & \textbf{80}   & \textbf{80}   & \textbf{80} & 1.3 \\
4 & \textbf{1.11}   & 1.12   & 1.14  & 1.17  & \textbf{79} & 78          & 77          & 75 & \textbf{79} & 78          & 78          & 75 \\ \bottomrule
\end{tabular}%
}
\vspace{-4mm}
\end{table}
\begin{table}[htbp]
\centering
\caption{\small \textbf{IMDB embeddings generated by BERT with an unquantized accuracy and F1 score $75\%$ and $74\%$ respectively}: Results on LPLR and LPLR-SVD with $\Bm=\Bm'=8$ bits}
\label{tab:imdb}
\resizebox{\columnwidth}{!}{%
\begin{tabular}{@{}ccccccccccccc@{}}
\toprule
  & \multicolumn{4}{c}{Frobenius Norm Error}        & \multicolumn{4}{c}{Accuracy (\%)}        & \multicolumn{4}{c}{Weighted F1 Score (\%)} \\ \midrule
$\Bm_{\rm nq}$ &
  \textbf{LPLR} &
  \textbf{LSVD} &
  DSVD &
  NQ &
  \textbf{LPLR} &
  \textbf{LSVD} &
  DSVD &
  NQ &
  \textbf{LPLR} &
  \textbf{LSVD} &
  DSVD &
  NQ \\
1 & 0.313 & \textbf{0.241} & 0.229          & 6.63  & 73          & 74 & \textbf{75} & 50 & 74    & 74    & \textbf{75}    & 33   \\
2 &
  0.235 &
  0.178 &
  \textbf{0.161} &
  1.016 &
  \textbf{74} &
  \textbf{74} &
  \textbf{74} &
  50 &
  \textbf{74} &
  \textbf{74} &
  \textbf{74} &
  50 \\
4 & 0.148 & 0.122          & \textbf{0.098} & 0.417 & \textbf{75} & 74 & \textbf{75} & 73 & 74    & 74    & \textbf{75}    & 73   \\ \bottomrule
\end{tabular}%
}
\vspace{-4mm}
\end{table}

\begin{table}[!htbp]
\centering
\caption{\small \textbf{Emotion embeddings generated by BERT with an unquantized accuracy and F1 score $43\%$ and $40\%$ respectively}: Results on LPLR and LPLR-SVD with $\Bm=\Bm'=8$ bits}
\label{tab:emotion}
\resizebox{\columnwidth}{!}{%
\begin{tabular}{@{}ccccccccccccc@{}}
\toprule
  & \multicolumn{4}{c}{Frobenius Norm Error}        & \multicolumn{4}{c}{Accuracy (\%)}        & \multicolumn{4}{c}{Weighted F1 Score (\%)} \\ \midrule
$\Bm_{\rm nq}$ &
  \textbf{LPLR} &
  \textbf{LSVD} &
  DSVD &
  NQ &
  \textbf{LPLR} &
  \textbf{LSVD} &
  DSVD &
  NQ &
  \textbf{LPLR} &
  \textbf{LSVD} &
  DSVD &
  NQ \\
1 & 0.383 & \textbf{0.296} & 0.286          & 5.109 & 41          & 43 & \textbf{43} & 29 & 38    & 40    & \textbf{40}    & 13   \\
2 &
  0.291 &
  0.215 &
  \textbf{0.202} &
  1.561 &
  \textbf{42} &
  \textbf{43} &
  \textbf{43} &
  33 &
  \textbf{39} &
  \textbf{40} &
  \textbf{40} &
  30 \\
4 & 0.187 & 0.137          & \textbf{0.121} & 0.367 & \textbf{42} & 43 & \textbf{43} & 41 & 39    & 40    & \textbf{40}    & 38   \\ \bottomrule
\end{tabular}%
}
\end{table}

\subsubsection{Compressing Weight Matrices of a Large Language Model}
\label{sec:compressing-LLM-weights}

\begin{figure}[t]
  \centering
  \begin{subfigure}[b]{0.48\textwidth}
    \includegraphics[width=\linewidth]{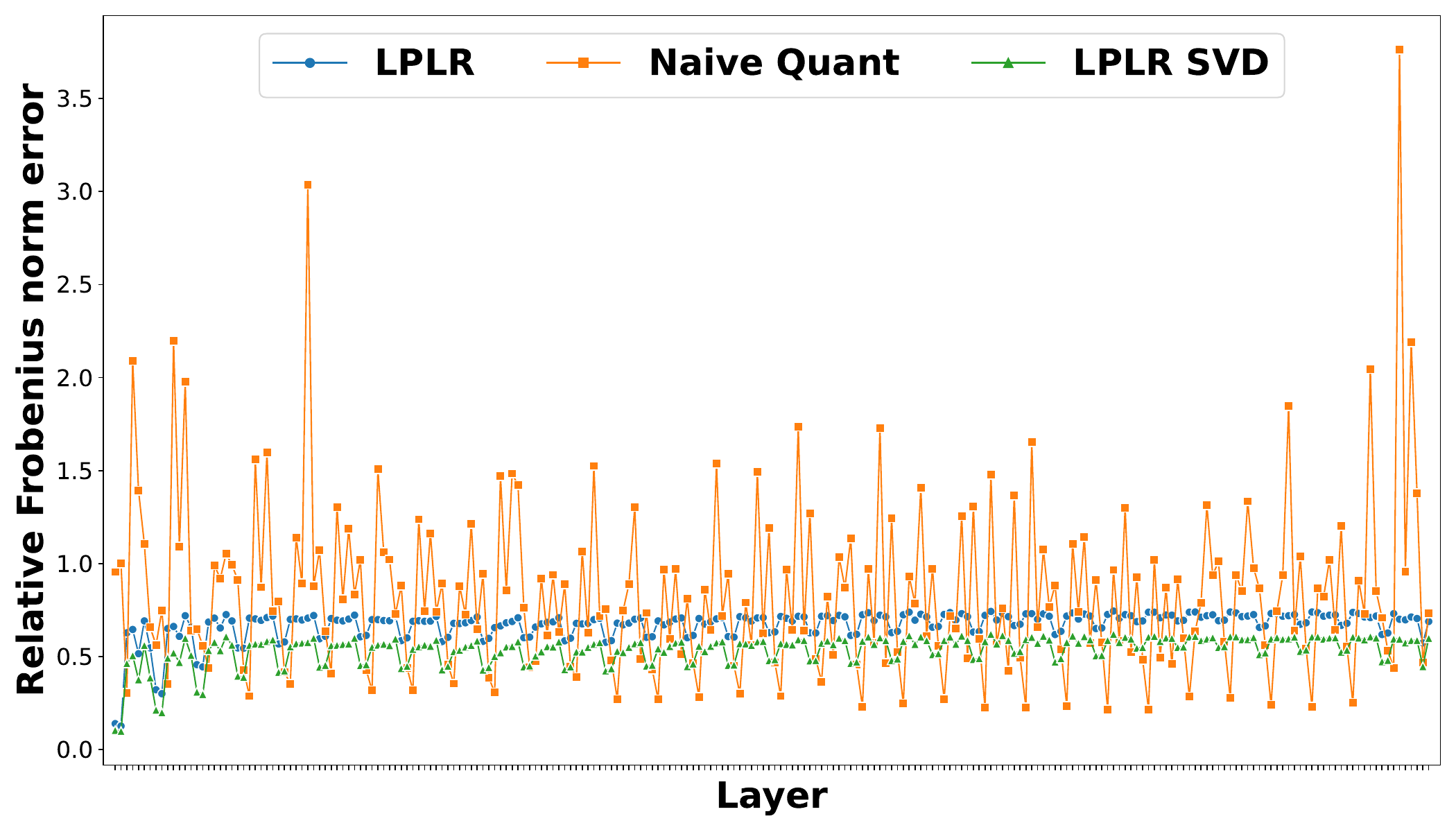}
    \caption{$\Bm = \Bm' = 8$ bits, $\Bm_{\rm nq} = 4$ bits}
    \label{fig:llama-8-main}
  \end{subfigure}%
  \hfill
  \begin{subfigure}[b]{0.48\textwidth}
    \includegraphics[width=\linewidth]{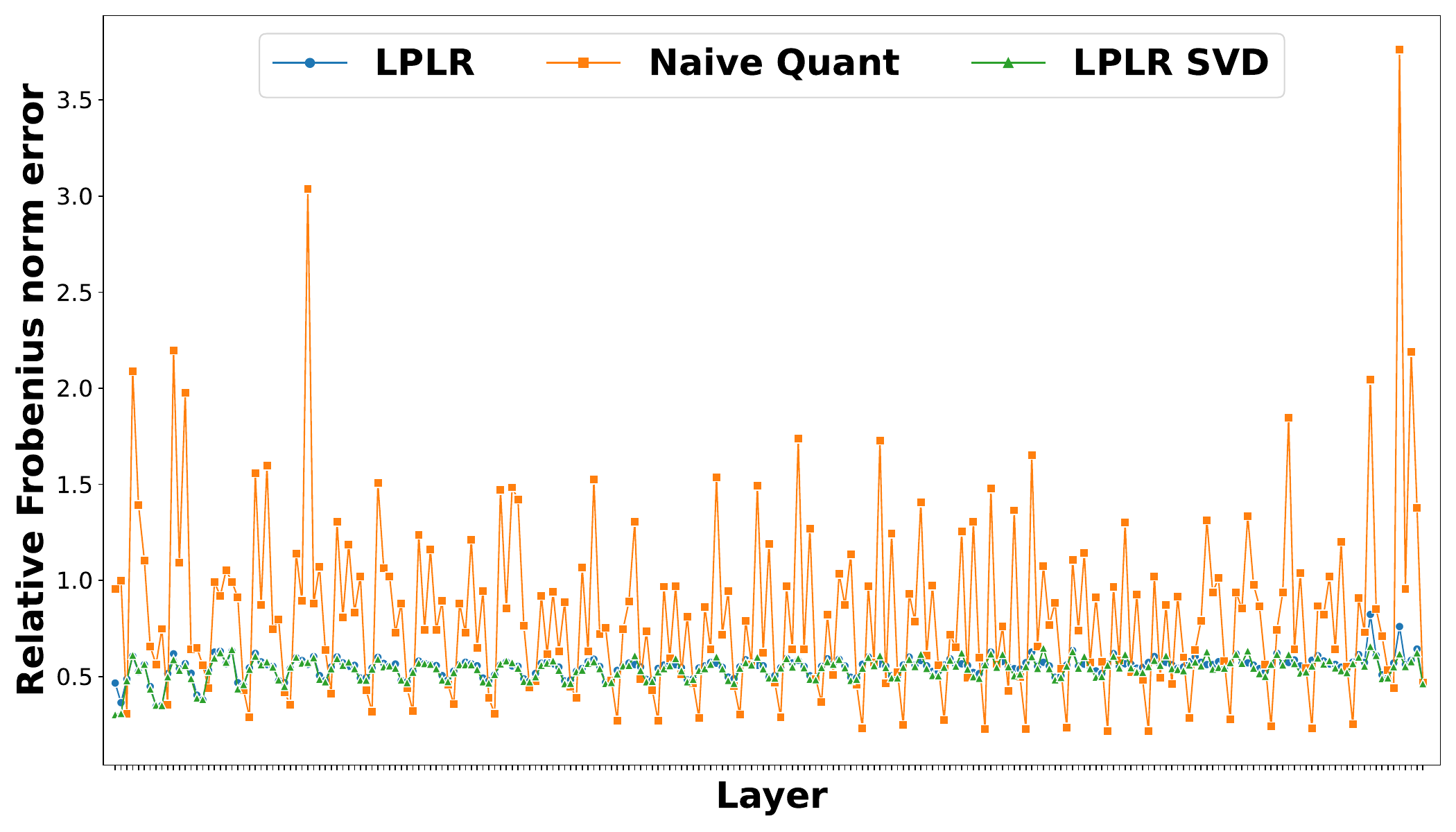}
    \caption{$\Bm = \Bm' = \Bm_{\rm nq} = 4$ bits}
    \label{fig:llama-4-main}
  \end{subfigure}%
  \label{fig:llama-quantized}
  \vspace{-2mm}
  \caption{\small Comparison of LPLR and LPLR SVD on LlaMa weights, ordered by the original sequence of layers}
  \vspace{-4mm}
\end{figure}

\begin{table}[ht]
\caption{\small Average relative Frobenius norm error on LlaMa weight matrices}
\begin{center}
\small
\begin{minipage}{0.45\textwidth}
\centering
\begin{tabular}{@{}cccc@{}}
\toprule
\multicolumn{4}{c}{$\Bm = \Bm' = 8$ bits, $\Bm_{\rm nq} = 4$ bits} \\
\midrule
Metric  & \textbf{LPLR}  & \textbf{LPLR-SVD} & Naive Quant. \\ \midrule
Mean    & 0.672          & \textbf{0.537}    & 0.836        \\
Std Dev & 0.080 & \textbf{0.079}             & 0.470        \\ \bottomrule
\end{tabular}
\end{minipage}%
\hfill
\begin{minipage}{0.45\textwidth}
\centering
\begin{tabular}{@{}cccc@{}}
\toprule
\multicolumn{4}{c}{$\Bm = \Bm' = \Bm_{\rm nq} = 4$ bits} \\
\midrule
Metric  & \textbf{LPLR}  & \textbf{LPLR-SVD} & Naive Quant. \\ \midrule
Mean    & 0.548          & \textbf{0.540}    & 0.836        \\
Std Dev & \textbf{0.053} & 0.055             & 0.470        \\ \bottomrule
\end{tabular}
\end{minipage}
\end{center}
\vspace{-7mm}
\end{table}

In this section, we present results on a major application of matrix compression -- compressing the weight matrices of deep neural networks.
We choose LlaMa by \citet{touvron2023llama}, a popular foundation Large Language Model (LLM) as the network of our choice.
LLMs are a natural candidate for matrix compression, due to their massive stacked transformer layers, rendering them difficult to deploy on several GPUs, let alone a single GPU.
Many methods have emerged to quantize and compress these models in order to make them amenable to single GPU deployment and inference, including naive quantization with outlier exclusion (\citet{dettmers2022llm}), second order methods (\citet{frantar2022gptq}), low rank parameter reduction (\citet{hu2022lora}), amongst others.

We apply LPLR to the $2$-dimensional weight tensors, i.e., matrices in LlaMa, leaving any other tensor, which does not lend itself to a low rank decomposition, unquantized.
Figs. \ref{fig:llama-8-main} and \ref{fig:llama-4-main} (better resolution in App. \ref{app:additional-numerical-simulations}) showcase our results on applying LPLR and LPLR-SVD with bit budgets of $8$ bits and $4$ bits respectively, using relative Frobenius norm error as the metric. 
While it is clear that LPLR and LPLR-SVD perform significantly better across all layers (on average), there are outliers where na\"ive quant. is the better choice. 
We can a observe periodic structure in the error profile of na\"ive quant., implying that the low rank structure is a function of the index of attention layer in transformer blocks.
It is important to note that a low Frobenius norm error is not a direct indicator of performance for other task specific metrics. 
It is possible to construct a holistic compression strategy using error profiles similar to Figures \ref{fig:llama-8},\ref{fig:llama-4} to adopt a per-layer quantization strategy, minimizing both task specific metrics as well as relative Frobenius norm error.
We discuss this further in Appendix \ref{app:limitations-and-further-discussions}.

\vspace{-1mm}
\section{Conclusions}
\label{sec:conclusions}
\vspace{-1mm}

In this work, we have considered the problem of obtaining a low-precision and low-rank factorization of a matrix.
Such a factorization of a matrix into a product of tall and wide matrices has several advantages, including compression of the original matrix.
We proposed a fast randomized algorithm to obtain this factorization which requires $\Om(nmd)$ computations -- considerably faster than alternative methods.
Our algorithm employs a Gaussian sketch to estimate the range space of matrices that are approximately low-rank.
By utilizing the properties of subspace approximation and equalization in Gaussian embeddings, we established an upper bound on the approximation error attained by our algorithm, and show that it can be significantly smaller than its counterparts.
Finally, we empirically evaluate our method on several vision and text datasets, where we show significant task performance at highly compressed bit budgets as low as a \textit{single} bit. This provides a novel pragmatic approach to work with large datasets and models in real world settings, making them more accessible to researchers and deployment on regular consumer hardware.

\begin{ack}
This work was supported in part by the Air Force Office of Scientific Research (AFOSR) under Award \#002484665; in part by National Science Foundation (NSF) CAREER Award under Grant CCF-2236829, Grant DMS-2134248 and Grant ECCS-2037304; in part by the U.S. Army Research Office Early Career Award under Grant W911NF-21-1-0242; in part by the Stanford Precourt Institute; and in part by the ACCESS AI Chip Center for Emerging Smart Systems through InnoHK, Hong Kong, SAR.
The authors would also like to thank the anonymous reviewers whose comments and suggestions helped improve the presentation of this work.
\end{ack}
\clearpage
\bibliography{refs}

\clearpage
\appendix

\tableofcontents

\clearpage

\section{Notations}
\label{app:notations}

We first gather some common notations in linear algebra and probability theory that have been used throughout the paper.
Boldface upper and lowercase letters, $\Av$ and $\av$ denote matrices and vectors respectively.
$\Iv_d$ denotes the $d \times d$ identity matrix.
The subscript may be dropped if the dimension is clear from the context.
For any matrix $\Av$, its $i^{\rm th}$ row and $j^{\rm th}$ column are denoted by $\av^{(i)}$ and $\av_j$ respectively.
The singular values of $\Av$ are denoted by $\sigma_{\rm max}(\Av) = \sigma_1 \geq \sigma_2 \geq \ldots \geq \sigma_r = \sigma_{\rm min}(\Av)$, where $r = {\rm rank}(\Av)$.
Similarly, the eigenvalues are denoted as $\lambda_1(\Av), \ldots, \lambda_r(\Av)$.
The max-norm of $\Av$ is defined as $\maxnorm{A} = {\rm max}_{i,j}|A_{ij}|$, the spectral norm of $\Av$ is defined as $\specnorm{\Av} = \sup_{\norm{\xv} = 1}\norm{\Av\xv} = \sigma_{\rm max}(\Av)$, and the Frobenius norm is $\fronorm{\Av} = \Paren{\sum_{i,j}A_{ij}^2}^{1/2} = \Tr{\Av^{\top}\Av} = \Paren{\sum_{k \in [r]}\sigma_k^2}^{1/2}$.
For any vector $\xv$, $\norm{\xv} = \Paren{x_i^2}^{1/2}$ denotes the $\ell_2$-norm, and $\norm{\xv}_{\infty} = {\rm max}_{i}\; |x_i|$ denotes the $\ell_{\infty}$-norm.
We use the notations $\succcurlyeq$ and $\preccurlyeq$ for the positive semi-definite (PSD) cone ordering (or the {\it Loewner ordering}) of symmetric matrices, i.e., for any symmetric matrices $\Xv$ and $\Yv$, $\Xv \succcurlyeq \Yv \iff \Xv - \Yv$ is PSD.
$\Xv^{\dagger}$ denotes the Moore-Penrose pseudo-inverse of a matrix $\Xv$.
$\log (\cdot)$ denotes the natural logarithm, i.e., base $e$.
$\log_2(\cdot)$, i.e., with base $2$ is specified explicitly.
A normal distribution with mean $\mu$ and variance $\sigma^2$ is denoted as $\Ncal(0, \sigma^2)$, while a multivariate normal distribution in $\Real^d$ is denoted as $\Ncal(\muv, \Sigmav)$.
$\Ebb[\cdot]$ denotes expectation of a random variable.
The probability measure over which the expectation is taken is described in text.
We also use big-`oh' notation $\Om(\cdot)$ that hides constants for asymptotic expressions, while $\Otm(\cdot)$ also hides terms that depends logarithmically on dimension.

We now list some notations that are either less commonly known or used specifically in this paper, along with some remarks such as their occurrence in the paper.
Several of these notations have also been introduced in-context, but they are additionally collected here for easy reference.

\begin{table}[h]
  \caption{Notations used in this paper}
  \label{tab:notations-used-in-this-paper}
  \centering
  \small
  \begin{tabular}{p{2.5cm} p{4.5cm} p{5.5cm}}
    \toprule
    \hspace{0.6cm} Notation & \hspace{1.3cm} Description & \hspace{2cm} Remarks \\
    \midrule
    \centering
    $\Av$, $\Lv$, $\Rv$ & Input matrix, left and right LPLR factors & $\Lv\Rv$ is an approximation of $\Av$. Entries of $\Lv$ and $\Rv$ are represented in low-precision formats \\
    \centering
    $n,d$ & Dimensions of input matrix & $\Av \in \Real^{n \times d}$. \\
    \centering
    $k$ & Target rank & Often, $k \ll {\rm min}\{n,d\}$, but not necessarily. \\
    \centering
    $\Av = \Uv\Sigmav\Vv^{\top}$ & Full SVD of the matrix $\Av \in \Real^{n \times d}$ & $\Uv \in \Real^{n \times n}$, $\Sigmav \in \Real^{n \times d}$, and $\Vv \in \Real^{d \times d}$. \\
    \centering
    $\Av_k = \Uv_k\Sigmav_k\Vv^{\top}_k$ & Best rank-$k$ approximation of $\Av$ & $\Uv \in \Real^{n \times k}$ and $\Vv \in \Real^{n \times k}$ consists of top-$k$ left and right singular vectors respectively. $\Sigmav \in \Real^{k \times k}$ consists of top-$k$ singular values.\\ 
    \centering
    $m$ & Sketch size & $m = k + p$, where $k$ is the target rank, and $p$ is the oversampling factor.\\
    \centering
    $\Sv \in \Real^{d \times m}$ & Sketching matrix & In this work, $S_{ij} \sim \Ncal\Paren{0, \frac{1}{m}}$.\\
    \centering
    $\Qm, \Qm'$ & Quantizers for the first and second low-rank factors with bit-budgets $\Bm$ and $\Bm'$, and dynamic ranges $\Rm_{\Qm}$ and $\Rm_{\Qm'}$ respectively. & --- \\
    \centering
    $\kappa(\Av)$ & Condition number of input matrix $\Av$ & $\kappa(\Av) = \sigma_1, \sigma_r$ \\
    \centering
    $\kappa(\Av_k)$ & Condition number of best rank-$k$ approximation $\Av_k$ & $\kappa(\Av_k) = \sigma_1 / \sigma_k$ \\
    \centering
    $\kappa$ & A quantity that depends on the spectrum of $\Av$ & $\kappa = {\rm min}\left\{\kappa(\Av), \frac{\kappa(\Av_k)}{\Paren{1 - \Om(1)\sigma_{k+1}/\sigma_k}}\right\}$, defined in Thm. \ref{thm:lplr-approximation-error-formal} \\
    \centering
    $\gamma$ & Aspect ratio of the sketch & $\gamma = \frac{d}{m}$, defined in Thm. \ref{thm:lplr-approximation-error-formal}.\\
    \centering
    $\delta$ & Target rank and sketch-size dependent quantity & $\delta = \frac{k}{m-k-1}$\\
    \centering
    $\Wcal(\Psiv, d)$ & Wishart distribution with covariance matrix $\Psiv$ and degree of freedom $d$ & Refer to \S \ref{subsubapp:inverse-wishart-distribution}.\\
    \bottomrule
  \end{tabular}
\end{table}

\section{Preliminaries}
\label{app:preliminaries}

\subsection{Linear algebra inequalities}
\label{subapp:linear-algebra-inequalities}

We state (with proofs) some standard inequalities from linear algebra that will be useful in proving our main results.

\begin{lemma}
    \label{lemma:fro-norm-spectral-norm-inequality}
    {\bf (Frobenius norm of matrix products)}
    For any matrices $\Av$ and $\Bv$, we have
    \begin{equation*}
        \fronorm{\Av\Bv} \leq \fronorm{\Av} \specnorm{\Bv}.
    \end{equation*}
\end{lemma}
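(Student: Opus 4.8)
The plan is to prove the submultiplicative inequality $\fronorm{\Av\Bv} \leq \fronorm{\Av}\specnorm{\Bv}$ by working column-by-column on $\Bv$. Write $\Bv = [\bv_1, \ldots, \bv_p]$ in terms of its columns. Then $\Av\Bv$ has columns $\Av\bv_1, \ldots, \Av\bv_p$, so that $\fronormsq{\Av\Bv} = \sum_{j} \norm{\Av\bv_j}^2$. The key observation is that $\norm{\Av\bv_j} \leq \specnorm{\Av}\norm{\bv_j}$ by definition of the spectral (operator) norm; however, to get the factorization with $\fronorm{\Av}$ rather than $\specnorm{\Av}$ out front, I would instead bound in the other order — apply the operator-norm inequality using $\specnorm{\Bv}$ on the rows of $\Av$, or more cleanly, transpose.

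Concretely, the cleanest route: use $\fronorm{\Av\Bv} = \fronorm{(\Av\Bv)^{\top}} = \fronorm{\Bv^{\top}\Av^{\top}}$, and write $\Av^{\top}$ in terms of its columns $\av^{(1)}, \ldots, \av^{(n)}$ (the rows of $\Av$). Then
\begin{equation*}
    \fronormsq{\Av\Bv} = \fronormsq{\Bv^{\top}\Av^{\top}} = \sum_{i=1}^{n} \norm{\Bv^{\top}\av^{(i)}}^2 \leq \sum_{i=1}^{n} \specnorm{\Bv^{\top}}^2 \norm{\av^{(i)}}^2 = \specnorm{\Bv}^2 \sum_{i=1}^{n}\norm{\av^{(i)}}^2 = \specnorm{\Bv}^2 \fronormsq{\Av},
\end{equation*}
where I used $\specnorm{\Bv^{\top}} = \specnorm{\Bv}$ and $\sum_i \norm{\av^{(i)}}^2 = \fronormsq{\Av}$. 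Taking square roots gives the claim. The only facts invoked are the definition of the operator norm (for any vector $\xv$, $\norm{\Mv\xv} \leq \specnorm{\Mv}\norm{\xv}$), the invariance of both norms under transpose, and the fact that the squared Frobenius norm is the sum of squared column (equivalently row) norms.

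There is no real obstacle here — this is a routine and standard inequality. The one point requiring a moment's care is getting the roles of the two norms on the correct factors (it is $\fronorm{\Av}\specnorm{\Bv}$, not $\specnorm{\Av}\fronorm{\Bv}$, though both hold by symmetry of the argument), which the transpose trick handles transparently. An equally valid alternative is to argue via the trace: $\fronormsq{\Av\Bv} = \Tr{\Bv^{\top}\Av^{\top}\Av\Bv} = \Tr{\Av^{\top}\Av\,\Bv\Bv^{\top}} \leq \specnorm{\Bv\Bv^{\top}}\Tr{\Av^{\top}\Av} = \specnorm{\Bv}^2\fronormsq{\Av}$, using that $\Tr{\Mv\Nv} \leq \specnorm{\Nv}\Tr{\Mv}$ for PSD $\Mv$ and symmetric $\Nv$, together with $\Tr{\Mv} = \fronorm{\Mv}$-type identities. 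I would present the column-wise version as the main proof since it needs the least machinery.
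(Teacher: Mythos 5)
Your proof is correct and rests on essentially the same idea as the paper's — decompose into columns and apply the operator-norm bound $\norm{\Mv\xv} \leq \specnorm{\Mv}\norm{\xv}$ termwise — except that you transpose first so the bound lands as $\fronorm{\Av}\specnorm{\Bv}$, exactly as stated. The paper's own proof, taken literally, establishes $\fronormsq{\Av\Bv} \leq \specnorm{\Av}^2\fronormsq{\Bv}$, i.e., the symmetric counterpart with the roles of the two norms swapped (both inequalities are true, and it is in fact that counterpart form that gets invoked later in \eqref{eq:expectation-trace-gamma}), so your transpose step makes your argument the more faithful match to the statement as written.
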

\begin{proof} 
Note that:
\begin{align}
    \norm{\Av\Bv}_{\rm F}^2 = \sum_{j} \norm{(\Av\Bv)_j}_2^2 \leq \sum_j \norm{\Av\Bv_j}_2^2 \stackrel{\rm (i)}{\leq} \norm{\Av}_2^2 \sum_j \norm{\Bv_j}_2^2 = \norm{\Av}_2^2\norm{\Bv}_{\rm F}^2.
\end{align}
Here, $\rm (i)$ follows from the definition of spectral norm of a matrix, and this completes the proof.
\end{proof}

\begin{lemma}
    \label{lemma:loewner_order-matrix-product}
    {\bf (Loewner ordering for matrix products)}
    For any matrix $\Av$ and $\Bv$, we have
    \begin{equation*}
        \sigma_{\rm min}^2(\Av)\;\Bv^{\top}\Bv \preccurlyeq \Bv^{\top}\Av^{\top}\Av\Bv \preccurlyeq \sigma_{\rm max}^2(\Av)\;\Bv^{\top}\Bv.
    \end{equation*}
\end{lemma}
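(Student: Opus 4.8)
The plan is to reduce the statement to the analogous two-sided bound for vectors and then invoke the definition of the Loewner order. Fix an arbitrary vector $\xv$ of conforming dimension. Then $\xv^{\top}\Bv^{\top}\Av^{\top}\Av\Bv\xv = \norm{\Av(\Bv\xv)}_2^2$, so writing $\yv = \Bv\xv$ the quantity of interest is exactly $\norm{\Av\yv}_2^2$. The standard Rayleigh-quotient / singular-value inequality $\sigma_{\rm min}^2(\Av)\norm{\yv}_2^2 \leq \norm{\Av\yv}_2^2 \leq \sigma_{\rm max}^2(\Av)\norm{\yv}_2^2$ then gives
\begin{equation*}
    \sigma_{\rm min}^2(\Av)\,\xv^{\top}\Bv^{\top}\Bv\xv \;\leq\; \xv^{\top}\Bv^{\top}\Av^{\top}\Av\Bv\xv \;\leq\; \sigma_{\rm max}^2(\Av)\,\xv^{\top}\Bv^{\top}\Bv\xv,
\end{equation*}
since $\norm{\yv}_2^2 = \xv^{\top}\Bv^{\top}\Bv\xv$.

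To justify the singular-value sandwich cleanly, I would use the SVD $\Av = \Uv\Sigmav\Vv^{\top}$: then $\norm{\Av\yv}_2^2 = \norm{\Sigmav\Vv^{\top}\yv}_2^2 = \sum_i \sigma_i^2 (\Vv^{\top}\yv)_i^2$, and since each $\sigma_i^2$ lies between $\sigma_{\rm min}^2(\Av)$ and $\sigma_{\rm max}^2(\Av)$ while $\sum_i (\Vv^{\top}\yv)_i^2 = \norm{\Vv^{\top}\yv}_2^2 = \norm{\yv}_2^2$ (here one should note $\Vv^{\top}\yv$ may be a truncated vector if $\Av$ is rectangular, but the bound only improves in that case — more carefully, extend $\Sigmav$ with zero singular values so that $\sum_i \sigma_i^2(\cdot)_i^2 \geq \sigma_{\rm min}^2(\Av)\sum_{i \leq r}(\cdot)_i^2$, which still equals the claimed lower bound after accounting for the null-space component; the upper bound is immediate). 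Since $\xv$ was arbitrary, the scalar inequalities above hold for every $\xv$, which is precisely the statement that $\Bv^{\top}\Av^{\top}\Av\Bv - \sigma_{\rm min}^2(\Av)\Bv^{\top}\Bv \succcurlyeq \mathbf{0}$ and $\sigma_{\rm max}^2(\Av)\Bv^{\top}\Bv - \Bv^{\top}\Av^{\top}\Av\Bv \succcurlyeq \mathbf{0}$, i.e., the desired Loewner ordering.

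The only genuinely delicate point is the lower bound when $\Av$ is rectangular or rank-deficient: one must make sure that $\sigma_{\rm min}(\Av)$ refers to the smallest \emph{nonzero} singular value (consistent with the notation section, where $\sigma_{\rm min}(\Av) = \sigma_r$ with $r = {\rm rank}(\Av)$), and that the component of $\yv = \Bv\xv$ lying in the null space of $\Av$ does not break the inequality $\sigma_{\rm min}^2(\Av)\norm{\yv}_2^2 \leq \norm{\Av\yv}_2^2$. If $\Av$ has a nontrivial null space this last inequality is in fact \emph{false} in general; so either the lemma tacitly assumes $\Av$ has full column rank, or — more likely given how it is used downstream with $\Av\Sv$ or $\Qm(\Av\Sv)$ which are tall with full column rank almost surely — the intended reading is that $\sigma_{\rm min}(\Av)$ is the smallest singular value in the sense of $\sigma_d$ for an $n\times d$ matrix with $n \geq d$. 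I would state the proof under the natural hypothesis that $\sigma_{\rm min}(\Av)$ is the $d$-th singular value (so the null space is trivial), which is exactly the regime in which the lemma is applied, and flag that otherwise one replaces $\sigma_{\rm min}(\Av)$ by $0$ in the lower bound. Modulo this bookkeeping, the argument is a one-line consequence of the SVD and the definition of $\succcurlyeq$.
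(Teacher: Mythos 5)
Your proof is correct and is essentially the paper's own argument: fix $\xv$, write $\xv^{\top}\Bv^{\top}\Av^{\top}\Av\Bv\xv = \norm{\Av\Bv\xv}_2^2$, and sandwich this between $\sigma_{\rm min}^2(\Av)\norm{\Bv\xv}_2^2$ and $\sigma_{\rm max}^2(\Av)\norm{\Bv\xv}_2^2$. Your caveat about the lower bound when $\Av$ is rank-deficient (so that $\sigma_{\rm min}(\Av)$ means the smallest \emph{nonzero} singular value and $\Bv\xv$ may have a null-space component) is a legitimate observation the paper does not address, but it is harmless in every downstream application, where the relevant matrices have trivial null space.
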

\begin{proof}
    For any vector $\xv \neq \mathbf{0}$, we have,
    \begin{align}
        \xv^{\top}\Bv^{\top}\Av^{\top}\Av\Bv\xv = \norm{\Av\Bv\xv}_2^2 \geq \sigma_{\rm min}^2(\Av)\lVert\Bv\xv\rVert_2^2 = \xv^{\top}\Paren{\sigma_{\rm min}^2(\Av)\;\Bv^{\top}\Bv}\xv.
    \end{align}
The other direction holds similarly.
This completes the proof.
\end{proof}

\begin{lemma}
    \label{lem:max-norm-spectral-norm-inequality}
    {\bf (Max-norm spectral-norm inequality)}
    For any matrix $\Av \in \Real^{n \times d}$, $\maxnorm{\Av} \leq \specnorm{\Av}$.
\end{lemma}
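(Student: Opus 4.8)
The statement to prove is that $\maxnorm{\Av} \leq \specnorm{\Av}$ for any matrix $\Av \in \Real^{n \times d}$, i.e., that the largest-magnitude entry of $\Av$ is bounded by its spectral norm. The plan is to realize an arbitrary entry $A_{ij}$ as a bilinear form evaluated at standard basis vectors, and then invoke the variational characterization of the spectral norm. Concretely, for any indices $i \in [n]$ and $j \in [d]$, we have $A_{ij} = \ev_i^{\top}\Av\ev_j$, where $\ev_i \in \Real^n$ and $\ev_j \in \Real^d$ are the corresponding standard basis (unit) vectors.

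The key step is then a Cauchy--Schwarz argument: $|A_{ij}| = |\ev_i^{\top}(\Av\ev_j)| \leq \norm{\ev_i}\,\norm{\Av\ev_j} = \norm{\Av\ev_j}$, using $\norm{\ev_i} = 1$. Next, since $\norm{\ev_j} = 1$, the definition $\specnorm{\Av} = \sup_{\norm{\xv}=1}\norm{\Av\xv}$ gives $\norm{\Av\ev_j} \leq \specnorm{\Av}$. Combining, $|A_{ij}| \leq \specnorm{\Av}$ for every $i,j$, and taking the maximum over all such pairs yields $\maxnorm{\Av} = \max_{i,j}|A_{ij}| \leq \specnorm{\Av}$, which is the claim.

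There is essentially no obstacle here — the result is elementary and follows in a few lines from the definitions already collected in the Notations section. The only thing to be slightly careful about is that the bound is entrywise and then the maximum is taken at the very end; one should not try to pass to the max-norm before applying Cauchy--Schwarz. An alternative route would be to observe that each row $\av^{(i)}$ satisfies $\norm{\av^{(i)}} \leq \specnorm{\Av}$ (since $\av^{(i)\top} = \ev_i^{\top}\Av$ and $\norm{\ev_i^{\top}\Av} \leq \norm{\ev_i}\specnorm{\Av}$), and then $|A_{ij}| \leq \norm{\av^{(i)}}$; this is the same computation packaged differently. I expect the author's proof to be one of these two one-liners.
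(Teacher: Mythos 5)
Your proposal is correct and matches the paper's proof essentially verbatim: the paper also writes $A_{ij} = \ev_i^{\top}\Av\ehv_j$, applies Cauchy--Schwarz to get $\maxnorm{\Av} \leq \max_j \norm{\Av\ehv_j}$, and bounds this by $\sup_{\norm{\xv}=1}\norm{\Av\xv} = \specnorm{\Av}$. No gaps.
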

\begin{proof}
    Let $\ev_i \in \Real^n$ and $\ehv_j \in \Real^d$ denote the $i^{\rm th}$ and $j^{\rm th}$ canonical basis vectors in $\Real^n$ and $\Real^d$ respectively.
    Then, using Cauchy-Schwarz inequality, we have,
    \begin{align}
        \maxnorm{\Av} = \max_{i,j}\abs{\ev_i^{\top}\Av\ehv_j} \leq \max_{i,j} \norm{\ev_i} \norm{\Av\ehv_j} = \max_j \norm{\Av\ehv_j} \leq \sup_{\norm{\xv} = 1}\norm{\Av\xv} = \specnorm{\Av},
    \end{align}
    completing the proof.
\end{proof}

\begin{lemma}
    \label{lem:spectral-norm-submultiplicative}
    {\bf (Submultiplicativity of spectral norm)}
    For any two matrices $\Av$ and $\Bv$, we have
    \begin{equation*}
        \specnorm{\Av\Bv} \leq \specnorm{\Av}\specnorm{\Bv}.
    \end{equation*}
    Moreover, an analogous ``reverse-submultiplicativity" result for the minimum singular value of a matrix product also holds true, i.e.,
    \begin{equation*}
    \sigma_{\rm min}(\Av\Bv) \geq \sigma_{\rm min}(\Av)\sigma_{\rm min}(\Bv).
    \end{equation*}
\end{lemma}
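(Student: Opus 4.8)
Both inequalities can be read off directly from the Loewner sandwich already established in Lemma~\ref{lemma:loewner_order-matrix-product}. Recall that for any matrix $\Mv$ one has $\specnorm{\Mv}^2 = \lambda_{\max}(\Mv^\top\Mv)$ and $\sigma_{\min}^2(\Mv) = \lambda_{\min}(\Mv^\top\Mv)$, and that $\lambda_{\max}(\cdot)$ and $\lambda_{\min}(\cdot)$ are, respectively, monotone nondecreasing and monotone nondecreasing with respect to the PSD ordering $\preccurlyeq$ (if $\Xv \preccurlyeq \Yv$ then $\lambda_{\max}(\Xv) \leq \lambda_{\max}(\Yv)$ and $\lambda_{\min}(\Xv) \leq \lambda_{\min}(\Yv)$, by the variational characterizations $\lambda_{\max}(\Xv) = \sup_{\norm{\xv}=1}\xv^\top\Xv\xv$ and $\lambda_{\min}(\Xv) = \inf_{\norm{\xv}=1}\xv^\top\Xv\xv$).

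For submultiplicativity of the spectral norm, I would apply Lemma~\ref{lemma:loewner_order-matrix-product} with the roles of $\Av$ and $\Bv$ as written there: $\Bv^\top\Av^\top\Av\Bv \preccurlyeq \sigma_{\max}^2(\Av)\,\Bv^\top\Bv$. Taking $\lambda_{\max}$ of both sides gives $\specnorm{\Av\Bv}^2 = \lambda_{\max}(\Bv^\top\Av^\top\Av\Bv) \leq \sigma_{\max}^2(\Av)\,\lambda_{\max}(\Bv^\top\Bv) = \specnorm{\Av}^2\specnorm{\Bv}^2$, and taking square roots yields the claim. (Alternatively, the one-line vector argument $\norm{\Av\Bv\xv} \leq \specnorm{\Av}\norm{\Bv\xv} \leq \specnorm{\Av}\specnorm{\Bv}\norm{\xv}$ followed by a supremum over unit $\xv$ works equally well; I would keep whichever is more consistent with the surrounding exposition.)

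For the reverse statement, I would take the left-hand inequality of Lemma~\ref{lemma:loewner_order-matrix-product}, namely $\sigma_{\min}^2(\Av)\,\Bv^\top\Bv \preccurlyeq \Bv^\top\Av^\top\Av\Bv$, and apply $\lambda_{\min}$ to both sides, obtaining $\sigma_{\min}^2(\Av)\,\lambda_{\min}(\Bv^\top\Bv) \leq \lambda_{\min}(\Bv^\top\Av^\top\Av\Bv) = \sigma_{\min}^2(\Av\Bv)$; identifying $\lambda_{\min}(\Bv^\top\Bv) = \sigma_{\min}^2(\Bv)$ and taking square roots gives $\sigma_{\min}(\Av\Bv) \geq \sigma_{\min}(\Av)\sigma_{\min}(\Bv)$.

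The only genuine subtlety — and the step I would be most careful about — is the identification $\lambda_{\min}(\Mv^\top\Mv) = \sigma_{\min}^2(\Mv)$ in the minimum-singular-value part: this holds when $\Mv$ has full column rank (so that $\Mv^\top\Mv$ is nonsingular and its smallest eigenvalue is the square of the smallest \emph{nonzero} singular value), but if $\Mv$ is wide / column-rank-deficient then $\lambda_{\min}(\Mv^\top\Mv) = 0$ and the natural reading of $\sigma_{\min}$ as $\inf_{\norm{\xv}=1}\norm{\Mv\xv}$ already forces both sides to vanish. In the uses of this lemma in the paper the relevant products are tall and full column rank, so I would state the reverse inequality under that standing interpretation and remark that it holds whenever $\sigma_{\min}(\cdot)$ is understood as $\inf_{\norm{\xv}=1}\norm{\cdot\,\xv}$, under which the vector-level chain $\norm{\Av\Bv\xv} \geq \sigma_{\min}(\Av)\norm{\Bv\xv} \geq \sigma_{\min}(\Av)\sigma_{\min}(\Bv)\norm{\xv}$ followed by an infimum over unit $\xv$ is in fact the most transparent derivation.
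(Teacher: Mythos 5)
Your proposal is correct and matches the paper's proof: the paper likewise derives submultiplicativity by applying Lemma~\ref{lemma:loewner_order-matrix-product} and taking $\lambda_{\max}$ of $\Bv^{\top}\Av^{\top}\Av\Bv$, and dispatches the reverse inequality for $\sigma_{\min}$ with the analogous argument. Your remark on the interpretation of $\sigma_{\min}$ for column-rank-deficient matrices is a reasonable clarification the paper leaves implicit, but it does not change the substance of the argument.
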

\begin{proof}
    Using Lemma \ref{lemma:loewner_order-matrix-product}, we have,
    \begin{align}
        \specnorm{\Av\Bv}^2 = \lambda_{\rm max}\Paren{\Bv^{\top}\Av^{\top}\Av\Bv}  \stackrel{\rm (i)}{\leq} \specnorm{\Av}^2\cdot\lambda_{\rm max}\Paren{\Bv^{\top}\Bv} = \specnorm{\Av}^2\specnorm{\Bv}^2,
    \end{align}
    completing the proof.
    The lower bound on $\sigma_{\rm min}(\Av\Bv)$ also follows a similar argument.
\end{proof}

\begin{lemma}
    \label{lem:lower-bound-minimum-singular-value-matrix-sum}
    {\bf (Lower bound on minimum singular value of matrix sums)}
    For any matrices $\Av$ and $\Bv$, we have
    \begin{equation*}
        \sigma_{\rm min}(\Av + \Bv) \geq \sigma_{\rm min}(\Av) - \specnorm{\Bv}.
    \end{equation*}
\end{lemma}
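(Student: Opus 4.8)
The plan is to reduce the claim about $\sigma_{\rm min}(\Av+\Bv)$ to a statement about $\ell_2$-norms of images of unit vectors, and then use the triangle inequality coordinatewise (i.e.\ vectorwise). First I would recall that for any matrix $\Mv$ acting between the relevant Euclidean spaces, $\sigma_{\rm min}(\Mv) = \inf_{\norm{\xv}=1}\norm{\Mv\xv}$; this is the characterization that makes additive perturbation bounds transparent, as opposed to the $\sup$ characterization used for $\specnorm{\cdot}$ in Lemma~\ref{lem:max-norm-spectral-norm-inequality}. (A small caveat: this identity is cleanest when the matrix has at least as many rows as columns, or more precisely when one is careful that $\sigma_{\rm min}$ refers to the smallest of the $\min\{n,d\}$ singular values; I would either state the lemma under the implicit convention used elsewhere in the paper that $n \ge d$, or note that the argument goes through for the smallest singular value in the appropriate sense.)

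Next, fix an arbitrary unit vector $\xv$ and write
\begin{equation*}
  \norm{(\Av+\Bv)\xv} \geq \norm{\Av\xv} - \norm{\Bv\xv} \geq \sigma_{\rm min}(\Av) - \specnorm{\Bv},
\end{equation*}
where the first inequality is the reverse triangle inequality for the $\ell_2$-norm applied to the vectors $\Av\xv$ and $-\Bv\xv$, and the second uses $\norm{\Av\xv}\geq\sigma_{\rm min}(\Av)\norm{\xv}=\sigma_{\rm min}(\Av)$ together with $\norm{\Bv\xv}\leq\specnorm{\Bv}\norm{\xv}=\specnorm{\Bv}$. Taking the infimum over all unit $\xv$ on the left-hand side yields
\begin{equation*}
  \sigma_{\rm min}(\Av+\Bv) = \inf_{\norm{\xv}=1}\norm{(\Av+\Bv)\xv} \geq \sigma_{\rm min}(\Av) - \specnorm{\Bv},
\end{equation*}
which is exactly the claim. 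This is essentially the standard Weyl-type perturbation inequality for singular values specialized to the smallest one.

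There is no real obstacle here; the only points requiring a moment of care are (i) making sure the variational characterization $\sigma_{\rm min}(\Mv)=\inf_{\norm{\xv}=1}\norm{\Mv\xv}$ is invoked in a dimensionally consistent way for $\Av$, $\Bv$, and $\Av+\Bv$ (they all act on the same domain, so this is fine), and (ii) noting that the bound is only informative when $\specnorm{\Bv}<\sigma_{\rm min}(\Av)$, though the inequality itself holds vacuously otherwise since $\sigma_{\rm min}(\Av+\Bv)\geq 0$. If one preferred an alternative route, the same conclusion follows from Weyl's inequality for the eigenvalues of $(\Av+\Bv)^\top(\Av+\Bv)$ or by writing $\Av = (\Av+\Bv) + (-\Bv)$ and applying the ``upper'' Weyl bound, but the direct $\inf$-over-unit-vectors argument above is the shortest and is the one I would present.
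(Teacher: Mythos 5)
Your proposal is correct and follows essentially the same route as the paper: both use the variational characterization $\sigma_{\rm min}(\Mv)=\inf_{\norm{\xv}=1}\norm{\Mv\xv}$, apply the reverse triangle inequality to $(\Av+\Bv)\xv$, and then pass to the infimum/supremum over unit vectors. The extra remarks on dimensional conventions and the vacuous case are fine but not needed.
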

\begin{proof}
    We have the following chain of inequalities,
    \begin{align*}
        \sigma_{\rm min}(\Av + \Bv) = \inf_{\norm{\xv} = 1}\norm{\Paren{\Av + \Bv}\xv} &\stackrel{\rm (i)}{\geq} \inf_{\norm{\xv} = 1}\Paren{\norm{\Av\xv} - \norm{\Bv\xv}} \\
        &\geq \inf_{\norm{\xv} = 1}\norm{\Av\xv} - \sup_{\norm{\xv} = 1}\norm{\Bv\xv} = \sigma_{\rm min}(\Av) - \specnorm{\Bv}.,
    \end{align*}
    where $\rm (i)$ is the reverse triangle inequality.
    This completes the proof.
\end{proof}

\begin{lemma}
    \label{lem:unitary-transform-spectrum-invariance}
    {\bf (Rotation invariance of singular values)}
    For a given matrix $\Av$, $\sigma_i(\Av) = \sigma_i(\Uv\Av)$ for all $i = 1, \ldots, {\rm rank}\Paren{\Av}$ for any unitary matrix $\Uv$.
\end{lemma}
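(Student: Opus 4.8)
The plan is to reduce the statement to the invariance of the Gram matrix $\Av^{\top}\Av$ under left multiplication by a unitary matrix. Recall that the singular values of any matrix $\Bv$ are exactly the nonnegative square roots of the eigenvalues of $\Bv^{\top}\Bv$, counted with multiplicity and arranged in decreasing order; so it suffices to show that $(\Uv\Av)^{\top}(\Uv\Av)$ and $\Av^{\top}\Av$ have the same eigenvalues.

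First I would expand $(\Uv\Av)^{\top}(\Uv\Av) = \Av^{\top}\Uv^{\top}\Uv\Av$, and then invoke unitarity of $\Uv$, i.e. $\Uv^{\top}\Uv = \Iv$, to conclude that this product equals $\Av^{\top}\Av$. Since the two Gram matrices are \emph{literally equal} (not merely similar), they trivially share the same spectrum, including multiplicities, and hence $\sigma_i(\Uv\Av) = \sqrt{\lambda_i(\Av^{\top}\Av)} = \sigma_i(\Av)$ for every $i = 1,\ldots,{\rm rank}(\Av)$. One should also note in passing that $\mathrm{rank}(\Uv\Av) = \mathrm{rank}(\Av)$, since multiplication by an invertible matrix preserves rank, so the index range in the statement is consistent.

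There is essentially no obstacle here; the only point worth a word of care is matching the orderings and multiplicities of the singular values on the two sides, which is immediate from the equality of the Gram matrices rather than mere similarity. If a more explicit argument is preferred, one can instead start from a singular value decomposition $\Av = \sum_i \sigma_i \uv_i\vv_i^{\top}$ and observe that $\Uv\Av = \sum_i \sigma_i (\Uv\uv_i)\vv_i^{\top}$; since $\{\Uv\uv_i\}$ remains an orthonormal family (as $\Uv$ is unitary) and $\{\vv_i\}$ is unchanged, this is a valid SVD of $\Uv\Av$ exhibiting the same singular values $\sigma_i$, which completes the proof.
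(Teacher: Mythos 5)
Your proof is correct and takes essentially the same approach as the paper: both reduce the claim to the identity $(\Uv\Av)^{\top}(\Uv\Av) = \Av^{\top}\Uv^{\top}\Uv\Av = \Av^{\top}\Av$ and read off the singular values as square roots of the eigenvalues of the Gram matrix. The alternative SVD-based argument you sketch at the end is also valid but is not needed.
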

\begin{proof}
    For $i = 1, \ldots, {\rm rank}(\Av)$, since $\Uv^{\top}\Uv = \Iv$, we have
    \begin{equation*}
        \sigma_i(\Uv\Av) = \sqrt{\lambda_i\Paren{\Av^{\top}\Uv^{\top}\Uv\Av}} = \sqrt{\lambda_i\Paren{\Av^{\top}\Av}} = \sigma_i(\Av),
    \end{equation*}
    completing the proof.
\end{proof}

\begin{lemma}
    \label{lem:rotation-invariance-fro-norm}
    {\bf (Rotation invariance of Frobenius norm)}
    For a given matrix $\Av$, $\fronorm{\Uv\Av} = \fronorm{\Av}$ for any unitary matrix $\Uv$.
\end{lemma}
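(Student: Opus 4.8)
The statement to prove is Lemma~\ref{lem:rotation-invariance-fro-norm}: for any matrix $\Av$ and unitary $\Uv$, $\fronorm{\Uv\Av} = \fronorm{\Av}$.

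\medskip

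The plan is to express the Frobenius norm via the trace and exploit the cyclic property of the trace together with the defining relation $\Uv^{\top}\Uv = \Iv$. First I would recall the identity $\fronormsq{\Bv} = \Tr{\Bv^{\top}\Bv}$, which is already noted in the notation section. Applying this with $\Bv = \Uv\Av$ gives $\fronormsq{\Uv\Av} = \Tr{(\Uv\Av)^{\top}(\Uv\Av)} = \Tr{\Av^{\top}\Uv^{\top}\Uv\Av}$. Since $\Uv$ is unitary, $\Uv^{\top}\Uv = \Iv$, so the middle factor collapses and we are left with $\Tr{\Av^{\top}\Av} = \fronormsq{\Av}$. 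Taking square roots (both quantities being nonnegative) yields the claim.

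\medskip

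An equivalent and perhaps even shorter route, which mirrors the proof of Lemma~\ref{lem:unitary-transform-spectrum-invariance} given just above, is to invoke that lemma directly: $\fronormsq{\Uv\Av} = \sum_i \sigma_i^2(\Uv\Av) = \sum_i \sigma_i^2(\Av) = \fronormsq{\Av}$, using the fact from the notation section that the Frobenius norm equals the $\ell_2$ norm of the singular value vector. I would likely present the trace-based argument as the primary one since it is self-contained and does not even require the singular-value characterization.

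\medskip

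There is essentially no obstacle here — the result is a one-line consequence of $\Uv^{\top}\Uv = \Iv$ and the trace characterization of the Frobenius norm. The only minor point of care is to make sure the nonnegativity of both sides justifies passing from the squared identity to the identity itself, which is immediate.
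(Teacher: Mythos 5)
Your primary argument — expanding $\fronormsq{\Uv\Av} = \Tr{\Av^{\top}\Uv^{\top}\Uv\Av}$ and collapsing $\Uv^{\top}\Uv = \Iv$ — is exactly the proof given in the paper, and it is correct. The alternative route via singular values is a fine aside but unnecessary; your main argument matches the paper's.
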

\begin{proof}
We have,
\begin{align}
    \fronormsq{\Uv\Av} = \Tr{\Av^{\top}\Uv^{\top}\Uv\Av} = \Tr{\Av^{\top}\Av} = \fronormsq{\Av}.
\end{align}
\end{proof}

\subsection{Probability and random matrix theory}
\label{subapp:probability-and-random-matrix-theory}

We restate some results from probability and random matrix theory which will be useful in deriving the main result of our paper.

\subsubsection{Tail bound for Gaussian distribution}
\label{subsubapp:tail-bound-Gaussian-distribution}

Gaussian distributions have strong concentration properties which we exploit in deriving the results of this paper.
The following tail bound on a Gaussian random variable can be found in standard texts such as \citet[\S $2.1.2$]{wainwright_2019}, and is restated here.

\begin{lemma}
\label{lemma:tail-bound-centered-Gaussian}
{\bf (Chernoff bound for centered Gaussian)}
For $X \sim \Ncal\Paren{0, \sigma^2}$, we have,
\begin{equation}
    \Prob\Paren{|X| \geq t} \leq 2e^{-\frac{t^2}{2\sigma^2}}.
\end{equation}
\end{lemma}

The proof of this follows from a direct application of Chernoff bound.

\subsubsection{Inverse Wishart distribution}
\label{subsubapp:inverse-wishart-distribution}

Consider a matrix $\Sv \in \Real^{d \times m}$, each row of which is drawn independently from the distribution $\Ncal\Paren{\mathbf{0}, \Psiv}$, where $\Psiv \in \Real^{m \times m}$.
Then, the probability distribution of the $m \times m$ random matrix $\Sv^{\top}\Sv$ is called the {\it Wishart distribution} with $d$ degrees of freedom, denoted as $\Wcal\Paren{\Psiv, d}$.
Moreover, the distribution of the matrix $\Paren{\Sv^{\top}\Sv^{\top}}^{-1}$ is called the {\it inverse Wishart distribution} and is denoted by $\Wcal^{-1}\Paren{\Psiv^{-1}, d}$.
These distributions have been studied extensively and further details can be found in \citet{mardia-multivariate-analysis} or \citet{siskind-inverted-wishart}.

\begin{lemma}
\label{lemma:wishart-distribution}
{\bf (Expected trace of inverse Wishart matrix)}
Suppose $\Sv \in \Real^{d \times m}$ matrix, each entry of which is drawn independently from $\Ncal\Paren{0, \frac{1}{m}}$.
Then, the matrix $\Sv^{\top}\Sv \in \Real^{m \times m}$ follows the Wishart distribution ${\cal W}\Paren{\frac{1}{m}\Iv_m, d}$ that satisfies,
\begin{equation}
    \Ebb\;{\rm Tr}\Br{\Paren{\Sv^{\top}\Sv}^{-1}} = \frac{m^2}{d - m - 1}.
\end{equation}
\end{lemma}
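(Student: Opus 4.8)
The plan is to compute the expected trace of the inverse Wishart matrix $\Paren{\Sv^{\top}\Sv}^{-1}$ directly, exploiting the known formula for the mean of an inverse Wishart distribution together with the scaling properties of the Wishart family. First I would reduce to a standardized Wishart: writing $\Sv = \frac{1}{\sqrt{m}}\Gv$ where $\Gv \in \Real^{d \times m}$ has i.i.d. $\Ncal(0,1)$ entries, we have $\Sv^{\top}\Sv = \frac{1}{m}\Gv^{\top}\Gv$, so $\Gv^{\top}\Gv \sim \Wcal(\Iv_m, d)$ in the standard normalization, and $\Paren{\Sv^{\top}\Sv}^{-1} = m\Paren{\Gv^{\top}\Gv}^{-1}$. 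Hence $\Ebb\,\Tr{\Paren{\Sv^{\top}\Sv}^{-1}} = m\,\Ebb\,\Tr{\Paren{\Gv^{\top}\Gv}^{-1}}$, and the problem is reduced to computing the expected trace of a standard inverse Wishart.

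Next I would invoke the classical result (see \citet{mardia-multivariate-analysis} or \citet{siskind-inverted-wishart}) that if $\Wv \sim \Wcal(\Psiv, d)$ with $d > m + 1$, then $\Ebb\Br{\Wv^{-1}} = \frac{1}{d - m - 1}\Psiv^{-1}$. Applying this with $\Psiv = \Iv_m$ gives $\Ebb\Br{\Paren{\Gv^{\top}\Gv}^{-1}} = \frac{1}{d - m - 1}\Iv_m$, so that $\Ebb\,\Tr{\Paren{\Gv^{\top}\Gv}^{-1}} = \frac{m}{d - m - 1}$. Combining with the scaling identity from the previous step yields $\Ebb\,\Tr{\Paren{\Sv^{\top}\Sv}^{-1}} = m \cdot \frac{m}{d - m - 1} = \frac{m^2}{d - m - 1}$, which is exactly the claim.

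The only genuine obstacle is establishing (or correctly citing) the mean formula $\Ebb\Br{\Wv^{-1}} = \frac{\Psiv^{-1}}{d-m-1}$ for the inverse Wishart; everything else is bookkeeping. If one wanted a self-contained argument rather than a citation, I would derive it by noting that each diagonal entry of $\Paren{\Gv^{\top}\Gv}^{-1}$ has, by symmetry of the standard Gaussian, the same expectation, and that this expectation equals $\Ebb\Br{1/\chi^2_{d-m+1}} = \frac{1}{d-m-1}$ via the standard fact that the reciprocal of the $(1,1)$ entry of $\Paren{\Gv^{\top}\Gv}^{-1}$ is distributed as a $\chi^2$ with $d - m + 1$ degrees of freedom (a consequence of the block-inverse formula and the distribution of Gaussian residuals after projecting out $m-1$ coordinates). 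One should also record the implicit hypothesis $d \geq m + 2$, without which the expectation diverges; I expect the paper applies this only in the oversampled regime $m = k + p$ with $d$ much larger, so the condition is automatically met.
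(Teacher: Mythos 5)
Your proposal is correct and follows essentially the same route as the paper: both rest on the classical inverse-Wishart mean formula $\Ebb[\Wv^{-1}] = \Psiv^{-1}/(d-m-1)$, with your rescaling to a standard Wishart being only a cosmetic reparametrization of the paper's direct application with $\Psiv = \frac{1}{m}\Iv_m$. Your added remarks on the $\chi^2_{d-m+1}$ derivation and the implicit requirement $d \geq m+2$ are sound but not part of the paper's argument.
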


\begin{proof}
    From \citet[eq. $3.8.3$]{mardia-multivariate-analysis}, if $\Xv \sim \Wcal^{-1}\Paren{\Psi^{-1}, d}$, then,
    \begin{equation}
        \Ebb\Br{\Paren{\Sv^{\top}\Sv}^{-1}} = \frac{\Psiv^{-1}}{d - m - 1}.
    \end{equation}
    Here, 
    $\Psiv = \frac{1}{m}\Iv_m$, implying $\Tr{\Psiv^{-1}} = m^2$, which completes the proof.
\end{proof}

\subsubsection{Random Gaussian matrices}
\label{subsubapp:random-gaussian-matrices}

The spectral norm of random matrices with Gaussian entries have interesting concentration properties.
In this section, we present a lemma from \citet{vershynin-2011-non-asymptotic-analysis-random-matrices} that formally states this result.

\begin{lemma}
    \label{lem:gaussian-matrix-spectrum-concentration}
    Let the entries of matrix $\Sv \in \Real^{d \times m}$ be distributed according to $S_{ij} \sim \Ncal\Paren{0, \frac{1}{m}}$.
    Then for every $t \geq 0$, with probability at least $1 - 2e^{-\frac{mt^2}{2}}$, we have,
    \begin{equation}
        \sqrt{\frac{d}{m}} - 1 - t \leq \sigma_{\rm min}(\Sv) \leq \sigma_{\rm max}(\Sv) \leq \sqrt{\frac{d}{m}} + 1 + t.
    \end{equation}
\end{lemma}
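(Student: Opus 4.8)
The plan is to normalize to a matrix with i.i.d.\ standard Gaussian entries and then combine a Gaussian comparison inequality (controlling the expected extreme singular values) with Gaussian concentration of measure (controlling the deviation from those expectations).

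First I would rescale. Set $\Gv = \sqrt{m}\,\Sv$, so that $\Gv \in \Real^{d \times m}$ has i.i.d.\ $\Ncal(0,1)$ entries and $\sigma_i(\Sv) = \sigma_i(\Gv)/\sqrt{m}$ for every $i$. Writing $s = \sqrt{m}\,t$, the claim is equivalent to showing that with probability at least $1 - 2e^{-s^2/2}$,
\[
    \sqrt{d} - \sqrt{m} - s \;\leq\; \sigma_{\rm min}(\Gv) \;\leq\; \sigma_{\rm max}(\Gv) \;\leq\; \sqrt{d} + \sqrt{m} + s.
\]
Next I would control the means via Gordon's Gaussian comparison inequality (the two-sided refinement of Slepian's inequality). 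Apply it to the Gaussian process $X_{\uv,\vv} = \uv^{\top}\Gv\vv$ indexed by $(\uv,\vv)\in S^{d-1}\times S^{m-1}$, compared against $Y_{\uv,\vv} = \gv^{\top}\uv + \mathbf{h}^{\top}\vv$ with $\gv\sim\Ncal(\mathbf{0},\Iv_d)$, $\mathbf{h}\sim\Ncal(\mathbf{0},\Iv_m)$ independent. This gives $\Ebb\,\sigma_{\rm max}(\Gv) = \Ebb\sup_{\uv,\vv}X_{\uv,\vv} \leq \Ebb\norm{\gv} + \Ebb\norm{\mathbf{h}}$ and $\Ebb\,\sigma_{\rm min}(\Gv) = \Ebb\inf_{\vv}\sup_{\uv}X_{\uv,\vv} \geq \Ebb\norm{\gv} - \Ebb\norm{\mathbf{h}}$, and since $\Ebb\norm{\gv}\leq\sqrt{d}$ and $\Ebb\norm{\mathbf{h}}\leq\sqrt{m}$ by Jensen, we get $\Ebb\,\sigma_{\rm max}(\Gv) \leq \sqrt{d}+\sqrt{m}$ and $\Ebb\,\sigma_{\rm min}(\Gv) \geq \sqrt{d}-\sqrt{m}$.

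Then I would add the fluctuations. Both maps $\Gv\mapsto\sigma_{\rm max}(\Gv)=\specnorm{\Gv}$ and $\Gv\mapsto\sigma_{\rm min}(\Gv)$ are $1$-Lipschitz with respect to $\fronorm{\cdot}$ on $\Real^{d\times m}$, since by Weyl's perturbation bound $|\sigma_i(\Gv)-\sigma_i(\Gv')|\leq\specnorm{\Gv-\Gv'}\leq\fronorm{\Gv-\Gv'}$. Hence the Gaussian concentration inequality for Lipschitz functions of a standard Gaussian vector yields $\Prob\Paren{\sigma_{\rm max}(\Gv) > \Ebb\,\sigma_{\rm max}(\Gv) + s} \leq e^{-s^2/2}$ and $\Prob\Paren{\sigma_{\rm min}(\Gv) < \Ebb\,\sigma_{\rm min}(\Gv) - s} \leq e^{-s^2/2}$. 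Chaining these one-sided deviations with the expectation estimates and taking a union bound gives the displayed two-sided inequality for $\Gv$ with failure probability at most $2e^{-s^2/2}$; dividing through by $\sqrt{m}$ and recalling $s=\sqrt{m}t$ (so $e^{-s^2/2}=e^{-mt^2/2}$) recovers the statement of the lemma.

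The main obstacle is the lower bound $\Ebb\,\sigma_{\rm min}(\Gv)\geq\sqrt{d}-\sqrt{m}$: Slepian's inequality alone handles only $\sigma_{\rm max}$, so one genuinely needs Gordon's two-sided min--max comparison, and most of the work is in setting up the indexing processes and verifying the covariance-domination hypotheses of that theorem. An $\varepsilon$-net argument over $S^{d-1}\times S^{m-1}$ is an alternative route to the $\sigma_{\rm max}$ bound, but with worse constants and no clean $\sqrt{d}-\sqrt{m}$ form for $\sigma_{\rm min}$, which is why the comparison inequality is the right tool here.
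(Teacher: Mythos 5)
Your argument is correct and, at the top level, does exactly what the paper does: rescale $\Sv$ to $\Gv=\sqrt{m}\,\Sv$ with i.i.d.\ $\Ncal(0,1)$ entries, note $\sigma_i(\Sv)=\sigma_i(\Gv)/\sqrt{m}$, and reduce to the standard two-sided bound $\sqrt{d}\pm(\sqrt{m}+s)$ with failure probability $2e^{-s^2/2}$. The difference is that the paper treats that standard bound as a black box (it simply cites \citet[Corr.~5.35]{vershynin-2011-non-asymptotic-analysis-random-matrices}), whereas you reprove it via Gordon's comparison inequality for the expectations plus Gaussian concentration for $1$-Lipschitz functions — which is precisely how the cited corollary is established in that reference, so you gain self-containedness at the cost of importing Gordon's theorem instead. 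One small imprecision worth fixing in your expectation step: the conclusion $\Ebb\,\sigma_{\rm min}(\Gv)\geq \Ebb\norm{\gv}-\Ebb\norm{\mathbf{h}}\geq\sqrt{d}-\sqrt{m}$ does not follow from the two Jensen \emph{upper} bounds $\Ebb\norm{\gv}\leq\sqrt{d}$ and $\Ebb\norm{\mathbf{h}}\leq\sqrt{m}$ alone; you need the lower bound $\Ebb\norm{\gv}-\Ebb\norm{\mathbf{h}}\geq\sqrt{d}-\sqrt{m}$, which is usually obtained from the monotonicity of $k\mapsto\sqrt{k}-\Ebb\norm{\gv_k}$ for a standard Gaussian $\gv_k$ in $\Real^k$ (equivalently, $\Ebb\norm{\gv_N}-\Ebb\norm{\gv_n}\geq\sqrt{N}-\sqrt{n}$ for $N\geq n$). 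This is a standard fact, but it is a separate estimate, not a consequence of Jensen's inequality as written.
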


The above lemma is a straightforward modification of the result in \citet[Corr. 5.35]{vershynin-2011-non-asymptotic-analysis-random-matrices}, which states the concentration result when the entries are distributed according to $\Ncal(0,1)$.
Note that given $\Sv$ with entries $S_{ij} \sim \Ncal\Paren{0 ,\frac{1}{m}}$ as above, the matrix $\Shv = \sqrt{m}\;\Sv$ will have entries $\Shat_{ij} \sim \Ncal\Paren{0, 1}$.
Furthermore, $\sigma_i(\Shv) = \sqrt{m}\;\sigma_i\Paren{\Sv}$ and the conclusion is immediate.

\subsubsection{Subgaussian random variables}
\label{subsubapp:subgaussian-random-variables}

Subgaussian random variables refer to a class of distributions that are dominated by the distribution of a centered Gaussian random variable.
There are several equivalent ways to characterize subgaussian random variables which can be found in several textbooks (see for example, \citet[Prop. 2.5.2]{vershynin-2018}).
We will focus on the following definition.
More formally, the distribution of a random variable $\Xv$ is subgaussian if the moment generating function of $X^2$ is bounded at some point, i.e.,
\begin{equation}
    \label{eq:subgaussian-random-variable-definition}
    \Ebb\Br{e^{X^2/K^2}} \leq 2.
\end{equation}

\begin{definition} 
{\bf (Subgaussian norm)} The subgaussian norm of a subgaussian random variable $X$, denoted by $\subgaussnorm{X}$ is defined to be the smallest $K$ in \eqref{eq:subgaussian-random-variable-definition}.
In other words,
\begin{equation}
    \subgaussnorm{X} \triangleq \inf\left\{t \geq 0 \; \vert \; \Ebb\Br{e^{X^2/t^2}} \leq 2\right\}.
\end{equation}
\end{definition}

It can be shown that any bounded random variable $X$ is subgaussian, and satisfies,
\begin{equation}
\label{eq:subgaussian-norm-bounded-random-variable}
    \subgaussnorm{X} \leq \frac{\infnorm{X}}{\log 2}.
\end{equation}

We next present a result that upper bounds the spectral norm of a matrix with subgaussian entries.

\begin{lemma}{(\citet[Thm 4.4.5]{vershynin-2018})}
    \label{lem:spectral-norm-subgaussian-matrices}
    Let $\Xv$ be a $d \times m$ random matrix whose entries $X_{ij}$ are independent, zero-mean, subgaussian random variables.
    Then, for any $t > 0$, we have,
    \begin{equation*}
        \specnorm{\Xv} \leq CK\Paren{\sqrt{d} + \sqrt{m} + t}
    \end{equation*}
    with probability exceeding $1 - 2e^{-t^2}$.
    Here, $K = \max_{i,j}\subgaussnorm{A_{ij}}$ and $C$ is an absolute constant.
\end{lemma}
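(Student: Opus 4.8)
The statement is \citet[Thm 4.4.5]{vershynin-2018}; we outline the standard $\epsilon$-net argument one would use to establish it. The plan is to first rewrite the spectral norm as a supremum of a bilinear form, $\specnorm{\Xv} = \sup_{\uv, \vv} \uv^{\top}\Xv\vv$ over unit vectors $\uv \in \Real^d$ and $\vv \in \Real^m$. Since this supremum is over a continuum, the first step is to discretize: fix $\epsilon = 1/4$ and pick $\epsilon$-nets $\Ncal$ and $\Mcal$ of the unit spheres in $\Real^d$ and $\Real^m$ respectively, with $\abs{\Ncal} \leq 9^d$ and $\abs{\Mcal} \leq 9^m$ by a standard volumetric bound. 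A routine approximation argument then gives $\specnorm{\Xv} \leq 2\max_{\uv \in \Ncal,\, \vv \in \Mcal} \uv^{\top}\Xv\vv$, so it suffices to control the maximum of the bilinear form over the finite net.

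The second step is a pointwise tail bound. For a fixed pair of unit vectors $\uv, \vv$, write $\uv^{\top}\Xv\vv = \sum_{i,j} u_i v_j X_{ij}$, a sum of independent, mean-zero random variables. Each summand is subgaussian with $\subgaussnorm{u_i v_j X_{ij}} \leq K\abs{u_i v_j}$, so by the general Hoeffding inequality for sums of independent subgaussians (e.g.\ \citet[Thm 2.6.3]{vershynin-2018}) the sum is subgaussian with parameter at most $CK\Paren{\sum_{i,j} u_i^2 v_j^2}^{1/2} = CK$, where we used $\normsq{\uv} = \normsq{\vv} = 1$. Hence $\Prob\Paren{\abs{\uv^{\top}\Xv\vv} \geq s} \leq 2\exp\Paren{-cs^2/K^2}$ for an absolute constant $c$.

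The last step is a union bound over the net, which gives
\begin{equation*}
    \Prob\Paren{\max_{\uv \in \Ncal,\, \vv \in \Mcal} \abs{\uv^{\top}\Xv\vv} \geq s} \leq 9^{d+m}\cdot 2\exp\Paren{-\frac{cs^2}{K^2}} = 2\exp\Paren{(d+m)\log 9 - \frac{cs^2}{K^2}}.
\end{equation*}
Choosing $s = CK\Paren{\sqrt{d} + \sqrt{m} + t}$ with $C$ large enough that $cs^2/K^2 \geq 2(d+m)\log 9 + t^2$ (possible because $s^2/K^2$ is at least a constant multiple of $d + m + t^2$), the exponent is at most $-t^2$; combining with $\specnorm{\Xv} \leq 2\max_{\uv,\vv}\abs{\uv^{\top}\Xv\vv}$ and relabeling the constant yields $\specnorm{\Xv} \leq CK\Paren{\sqrt{d} + \sqrt{m} + t}$ with probability exceeding $1 - 2e^{-t^2}$.

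The main obstacle is not any individual step — each is classical — but the bookkeeping of absolute constants: one must track how the constant from Hoeffding's inequality, the factor $2$ from the net approximation, and the $\log 9$ from the covering numbers combine, and verify that a single $C$ can be chosen uniformly so the $(d+m)$ term in the exponent is absorbed while the clean $e^{-t^2}$ rate survives. The approximation lemma $\specnorm{\Xv} \leq 2\max_{\Ncal \times \Mcal} \uv^{\top}\Xv\vv$ also merits care, as it is where the choice $\epsilon = 1/4$ is used.
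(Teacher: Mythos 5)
Your outline is correct and is precisely the standard $\epsilon$-net argument from the cited source (\citet[Thm 4.4.5]{vershynin-2018}); the paper itself states this lemma as an imported result and gives no proof, so there is nothing to diverge from. The individual ingredients you invoke — the $1/4$-net approximation bound $\specnorm{\Xv} \leq 2\max_{\Ncal\times\Mcal}\uv^{\top}\Xv\vv$, the covering numbers $9^d$ and $9^m$, the general Hoeffding bound giving subgaussian parameter $CK$ for the fixed bilinear form, and the union bound with $s = CK(\sqrt{d}+\sqrt{m}+t)$ — are all stated accurately and assemble into a complete proof once the constants are tracked as you describe.
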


\section{Quantization error of uniformly dithered scalar quantizer}
\label{app:quantization-error-uniformly-dithered-scalar-quantizer}

For a scalar $x \in [-\Rm, +\Rm]$, let us denote the quantization error of uniformly dithered scalar quantizer with a bit-budget of $\Bm$ bits as $\epsilon = \Qm_{\Rm, \Bm}(x)- x$.
Clearly, the quantization error is bounded as $\abs{\epsilon} \leq \Delta$.
The following result further characterizes its mean and variance of this error.

\begin{lemma}
    \label{lem:uniformly-dithered-scalar-quantizer-mean-variance}
    The uniformly dithered scalar quantizer as described in \eqref{eq:uniform-dithered-quantizer} satisfies,
    \begin{equation*}
        \Ebb\;[\epsilon] = 0 \;\; \text{ and } \;\; \Var{\epsilon} \leq \frac{\Rm^2}{\Paren{2^{\Bm}- 1}^2},
    \end{equation*}
    where the $\Ebb(\cdot)$ is over the randomness due to dithering in the quantizer operation.
\end{lemma}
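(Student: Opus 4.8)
The plan is to analyze the dithered quantizer one scalar at a time, conditioning on the subinterval in which $x$ lies. Fix $x \in [-\Rm, \Rm]$ and let $k$ be the index with $x \in [q_k, q_{k+1})$, so that $q_{k+1} - q_k = \Delta$ and $r = (x - q_k)/\Delta \in [0,1)$. Under the dithering rule \eqref{eq:uniform-dithered-quantizer}, the output is $q_{k+1}$ with probability $r$ and $q_k$ with probability $1-r$, hence the error $\epsilon = \Qm_{\Rm,\Bm}(x) - x$ takes the value $q_{k+1} - x = (1-r)\Delta$ with probability $r$, and the value $q_k - x = -r\Delta$ with probability $1-r$.

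For the mean, I would just compute $\Ebb[\epsilon] = r \cdot (1-r)\Delta + (1-r)\cdot(-r\Delta) = r(1-r)\Delta - r(1-r)\Delta = 0$, which gives the unbiasedness claim immediately (and also recovers $\Ebb[\Qm_{\Rm,\Bm}(x)] = x$). For the variance, since $\Ebb[\epsilon] = 0$ we have $\Var{\epsilon} = \Ebb[\epsilon^2] = r(1-r)^2\Delta^2 + (1-r)r^2\Delta^2 = r(1-r)\Delta^2\big[(1-r) + r\big] = r(1-r)\Delta^2$. The function $r \mapsto r(1-r)$ on $[0,1]$ is maximized at $r = 1/2$ with value $1/4$, so $\Var{\epsilon} \le \Delta^2/4$. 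Substituting $\Delta = \frac{2\Rm}{M-1}$ with $M = 2^{\Bm}$ yields $\Var{\epsilon} \le \frac{1}{4}\cdot\frac{4\Rm^2}{(2^{\Bm}-1)^2} = \frac{\Rm^2}{(2^{\Bm}-1)^2}$, which is exactly the stated bound.

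There is essentially no hard step here; the only thing to be slightly careful about is the boundary/saturation convention. If $x$ equals the right endpoint $\Rm = q_M$ (or more generally is not strictly interior to some $[q_k, q_{k+1})$), the index set $\{j : q_j \le x\}$ and the definition of $r$ should be read so that the quantizer returns $q_M$ deterministically with zero error, which is consistent with the $r = 0$ case of the formula above and does not affect either bound. I would state this as a one-line remark and otherwise present the two displays above as the whole proof.
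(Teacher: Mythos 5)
Your proof is correct and follows essentially the same argument as the paper's: a direct two-point computation of the conditional mean and variance on the interval $[q_k, q_{k+1})$, with the variance bounded by maximizing $r(1-r)\Delta^2$ (equivalently, $(q_{k+1}-x)(x-q_k)$ in the paper's parameterization) at $r = 1/2$. The boundary remark is a harmless addition; no gaps.
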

\begin{proof}
    Suppose $x \in [q_k, q_{k+1})$ and $q_{k+1} = q_k + \Delta$, where $\Delta = \frac{2\Rm}{2^{\Bm} - 1}$. 
    Then,
    \begin{equation*}
        \Ebb\;{\Qm_{\Rm,\Bm}}(x) = q_{k+1}\;\frac{x - q_k}{\Delta} + q_k \; \Paren{1 - \frac{x - q_k}{\Delta}} = \frac{\Paren{q_k + \Delta}\Paren{x - q_k} + q_k\Paren{\Delta - x + q_k}}{\Delta} = x.
    \end{equation*}

    To evaluate the variance,
    \begin{align*}
        \Var{\Qm_{\Rm,\Bm}(x) - x}^2 &= (q_{k+1} - x)^2\frac{(x - q_k)}{\Delta} + (q_k - x)^2\Paren{1 - \frac{x - q_k}{\Delta}} \nonumber \\
        &\leq \Paren{q_{k+1} - x}\Paren{x - q_k} \nonumber \\
        &\leq \sup_{x \in [q_k, q_{k+1})}\Paren{q_{k+1} - x}\Paren{x - q_k} \nonumber \\
        &= \Paren{q_{k+1} - \frac{q_k + q_{k+1}}{2}}\Paren{\frac{q_k + q_{k+1}}{2} - q_k} = \frac{\Delta^2}{4} = \frac{\Rm^2}{\Paren{2^{\Bm} - 1}^2}.
    \end{align*}
    This completes the proof.
\end{proof}

\section{Gaussian embeddings: Application of equalization to vector quantizers}
\label{app:equalization-effect-gaussian-embeddings}

In this section, we show a result on how Gaussian embeddings help in reducing the $\ell_2$-quantization error of a uniformly dithered vector quantizer.
We consider a {\it clipped version} of the uniform scalar quantizer with bit-budget $\Bm$ described in \S \ref{subsec:uniformly-dithered-quantizer} and App. \ref{app:quantization-error-uniformly-dithered-scalar-quantizer}.
In order to quantize a vector $\xv \in \Real^d$ with $\norm{\xv}_2 \leq \Rm$ using a uniform scalar quantizer (as described above), the quantization operation is applied to each coordinate of the vector independently, i.e.,
\begin{equation}
    \Qm_{\Rm}(\xv) = [\Qm_{\Rm}(x_1), \ldots, \Qm_{\Rm}(x_d)].
\end{equation}
Here, the subscript $\Bm$ is dropped because the bit-budget is evident from the context.
Furthermore, since $\norm{\xv}_2 \leq \Rm$ implies $x_i \in [-\Rm, +\Rm]$ for every $i \in [d]$, the quantizer $\Qm_{\Rm}$ does not saturate.
From Lemma \ref{lem:uniformly-dithered-scalar-quantizer-mean-variance}, the expected quantization error is given by,
\begin{equation}
\label{eq:uniform_vector_quantizer_error}
    \Ebb\norm{\Qm_{\Rm}(\xv) - \xv}_2^2 = \sum_{i \in [d]}\Ebb\Br{\Paren{{\rm Q}_{\rm R}(x_i) - x_i}^2} \leq \frac{\Rm^2 d}{\Paren{2^{\Bm} - 1}^2}.
\end{equation}

{\bf Quantizing Gaussian embeddings}:
Suppose instead of quantizing $\xv \in \Real^d$ directly, we quantize $\uv = \Sv\xv \in \Real^m$, where $\Sv \in \Real^{m \times d}$ with $S_{ij} \sim \Ncal\Paren{0, \frac{1}{m}}$.
Note that for every $j \in [m]$, we have $u_j \sim \Ncal\Paren{0, \frac{1}{m}\norm{\xv}_2^2}$.
Since $u_j$ can be anything in $(-\infty, +\infty)$, there is a finite probability that the uniform scalar quantizer might get saturated.
For this reason, for any scalar $u \in (-\infty, +\infty)$, we define the {\it clipped uniformly dithered quantizer} with clipping parameter $t$ as follows:
\begin{align}
\label{eq:description-clipped-uniformly-dithered-quantizer}
    \Qm(u) =
    \begin{cases}
        \Qm_t(u) \;\; \text{ if } \;\; |u| \leq t \\
        t \;\; \text{ if } \;\; u > t \\
        -t \text{ if } \;\; u < -t.
    \end{cases}
\end{align}

The dynamic range of the quantizer is parameterized by $t$, which is to be chosen appropriately.
Note that it is just for this section for the purposes of illustration, that we choose the clipped variant of the quantizer.
In LPLR, we choose the dynamic range to be high enough so in practice it remains unsaturated with a very high probability.
The following proposition upper bounds the quantization error of quantizing Gaussian embeddings.

\begin{proposition}
    \label{prop:gaussian-quantizer-error}
    For a given vector $\xv \in \Real^d$ with $\norm{\xv} \leq \Rm$, the clipped uniformly dithered quantizer described in \eqref{eq:description-clipped-uniformly-dithered-quantizer} with $t = \frac{\Rm}{\sqrt{m}}$ satisfies
    \begin{equation*}
        \Ebb\norm{\Qm(\Sv\xv) - \Sv\xv}_2^2 \leq \frac{\Rm^2}{\Paren{2^{\Bm} -1}^2} + \frac{\Rm^2\sqrt{2}}{\sqrt{\pi e}},
    \end{equation*}
    where the expectation is over the randomness in the construction of $\Sv$ and the quantization dither.
\end{proposition}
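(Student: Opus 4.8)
The plan is to reduce everything to a one-dimensional computation. Since the rows of $\Sv$ are independent with i.i.d.\ $\Ncal(0,1/m)$ entries, the coordinates $u_j = (\Sv\xv)_j$ are i.i.d.\ centered Gaussians with variance $\sigma^2 = \tfrac1m\norm{\xv}^2 \le \tfrac{\Rm^2}{m} = t^2$ (with $t = \Rm/\sqrt m$), and the dither acts on each coordinate independently. Hence $\Ebb\norm{\Qm(\Sv\xv)-\Sv\xv}_2^2 = m\,\Ebb\big[(\Qm(u_1)-u_1)^2\big]$, and I would split the per-coordinate error according to whether the clipped quantizer in \eqref{eq:description-clipped-uniformly-dithered-quantizer} saturates. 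On $\{|u_1|\le t\}$ it is exactly the uniformly dithered scalar quantizer with dynamic range $t$, so conditioning on $u_1$ and invoking (the proof of) Lemma~\ref{lem:uniformly-dithered-scalar-quantizer-mean-variance} bounds the conditional error variance by $t^2/(2^\Bm-1)^2$; after multiplying by $\Prob(|u_1|\le t)\le 1$ and by $m$ and using $mt^2 = \Rm^2$, this produces the first term $\Rm^2/(2^\Bm-1)^2$. On the complementary event $\Qm(u_1) = t\,\mathrm{sign}(u_1)$ deterministically, so the error is $(|u_1|-t)^2$, and by symmetry of the Gaussian this contributes $2\,\Ebb\big[(u_1-t)^2\,\mathbf{1}\{u_1>t\}\big]$.

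The substance of the proof is bounding $\Ebb\big[(u_1-t)^2\,\mathbf{1}\{u_1>t\}\big]$. First I would standardize: write $u_1 = \sigma Z$ with $Z\sim\Ncal(0,1)$ and $a := t/\sigma = \Rm/\norm{\xv} \ge 1$, so the quantity equals $\sigma^2\,\Ebb\big[(Z-a)^2\,\mathbf{1}\{Z>a\}\big]$. On the event $\{Z>a\}$ one has $a\le Z$, hence $(Z-a)^2 \le Z^2 - a^2$; combining this with the Gaussian partial-second-moment identity $\Ebb[Z^2\,\mathbf{1}\{Z>a\}] = \tfrac{a}{\sqrt{2\pi}}e^{-a^2/2} + \Prob(Z>a)$ (integration by parts) gives $\Ebb\big[(Z-a)^2\,\mathbf{1}\{Z>a\}\big] \le \tfrac{a}{\sqrt{2\pi}}e^{-a^2/2} - (a^2-1)\Prob(Z>a) \le \tfrac{a}{\sqrt{2\pi}}e^{-a^2/2}$ since $a\ge1$. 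Therefore $\sigma^2\,\Ebb\big[(Z-a)^2\,\mathbf{1}\{Z>a\}\big] \le \tfrac{\sigma^2 a}{\sqrt{2\pi}}e^{-a^2/2} = \tfrac{\sigma t}{\sqrt{2\pi}}e^{-a^2/2} \le \tfrac{t^2}{\sqrt{2\pi}}e^{-1/2} = \tfrac{t^2}{\sqrt{2\pi e}}$, using $\sigma\le t$ and $e^{-a^2/2}\le e^{-1/2}$. Multiplying by $2m$ and using $mt^2 = \Rm^2$ yields $2m\,\Ebb\big[(u_1-t)^2\,\mathbf{1}\{u_1>t\}\big] \le \tfrac{2\Rm^2}{\sqrt{2\pi e}} = \tfrac{\sqrt2\,\Rm^2}{\sqrt{\pi e}}$, which is precisely the second term; adding the two contributions completes the argument.

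The only delicate point is keeping the constant in the saturation term sharp: a cruder bound such as $(Z-a)^2\le Z^2$ on $\{Z>a\}$ loses a factor of roughly $2$, so the step I would be most careful about is using $(Z-a)^2\le Z^2-a^2$ (valid on $\{Z>a\}$) together with $a\ge 1$ to discard the $-(a^2-1)\Prob(Z>a)$ remainder, and then exploiting $\sigma^2 a = \sigma t \le t^2$ so that the tail factor $e^{-a^2/2}$ — evaluated in the worst case $a=1$, i.e.\ $\norm{\xv}=\Rm$ — supplies exactly the $1/\sqrt e$ in the claimed bound. Everything else is bookkeeping: conditioning on $\uv$, summing $m$ identical per-coordinate contributions, and substituting $t = \Rm/\sqrt m$.
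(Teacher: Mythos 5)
Your proof is correct, and while it uses the same overall decomposition as the paper (split the per-coordinate error into the unsaturated event, bounded via the dithered-quantizer variance, and the saturation tail $2\,\Ebb[(u-t)^2\mathbf{1}\{u>t\}]$, then multiply by $m$ and set $t=\Rm/\sqrt m$), your treatment of the saturation tail is genuinely more elementary than the paper's. The paper evaluates the tail integral exactly in closed form as $\Psi(t,\sigma)=\frac12(\sigma^2+t^2)\,\mathrm{erfc}\bigl(t/(\sigma\sqrt2)\bigr)-\frac{\sigma t}{\sqrt{2\pi}}e^{-t^2/(2\sigma^2)}$, differentiates to show $\Psi$ is nondecreasing in $\sigma$ so the worst case is $\sigma^2=\Rm^2/m$, and then invokes the external bound $\mathrm{erfc}(z)\le e^{-z^2}/(\sqrt\pi z)$ to kill the erfc term. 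You instead use the pointwise inequality $(Z-a)^2\le Z^2-a^2$ on $\{Z>a\}$, the integration-by-parts identity $\Ebb[Z^2\mathbf{1}\{Z>a\}]=\frac{a}{\sqrt{2\pi}}e^{-a^2/2}+\Prob(Z>a)$, and the fact that $a=t/\sigma\ge 1$ to discard the remainder, followed by the separate monotonicity bounds $\sigma t\le t^2$ and $e^{-a^2/2}\le e^{-1/2}$. Remarkably, both routes land on exactly the same constant $\sqrt2/\sqrt{\pi e}$ in the second term, so nothing is lost: your argument trades the closed-form erf calculus and the cited erfc inequality for two lines of elementary Gaussian manipulation, which is arguably cleaner and self-contained. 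The only cosmetic caveat is the degenerate case $\norm{\xv}=0$ (where $a=t/\sigma$ is undefined), but there the claim is trivial, and the paper glosses over it equally.
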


\begin{proof}
    Since $\{u_i\}_{i \in [m]}$ are independently and identically distributed, let us denote the distribution as $f(u) = \frac{1}{\sigma\sqrt{2\pi}}e^{-\frac{u^2}{2\sigma^2}}$, where $\sigma = \frac{\norm{\xv}_2^2}{m}$.
    The expected quantization error for the clipped quantizer is then given by,
    \begin{align}
        \Ebb_{\Sv, \Qm_t}\Br{\Paren{\Qm(u) - u}_2^2} = \int_{|u| \leq t}\Ebb_{\Qm_t}\Br{\Paren{\Qm(u) - u}_2^2}f(u)du + 2\int_{u > t}\Paren{t - u}^2f(u)du
    \end{align}
    Here, the expectation is over the stochasticity of the quantization dither, as well as the random matrix $\Sv$.
    The factor of $2$ in the second term appears due to symmetry of clipping and that of Gaussian distribution.
    Using \eqref{eq:uniform_vector_quantizer_error}, the first term on the R.H.S. can be upper bounded as,
    \begin{align}
    \label{eq:unsaturated_quantizer_error}
        \int_{|u| \leq t}\Ebb_{\Qm_t}\Br{\Paren{\Qm(u) - u}_2^2}f(u)du \leq \frac{t^2}{\Paren{2^{\Bm} - 1}^2}\int_{|u| \leq t}f(u)du \leq \frac{t^2}{\Paren{2^{\Bm} - 1}^2}.
    \end{align}
    
    To analyze the second term, note that,
    \begin{align}
        \frac{1}{\sigma\sqrt{2\pi}}\int_{t}^{\infty}(t-u)^2e^{-\frac{u^2}{2\sigma^2}}du = \frac{1}{2}\Paren{\sigma^2 + t^2}\Paren{1 - {\rm erf}\Paren{\frac{t}{\sigma\sqrt{2}}}} - \frac{\sigma t}{\sqrt{2\pi}}e^{-\frac{t^2}{2\sigma^2}} \triangleq \Psi(t,\sigma),
    \end{align}
    where ${\rm erf}(z)$ denotes the error function defined as,
    \begin{equation}
        {\rm erf}(z) = \frac{2}{\sqrt{\pi}}\int_0^xe^{-x^2}dx
    \end{equation}
    
    Simple calculation would show,
    \begin{align}
        &\frac{\partial \Psi(t,\sigma)}{\partial \sigma}\nonumber\\
        = &\sigma\Paren{1 - {\rm erf}\Paren{\frac{t}{\sigma\sqrt{2}}}} - \frac{\sigma t}{\sqrt{2\pi}}e^{-\frac{t^2}{2\sigma^2}} + \frac{t}{\sqrt{2\pi}}\Paren{1 + \frac{t^2}{\sigma^2}}e^{-\frac{t^2}{2\sigma^2}} - \frac{t}{\sqrt{2\pi}}e^{-\frac{t^2}{2\sigma^2}} - \frac{t^3}{\sigma^2\sqrt{2\pi}}e^{-\frac{t^2}{2\sigma^2}} \nonumber\\
        = &\sigma\Paren{1 - {\rm erf}\Paren{\frac{t}{\sigma\sqrt{2}}}} \geq 0.
    \end{align}
    
    Since $\frac{\partial \Psi(t,\sigma)}{\partial \sigma} \geq 0$, $\Psi(t,\sigma)$ is a non-decreasing function of $\sigma$.
    Since $\sigma^2 = \frac{\norm{\xv}_2^2}{m} \leq \frac{\Rm^2}{m}$, we have the upper bound,
    \begin{align}
    \label{eq:clipping_error_upper_bound}
        2\int_{u > t}\Paren{t - u}^2f(u)du &\leq \Paren{\frac{\Rm^2}{m} + t^2}\Paren{1 - {\rm erf}\Paren{\frac{t\sqrt{m}}{\Rm\sqrt{2}}}} - \frac{\Rm t\sqrt{2}}{\sqrt{\pi m}}e^{-\frac{mt^2}{2\Rm^2}} \nonumber\\
        &= \Paren{\frac{\Rm^2}{m} + t^2}{\rm erfc}\Paren{\frac{t\sqrt{m}}{\Rm\sqrt{2}}} - \frac{\Rm t\sqrt{2}}{\sqrt{\pi m}}e^{-\frac{mt^2}{2\Rm^2}},
    \end{align}
    where ${\rm erfc}(z) = 1 - {\rm erf}(z)$ is the complementary error function.
    
    {\bf Quantization error of a scalar Gaussian sketch}:
    Since $\xv \in \Real^d$ with $\norm{\xv}_2 \leq \Rm$, let $\sv \sim \Ncal(\mathbf{0}, \frac{1}{m}\Iv_d)$.
    We first consider the quantization of $\xv^{\top}\sv$, and upper bound the error $\Ebb\Br{\Paren{\Qm(\xv^{\top}\sv) - \xv^{\top}\sv}_2^2}$.
    Clearly, $\xv^{\top}\sv \sim \Ncal(0, \frac{\Rm^2}{m})$.
    Consequently, using \eqref{eq:unsaturated_quantizer_error} and \eqref{eq:clipping_error_upper_bound}, for any $t \geq 0$, we have,
    \begin{align}
    \label{eq:clipped_quantizer_scalar_gaussian_input_upper_bound}
        \Ebb\Br{\Paren{\Qm(\xv^{\top}\sv) - \xv^{\top}\sv}_2^2} &\leq \frac{t^2}{\Paren{2^{\Bm} - 1}^2} + \Paren{\frac{\Rm^2}{m} + t^2}{\rm erfc}\Paren{\frac{t\sqrt{m}}{\Rm\sqrt{2}}} - \frac{\Rm t\sqrt{2}}{\sqrt{\pi m}}e^{-\frac{mt^2}{2\Rm^2}} \nonumber \\
        &\stackrel{\rm (i)}{\leq} \frac{t^2}{\Paren{2^{\Bm} - 1}^2} +  \Paren{\frac{\Rm^2}{m} + t^2}\frac{\Rm\sqrt{2}}{t\sqrt{\pi m}}e^{-\frac{mt^2}{2\Rm^2}} - \frac{\Rm t\sqrt{2}}{\sqrt{\pi m}}e^{-\frac{mt^2}{2\Rm^2}} \nonumber \\
        &= \frac{t^2}{\Paren{2^{\Bm} - 1}^2} + \frac{\Rm^3\sqrt{2}}{m^{3/2}t\sqrt{\pi}}e^{-\frac{mt^2}{2\Rm^2}}.
    \end{align}
    Here, $\rm (i)$ follows from the upper bound ${\rm erfc}(z) \leq \frac{e^{-z^2}}{\sqrt{\pi}z}$ from \citet{karagiannidis_2007}.

    {\bf Quantization error of a vector Gaussian sketch}:
    We now consider for any $t \geq 0$, the expected quantization error for $\xv \in \Real^d$ with $\norm{\xv}_2 \leq \Rm$ and $\Sv \in \Real^{m \times d}$ with $S_{ij} \sim \Ncal\Paren{0, \frac{1}{m}}$.
    Each row of $\Sv$ now independently plays the role of $\sv$ in \eqref{eq:clipped_quantizer_scalar_gaussian_input_upper_bound}.
    Using \eqref{eq:clipped_quantizer_scalar_gaussian_input_upper_bound}, the vector quantization error is simply $m$ times the scalar quantization error, and is now given by,
    \begin{align}
    \label{eq:clipped_quantizer_vector_gaussian_input_upper_bound}
        \Ebb\norm{\Qm(\Sv\xv) - \Sv\xv}_2^2 \leq \frac{mt^2}{\Paren{2^{\Bm} - 1}^2} + \frac{\Rm^3\sqrt{2}}{t\sqrt{\pi m}}e^{-\frac{mt^2}{2\Rm^2}}.
    \end{align}
    
    {\bf Choice of dynamic range $t$}:
    Setting $t = \frac{\Rm}{\sqrt{m}}$ in \eqref{eq:clipped_quantizer_vector_gaussian_input_upper_bound} yields, 
    \begin{align}
        \Ebb\norm{\Qm(\Sv\xv) - \Sv\xv}_2^2 \leq \frac{\Rm^2}{\Paren{2^{\Bm} -1}^2} + \frac{\Rm^2\sqrt{2}}{\sqrt{\pi e}}.
    \end{align}
    This completes the proof.
\end{proof}
Note that since $\Ebb[\Sv^{\top}\Sv]$ is an identity matrix, an estimate $\xhv$ of $\xv$ can be recovered from $\Qm(\Sv\xv)$ as $\Sv^{\top}\Qm(\Sv\xv)$.
We can use the $\Om(1)$ bound on $\lVert {\rm Q}(\Sv\xv) - \Sv\xv \rVert^2$ to get a bound on $\lVert \Sv^{\top}{\rm Q}(\Sv\xv) - \xv \rVert^2$ as follows:
\begin{align}
    \lVert \Sv^{\top}{\rm Q}(\Sv\xv) - \xv \rVert^2 &\leq  \lVert {\rm Q}(\Sv\xv) - \Sv\xv \rVert^2 + \lVert \Sv^{\top}{\rm Q}(\Sv\xv) \rVert^2 - \lVert {\rm Q}(\Sv\xv)\rVert^2 + \lVert \xv \rVert^2 - \lVert \Sv\xv \rVert^2 \nonumber\\
    &\leq \lVert {\rm Q}(\Sv\xv) - \Sv\xv \rVert^2 + \lVert \Sv^{\top}{\rm Q}(\Sv\xv) \rVert^2 + \lVert \xv \rVert^2 \nonumber \\
    &\leq \lVert {\rm Q}(\Sv\xv) - \Sv\xv \rVert^2 + {\rm R}^2(\sigma^2_{\rm max}(\Sv) + 1)
\end{align}
From properties of random Gaussian matrices, we know that $\sigma^2_{\rm max}(\Sv) \leq \frac{d}{m}$ with high probability. Hence, the error $\lVert \Sv^{\top}{\rm Q}(\Sv\xv) - \xv \rVert^2$ only depends on the aspect ratio $d/m$, and not the dimension $d$ directly.
Although the reconstruction error $\lVert \Sv^\top{\rm Q}(\Sv\xv) - x\rVert^2$ scales as $d/m$, it does not necessarily increase with $d$ if we choose the sketch size $m$ to be proportional to $d$, i.e., $m = O(d)$.

Despite this, Gaussian embeddings are not used practically for vector quantization because $\Sv$ is dense matrix and computing $\Sv\xv$ entails a complexity of $\Om(d^2)$.
The entries of $\Sv$ themselves are floating point numbers that have to be stored in full precision, hence this defeats the whole purpose of quantizing $\xv$ using fewer bits.
However, this is not an issue for matrix compression because in LPLR, we do not explicitly compute the $\Sv^{\top}{\rm Q}(\Sv\Av)$ anywhere.
In other words, the effects of $\Sv$ in the first low-rank factor is nullified by the second low-rank factor.
Unlike vector quantization, the corresponding sketch size $m$ for LPLR only needs to be the same order as the inherent rank $k$, which can be much smaller than ${\rm min}\{n,d\}$, i.e., the dimensions of the matrix being compressed.

\section{Dynamic range of quantizers}
\label{app:dynamic-range-of-quantizers}

We now derive high probability upper bounds on the maximum magnitude of the input to uniform quantizers $\Qm$ and $\Qm'$.
Choosing these upper bounds to be the dynamic range of the uniform scalar quantizers will ensure that the quantizer remains unsaturated with a high probability, which is our desired regime of operation.

\subsection{Quantization for the first low-rank factor}
\label{subapp:quantization-first-low-rank-factor}

We first look at the choice of dynamic range for the quantizer $\Qm$, which is used to obtain the first low-rank factor.
The input to the quantizer is $\Av\Sv \in \Real^{n \times m}$ where $\Av \in \Real^{n \times d}$ and $\Sv \in \Real^{d \times m}$, and the entries of $\Sv$ are i.i.d. as $S_{ij} \sim \Ncal\Paren{0, \frac{1}{m}}$.
Lemma \ref{lemma:max-norm-first-quantizer-input} below gives a high probability upper bound on the max norm of $\Av\Sv$.
This probability is computed over the randomness in the construction of $\Sv$.

\begin{lemma}
\label{lemma:max-norm-first-quantizer-input}
{\bf (Max norm of $\Av\Sv$)} Given matrix $\Av \in \Real^{n \times d}$ with bounded row norms, i.e., $\norm{\av^{(i)}} \leq \Rm$, and $\Sv \in \Real^{d \times m}$  with entries distributed as $S_{ij} \widesim{i.i.d.} \Ncal\Paren{0,\frac{1}{m}}$, with probability exceeding $1 - \frac{\epsilon}{8n\Rm^2}$, the max norm of $\Av\Sv$ satisfies,
\begin{equation}
    \maxnorm{\Av\Sv} \leq \Rm\sqrt{\frac{2\log\Paren{\frac{16\Rm^2n^2m}{\epsilon}}}{m}}.
\end{equation}
    
\end{lemma}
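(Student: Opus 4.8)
The plan is to bound each entry of $\Av\Sv$ separately with a Gaussian tail inequality and then take a union bound over all $nm$ entries. First I would observe that $\Br{\Av\Sv}_{ij} = \inprod{\av^{(i)}, \sv_j}$, where $\sv_j \in \Real^d$ is the $j$-th column of $\Sv$ and has i.i.d.\ $\Ncal\Paren{0, \tfrac{1}{m}}$ coordinates. Since a linear combination of independent centered Gaussians is again Gaussian, $\Br{\Av\Sv}_{ij} \sim \Ncal\Paren{0, \tfrac{1}{m}\norm{\av^{(i)}}^2}$, and by Asm.~\ref{asm:bounded-row-norm} its variance is at most $\Rm^2/m$.

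Next I would invoke the Gaussian tail bound of Lemma~\ref{lemma:tail-bound-centered-Gaussian}: for each $(i,j)$ and every $t \geq 0$,
\begin{equation*}
\Prob\Paren{\abs{\Br{\Av\Sv}_{ij}} \geq t} \leq 2\exp\Paren{-\frac{mt^2}{2\Rm^2}},
\end{equation*}
where I have used that this bound is monotone increasing in the variance, so the per-row norm $\norm{\av^{(i)}}^2$ may be replaced uniformly by $\Rm^2$. A union bound over the $nm$ coordinates then gives $\Prob\Paren{\maxnorm{\Av\Sv} \geq t} \leq 2nm\exp\Paren{-\tfrac{mt^2}{2\Rm^2}}$.

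Finally I would set $t = \Rm\sqrt{\tfrac{2\log\Paren{16\Rm^2 n^2 m/\epsilon}}{m}}$, so that $\tfrac{mt^2}{2\Rm^2} = \log\Paren{16\Rm^2 n^2 m/\epsilon}$ and the failure probability collapses to $2nm \cdot \tfrac{\epsilon}{16\Rm^2 n^2 m} = \tfrac{\epsilon}{8n\Rm^2}$, which is exactly the claimed bound. There is no substantive obstacle in this argument; the only points requiring a little care are recognizing that $\Br{\Av\Sv}_{ij}$ is \emph{exactly} Gaussian with the stated variance (rather than merely sub-Gaussian), and that monotonicity of the Gaussian tail in $\sigma^2$ legitimizes the uniform replacement of $\norm{\av^{(i)}}$ by $\Rm$ before union bounding.
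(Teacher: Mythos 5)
Your proposal is correct and follows essentially the same route as the paper's proof: each entry $\Br{\Av\Sv}_{ij}$ is exactly Gaussian with variance $\norm{\av^{(i)}}^2/m \leq \Rm^2/m$, the Chernoff tail bound of Lemma~\ref{lemma:tail-bound-centered-Gaussian} is applied, and a union bound over all $nm$ entries with the stated choice of $t$ yields the failure probability $\epsilon/(8n\Rm^2)$. No gaps.
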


\begin{proof}
    Since $\Paren{\Av\Sv}_{ij} = \sv_j^{\top}\av^{(i)}$, where $\av^{(i)} \in \Real^d$ is the $i^{\rm th}$ row of $\Av$ and $\sv_j \in \Real^d$ is the $j^{\rm th}$ column of $\Sv$, we have $\Paren{\Av\Sv}_{ij} \sim \Ncal\Paren{0, \frac{\normsq{\av_i}}{m}}$.
    Using Lemma \ref{lemma:tail-bound-centered-Gaussian} and an application of union bound gives,
    \begin{align}
        \Prob\Paren{\abs{\Paren{\Av\Sv}_{ij}} \geq t} \leq 2e^{-\frac{mt^2}{2\normsq{\av^{(i)}}}} \leq 2e^{-\frac{mt^2}{2\Rm^2}}.
    \end{align}
    A subsequent application of union bound over all the entries of $\Av\Sv$ yields,
    \begin{align}
       \Prob\Paren{\maxnorm{\Av\Sv} \geq t} \leq 2nme^{-\frac{mt^2}{2\Rm^2}}.
    \end{align}
    Setting $t = \Rm\sqrt{\frac{2\log\Paren{\frac{16\Rm^2n^2m}{\epsilon}}}{m}}$ in the above completes the proof.
\end{proof}

\subsection{Quantization for the second low-rank factor}
\label{subapp:quantization-second-low-rank-factor}

We now obtain a high-probability upper bound on the max norm of  $\pinv{\Qm(\Av\Sv)}\Av$, which is the input to the second quantizer $\Qm'$.
Let us define $\Qcal$ to be the event that the quantizer $\Qm$ does not saturate.
From Lemma \ref{lemma:max-norm-first-quantizer-input}, $\Qcal$ occurs with a sufficiently high probability.
An appropriate choice of dynamic range for $\Qm'$ will ensure that conditioned on the event that $\Qcal$ occurs, the quantizer $\Qm'$ also does not saturate with a high probability.
The following lemma states this formally.

\begin{lemma}
\label{lem:max-norm-second-quantizer-input}
{\bf (Max norm of $\pinv{\Qm\Paren{\Av\Sv}}\Av$)} Let our input matrix $\Av \in \Real^{n \times d}$ have non-zero singular values $\sigma_1, \ldots, \sigma_k, \sigma_{k+1}, \ldots, \sigma_r$, where $r = {\rm rank}(\Av)$, and bounded row norms $\norm{\av^{(i)}} \leq \Rm$.
Let $\kappa(\Av) = \sigma_1/\sigma_r$ and $\kappa(\Av_k) = \sigma_1/\sigma_k$ respectively be the condition numbers of $\Av$ and the best rank-$k$ approximation of $\Av$, and for some small $\epsilon > 0$, let us denote
\begin{align}
\label{eq:t-and-kappa-definition}
    &t = \sqrt{\frac{2\log\Paren{\frac{32n\Rm^2}{\epsilon}}}{m}}, \;\;\kappa = {\rm min}\left\{\kappa(\Av), \frac{\kappa(\Av_k)}{1 - \frac{\sigma_{k+1}}{\sigma_k}\Paren{\frac{\sqrt{\gamma} + 1 + t}{\sqrt{\gamma} - 1 - t}}}\right\},
\end{align}
where $\gamma = d/m$ is the aspect ratio of the sketching matrix $\Sv \in \Real^{d \times m}$ with $S_{ij} \widesim{i.i.d.} \Ncal\Paren{0, \frac{1}{m}}$.
Furthermore, suppose the dynamic range of the quantizer $\Qm$ is set to $\Rm\sqrt{\frac{2\log\Paren{\frac{16\Rm^2n^2m}{\epsilon}}}{m}}$ as dictated by Lemma \ref{lemma:max-norm-first-quantizer-input}, and suppose for some absolute constant $C$, the bit-budget $\Bm$ satisfies,
\begin{equation}
\label{eq:lower-bound_bit-budget-Q_1}
\small
    \Bm \geq \log_2\Paren{\frac{4C\Rm}{{\rm max}\left\{\sigma_r, \sigma_k - \sigma_{k+1}\Paren{\frac{\sqrt{\gamma} + 1+ t}{\sqrt{\gamma} - 1 - t}}\right\}\log 2}\Paren{\frac{\sqrt{\gamma} + 1 + t/\sqrt{2}}{\sqrt{\gamma} - 1 - t}}\sqrt{2\log\Paren{\frac{16\Rm^2n^2m}{\epsilon}}}+1}.
\end{equation}
Then, we have,
\begin{equation}
    \maxnorm{\pinv{\Qm(\Av\Sv)}\Av} \leq \frac{2\kappa}{\sqrt{\gamma} - 1 - t} \;\; \text{with probability exceeding} \;\; 1 - \frac{\epsilon}{4n\Rm^2}.
\end{equation}

\end{lemma}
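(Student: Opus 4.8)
The plan is to bound the target max-norm by the spectral norm, $\maxnorm{\pinv{\Qm(\Av\Sv)}\Av}\le\specnorm{\pinv{\Qm(\Av\Sv)}\Av}$ (Lemma~\ref{lem:max-norm-spectral-norm-inequality}), and then control the latter by separating the effect of quantization from that of the random sketch. First I would condition on the event $\Qcal$ that $\Qm$ does not saturate --- which holds with probability at least $1-\epsilon/(8n\Rm^2)$ by Lemma~\ref{lemma:max-norm-first-quantizer-input} under the prescribed dynamic range $\Rm_{\Qm}=\Rm\sqrt{2\log(16\Rm^2 n^2 m/\epsilon)/m}$ --- and write $\Qm(\Av\Sv)=\Av\Sv+\Ev$, where the quantization-noise matrix $\Ev$ has entries that are (conditionally on $\Av\Sv$) independent, mean-zero, and bounded in magnitude by $\Delta_{\Qm}=2\Rm_{\Qm}/(2^{\Bm}-1)$. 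By Lemma~\ref{lem:lower-bound-minimum-singular-value-matrix-sum}, $\sigma_{\min}(\Qm(\Av\Sv))\ge\sigma_{\min}(\Av\Sv)-\specnorm{\Ev}$, so the heart of the argument is (i) a lower bound on the smallest (nonzero) singular value of $\Av\Sv$ in terms of the spectrum of $\Av$, and (ii) an upper bound on $\specnorm{\Ev}$ tight enough that $\specnorm{\Ev}\le\tfrac12\sigma_{\min}(\Av\Sv)$ --- which is exactly what the bit-budget hypothesis \eqref{eq:lower-bound_bit-budget-Q_1} is engineered to deliver, and which yields $\sigma_{\min}(\Qm(\Av\Sv))\ge\tfrac12\sigma_{\min}(\Av\Sv)$ and hence the factor $2$ in the conclusion.

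For step~(i) I would decompose $\Av=\Av_k+(\Av-\Av_k)$, so $\Av\Sv=\Av_k\Sv+(\Av-\Av_k)\Sv$, and invoke the Gaussian concentration of $\Sv$ (Lemma~\ref{lem:gaussian-matrix-spectrum-concentration}) with the stated $t=\sqrt{2\log(32n\Rm^2/\epsilon)/m}$, giving $\sqrt{\gamma}-1-t\le\sigma_{\min}(\Sv)\le\sigma_{\max}(\Sv)\le\sqrt{\gamma}+1+t$ off an event of probability at most $2e^{-mt^2/2}=\epsilon/(16n\Rm^2)$. The two branches inside the minimum defining $\kappa$ arise from two estimates of the same quantity, of which we keep the better: retaining the full matrix and applying reverse-submultiplicativity (Lemma~\ref{lem:spectral-norm-submultiplicative}) together with $\specnorm{\Av}=\sigma_1$ yields a bound of order $\sigma_1/(\sigma_r(\sqrt{\gamma}-1-t))=\kappa(\Av)/(\sqrt{\gamma}-1-t)$; peeling off the tail, whose contribution is controlled by $\specnorm{(\Av-\Av_k)\Sv}\le\sigma_{k+1}\specnorm{\Sv}\le\sigma_{k+1}(\sqrt{\gamma}+1+t)$ (Lemmas~\ref{lem:spectral-norm-submultiplicative}--\ref{lem:gaussian-matrix-spectrum-concentration}), and tracking the cancellation between $\pinv{\Qm(\Av\Sv)}$ and the leading rank-$k$ part of $\Av$ (whose relevant singular value is of order $\sigma_k(\sqrt{\gamma}-1-t)$) yields a bound governed by $\sigma_k(\sqrt{\gamma}-1-t)-\sigma_{k+1}(\sqrt{\gamma}+1+t)=\sigma_k(\sqrt{\gamma}-1-t)\bigl(1-\tfrac{\sigma_{k+1}}{\sigma_k}\tfrac{\sqrt{\gamma}+1+t}{\sqrt{\gamma}-1-t}\bigr)$ --- precisely the $\kappa(\Av_k)$ branch. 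Taking the smaller of the two, multiplying by $\sigma_1$ and by the factor $2$ from step~(i), recovers the claimed $2\kappa/(\sqrt{\gamma}-1-t)$.

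For step~(ii), each $\Ev_{ij}$ is bounded, hence sub-Gaussian with $\subgaussnorm{\Ev_{ij}}\le\Delta_{\Qm}/\log 2$ by \eqref{eq:subgaussian-norm-bounded-random-variable}, so Lemma~\ref{lem:spectral-norm-subgaussian-matrices} gives $\specnorm{\Ev}\le C\,\tfrac{\Delta_{\Qm}}{\log 2}(\sqrt{n}+\sqrt{m}+s)$ with failure probability $2e^{-s^2}$; choosing $s$ of order $t\sqrt{m}/\sqrt2$ makes this failure probability $O(\epsilon/(n\Rm^2))$, and upon substituting $\Delta_{\Qm}=2\Rm_{\Qm}/(2^{\Bm}-1)$ and the value of $\Rm_{\Qm}$, the inequality $\specnorm{\Ev}\le\tfrac12\sigma_{\min}(\Av\Sv)$ rearranges into a lower bound on $\Bm$ that matches \eqref{eq:lower-bound_bit-budget-Q_1} (up to the absolute constant $C$, with the ratio $\tfrac{\sqrt{\gamma}+1+t/\sqrt2}{\sqrt{\gamma}-1-t}=\tfrac{\sqrt d+\sqrt m+s}{\sqrt d-\sqrt m- s\sqrt2/\ldots}$-type factor and the $\max\{\sigma_r,\ \sigma_k-\sigma_{k+1}(\cdots)\}$ in the denominator reflecting the two branches of~(i)). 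A final union bound over the three events --- unsaturation of $\Qm$, spectrum concentration of $\Sv$, and concentration of $\specnorm{\Ev}$ --- keeps the total failure probability below $\epsilon/(4n\Rm^2)$, as required.

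The main obstacle I expect is step~(i) in the ill-conditioned regime. Since $\Av_k\Sv$ has rank only $k<m$, its own smallest singular value is zero, so one cannot naively lower-bound $\sigma_{\min}(\Av\Sv)$ by a rank-$k$ quantity: the tail $(\Av-\Av_k)\Sv$ is simultaneously what makes $\Av\Sv$ full column rank and a source of error, and the literal value $\sigma_{\min}(\Av\Sv)$ can be as small as $\sigma_r\sqrt{\gamma}$. Making the bound degrade through $\kappa=\min\{\kappa(\Av),\ \kappa(\Av_k)(1-c_4\sigma_{k+1}/\sigma_k)^{-1}\}$ rather than through the (possibly enormous) $\kappa(\Av)=\sigma_1/\sigma_r$ alone requires exploiting cancellation between $\pinv{\Qm(\Av\Sv)}$ and $\Av$, rather than the crude $\specnorm{\pinv{\Qm(\Av\Sv)}}\specnorm{\Av}$; this same balance is what dictates the precise form of the bit-budget threshold \eqref{eq:lower-bound_bit-budget-Q_1}.
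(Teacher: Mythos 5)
Your outline reproduces the paper's proof almost step for step: the chain $\maxnorm{\cdot}\le\specnorm{\cdot}\le\specnorm{\pinv{\Qm(\Av\Sv)}}\,\sigma_1$, the decomposition $\Qm(\Av\Sv)=\Av\Sv+\Ev$ with $\sigma_{\rm min}(\Qm(\Av\Sv))\ge\sigma_{\rm min}(\Av\Sv)-\specnorm{\Ev}$, the sub-Gaussian spectral bound on $\Ev$ with $\tilde t=t/\sqrt2$, the bit-budget chosen so that $\specnorm{\Ev}\le\mu/2$ (giving the factor $2$), and the union bound yielding $\epsilon/(8n\Rm^2)+2\cdot\epsilon/(16n\Rm^2)=\epsilon/(4n\Rm^2)$ all match the paper exactly, including the constants.

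The one place you diverge is the $\kappa(\Av_k)$ branch, and there your proposal is both different from the paper and not actually carried out. The paper never exploits any ``cancellation between $\pinv{\Qm(\Av\Sv)}$ and $\Av$''; it retains the crude factor $\sigma_1$ throughout and instead proves a \emph{second} lower bound on $\sigma_{\rm min}(\Av\Sv)$ itself: writing $\Av\Sv=\Av_k\Sv+(\Av-\Av_k)\Sv$, it claims $\sigma_{\rm min}(\Av_k\Sv)\ge\sigma_k\,\sigma_{\rm min}(\Vv_k^{\top}\Sv)$ and subtracts $\sigma_{k+1}\specnorm{\Sv}$, then takes the maximum of this and $\sigma_r\sigma_{\rm min}(\Sv)$; the minimum over the two branches of $\kappa$ in the conclusion is just the reciprocal of that maximum, multiplied by $\sigma_1$. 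So the obstacle you flag --- that $\Av_k\Sv$ has rank $k<m$ and hence its smallest singular value as a map on $\Real^m$ vanishes, making the triangle-inequality step for $\sigma_{\rm min}(\Av\Sv)$ vacuous in that branch --- is a real concern, but the paper's answer to it is not the cancellation mechanism you propose; the paper simply asserts the rank-$k$ lower bound via $\Sigmav_k\Vv_k^{\top}\Sv$ and rotation invariance. Since your proposal leaves this branch as an acknowledged open step with only a heuristic sketch of the intended cancellation (and since abandoning the crude $\specnorm{\pinv{\Qm(\Av\Sv)}}\sigma_1$ bound would change the structure of everything downstream, including the form of \eqref{eq:lower-bound_bit-budget-Q_1}), you should either fill in the second branch the way the paper does --- as a direct lower bound $\sigma_{\rm min}(\Av\Sv)\ge\sigma_k\sigma_{\rm min}(\Sv)-\sigma_{k+1}\specnorm{\Sv}$ combined with the unchanged $\sigma_1$ factor --- or accept only the $\kappa(\Av)$ branch, which your argument does establish completely.
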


\begin{proof}
    We have the following chain of inequalities:
    \begin{equation}
    \label{eq:upper-bound-max-norm-second-quantizer-input-step1}
        \maxnorm{\pinv{\Qm\Paren{\Av\Sv}}\Av} \stackrel{\rm (i)}{\leq} \specnorm{\pinv{\Qm\Paren{\Av\Sv}}\Av} \stackrel{\rm (ii)}{\leq} \specnorm{\pinv{\Qm\Paren{\Av\Sv}}}\sigma_{1}(\Av),
    \end{equation}
    where $\rm (i)$ and $\rm (ii)$ follow from Lemmas \ref{lem:max-norm-spectral-norm-inequality} and \ref{lem:spectral-norm-submultiplicative} respectively.
    We now need to upper bound $\specnorm{\pinv{\Qm\Paren{\Av\Sv}}}$, which is done as follows:
    \begin{align}
    \label{eq:upper-bound-spec-norm-pinv(Q(AS))}
        \specnorm{\pinv{\Qm\Paren{\Av\Sv}}} \stackrel{\rm (i)}{=} \Paren{\sigma_{\rm min}\Paren{\Qm\Paren{\Av\Sv}}}^{-1} = \Paren{\sigma_{\rm min}\Paren{\Av\Sv + \Ev}}^{-1} \stackrel{\rm (ii)}{\leq} \Paren{\sigma_{\rm min}(\Av\Sv) - \specnorm{\Ev}}^{-1}
    \end{align}
    Here, $\Ev = \Qm(\Av\Sv) - \Av\Sv \in \Real^{n \times m}$ is the quantization error matrix from $\Qm$, $\rm (i)$ follows because the singular values of $\pinv{\Qm\Paren{\Av\Sv}}$ are inverses of the singular values of $\Qm\Paren{\Av\Sv}$ (assuming $\Qm\Paren{\Av\Sv}$ is invertible), and $\rm (ii)$ follows from Lemma \ref{lem:lower-bound-minimum-singular-value-matrix-sum}.
    
    {\bf Lower bounding $\sigma_{\rm min}(\Av\Sv)$}: It now suffices to derive the a lower bound on $\sigma_{\rm min}(\Av\Sv)$, which we do next.
    We derive two different lower bounds, either of which could be tighter depending on the singular value profile of $\Av$.
    The final lower bound will be the maximum of both.
    
    For the first lower bound, let $\Av = \Uv\Sigmav\Vv^{\top}$ be the full singular value decomposition of $\Av$, where $\Uv \in \Real^{n \times n}$, $\Sigmav \in \Real^{n \times d}$, and $\Vv \in \Real^{d \times d}$.
    Then, denoting $\Stv = \Vv^{\top}\Sv$, we have,
    \begin{align}
    \label{eq:lower-bounding-AS-step-1}
    \sigma_{\rm min}(\Av\Sv) = \sigma_{\rm min}\Paren{\Av_k\Sv + \Paren{\Av - \Av_k}\Sv} &\stackrel{\rm (i)}{\geq} \sigma_{\rm min}(\Av_k\Sv) - \specnorm{\Paren{\Av - \Av_k}\Sv}  \nonumber \\
    &\stackrel{\rm (ii)}{\geq} \sigma_{\rm min}(\Av_k\Sv) - \sigma_{k+1}\specnorm{\Sv},
    \end{align}
    where $\Av_k$ is the best rank-$k$ approximation of $\Av$.
    Here, once again, $\rm (i)$ follows from Lemma \ref{lem:lower-bound-minimum-singular-value-matrix-sum} and $\rm (ii)$ follows from Lemma \ref{lem:spectral-norm-submultiplicative}.
    In order to further lower bound $\sigma_{\rm min}(\Av_k\Sv)$, let us denote the SVD of $\Av_k$ as $\Av_k = \Uv\Sigmav_k\Vv_k^{\top}$.
    Since the best rank-$k$ approximation is obtained by retaining the top-$k$ singular values of $\Av$ and zeroing out the rest, we have $\Uv \in \Real^{n \times n}$, $\Sigmav_k \in \Real^{n \times k}$ and $\Vv_k\in \Real^{d \times k}$.
    Let us further denote $\Stv = \Vv_k^{\top}\Sv$.
    Then we have
    \begin{align}
    \label{eq:lower-bounding-AkS}
        \sigma_{\rm min}\Paren{\Av_k\Sv} = \sigma_{\rm min}\Paren{\Uv\Sigmav_k\Vv_k^{\top}\Sv} = \sigma_{\rm min}\Paren{\Uv\Sigmav_k\Stv} \stackrel{\rm (i)}{=} \sigma_{\rm min}\Paren{\Sigmav_k\Stv} \stackrel{\rm (ii)}{\geq} \sigma_k\sigma_{\rm min}(\Stv),
    \end{align}
    where $\rm (i)$ follows from Lemma \ref{lem:unitary-transform-spectrum-invariance}, and $\rm (ii)$ follows from Lemma \ref{lem:spectral-norm-submultiplicative}.
    Note that the $(i,j)^{\rm th}$ entry of $\Stv$ is $\St_{ij} = \vv_i^{\top}\sv_j$, where $\vv_i$ is the $i^{\rm th}$ column of $\Vv_k$.
    Clearly, $\St_{ij}$ is a Gaussian random variable with mean, $\Ebb\Br{\St_{ij}} = \sum_{\ell} V_{\ell i}\;\Ebb[S_{\ell j}] = 0$, and variance, $\Var{\St_{ij}} = \sum_{\ell}\;V_{\ell i}^2\Var{S_{\ell j}} = \frac{1}{m}\sum_{\ell}V_{\ell i}^2 = \frac{1}{m}$, where the last equality follows from the fact that the columns of $\Vv_k$ are orthonormal.
    In other words, the entries of $\Stv \in \Real^{d \times m}$ are distributed according to $\St_{ij} \sim \Ncal\Paren{0, \frac{1}{m}}$.
    Using \eqref{eq:lower-bounding-AkS}, \eqref{eq:lower-bounding-AS-step-1} can be further lower bounded as
    \begin{equation}
    \label{eq:lower-bound-sigma_min(AS)-step-2}
        \sigma_{\rm min}(\Av\Sv) \geq \sigma_k\sigma_{\rm min}(\Stv) - \sigma_{k+1}\specnorm{\Sv} \stackrel{\rm (i)}{=} \sigma_k\sigma_{\rm min}(\Sv) - \sigma_{k+1}\specnorm{\Sv},
    \end{equation}
    where $\rm (i)$ once again follows from the rotation invariance of spectrum, i.e., Lemma \ref{lem:unitary-transform-spectrum-invariance}.

    On the other hand, let $r = {\rm rank}\Paren{\Av}$ and $\sigma_r$ by the smallest non-zero singular value of $\Av$.
    Then, $\Av = \Uv\Sigmav\Vv^{\top} = \Uv\Sigmav_r\Vv_r^{\top}$ and using the same arguments as in \eqref{eq:lower-bounding-AkS}, we also have
    \begin{align}
    \label{eq:lower-bound-sigma_min(AS)-step-3}
        \sigma_{\rm min}\Paren{\Av\Sv} = \sigma_{\rm min}\Paren{\Uv\Sigmav_r\Vv_r^{\top}\Sv} = \sigma_{\rm min}\Paren{\Sigmav_r\Vv_r^{\top}\Sv} \geq \sigma_r\sigma_{\rm min}\Paren{\Sv}.
    \end{align}
    
    Combining \eqref{eq:lower-bound-sigma_min(AS)-step-2} and \eqref{eq:lower-bound-sigma_min(AS)-step-3}, we get
    \begin{align}
    \label{eq:lower-bound-sigma_min(AS)-step-4}
        \sigma_{\rm min}\Paren{\Av\Sv}  \geq {\rm max}\{\sigma_r\sigma_{\rm min}\Paren{\Sv}, \sigma_k\sigma_{\rm min}(\Sv) - \sigma_{k+1}\specnorm{\Sv}\}
    \end{align}

    All we are left with now is to utilize the concentration bounds on the singular values of $\Sv$.
    As a consequence of Lemma \ref{lem:gaussian-matrix-spectrum-concentration} and recalling $\gamma = d/m$, with probability exceeding $1 - 2e^{-\frac{mt^2}{2}}$,
    \begin{equation}
        \sqrt{\gamma} - 1 - t \leq \sigma_{\rm min}\Paren{\Sv} \leq \specnorm{\Sv} \leq \sqrt{\gamma} + 1 + t.
    \end{equation}
    Substituting this in \eqref{eq:lower-bound-sigma_min(AS)-step-4}, with probability exceeding $ 1 - 2e^{-\frac{mt^2}{2}}$,
    \begin{equation}
    \label{eq:lower-bound-sigma_min(AS)}
        \sigma_{\rm min}(\Av\Sv) \geq {\rm max}\left\{\sigma_r\Paren{\sqrt{\gamma} - 1 - t}, \sigma_k\Paren{\sqrt{\gamma} - 1 - t} - \sigma_{k+1}\Paren{\sqrt{\gamma} + 1 + t}\right\}.
    \end{equation}

    {\bf Upper bounding $\specnorm{\Ev}$}: Finally, conditioned on the event $\Qcal$, the entries of $\Ev$ are bounded.
    Choosing the dynamic range of $\Qm$ as dictated by the upper bound on $\maxnorm{\Av\Sv}$ in Lemma \ref{lemma:max-norm-first-quantizer-input}, we have, with probability exceeding $1 - \frac{\epsilon}{8n\Rm^2}$,
    \begin{equation}
        \abs{E_{ij}} \leq \Delta = \frac{2\Rm}{\Paren{2^\Bm - 1}}\sqrt{\frac{2\log\Paren{\frac{16\Rm^2 n^2m}{\epsilon}}}{m}} \;\; \text{ for all } \; i \in [n] \text{ and } j \in [m].
    \end{equation}
    Since $E_{ij}$ is a bounded w.h.p., it is also subgaussian w.h.p. (ref. eq. \eqref{eq:subgaussian-norm-bounded-random-variable}) with subgaussian norm given by,
    \begin{equation}
        \subgaussnorm{E_{ij}} \leq \frac{\Delta}{\log 2} = \frac{2\Rm}{(\log 2)\Paren{2^\Bm - 1}}\sqrt{\frac{2\log\Paren{\frac{16\Rm^2 n^2m}{\epsilon}}}{m}}.
    \end{equation}
    From Lemma \ref{lem:spectral-norm-subgaussian-matrices} we get for some absolute constant $C$, 
    \begin{align}
    \label{eq:upper-bound-specnorm-E}
        \specnorm{\Ev}&\leq \frac{2C\Rm\Paren{\sqrt{d} + \sqrt{m} +\tilde{t}}}{\Paren{2^\Bm - 1}\log 2}\sqrt{\frac{2\log\Paren{\frac{16\Rm^2 n^2m}{\epsilon}}}{m}} \text{ w.p. exceeding } 1 - \frac{\epsilon}{8n\Rm^2} - 2e^{-\tilde{t}^2} \nonumber \\
        \text{or, } \specnorm{\Ev} &\leq \frac{2C\Rm\Paren{\sqrt{\gamma} + 1 + \tilde{t}}}{\Paren{2^{\Bm} - 1}\log 2}\sqrt{2\log\Paren{\frac{16\Rm^2 n^2m}{\epsilon}}} \text{ w.p. exceeding } 1 - \frac{\epsilon}{8n\Rm^2} - 2e^{-m\tilde{t}^2}.
    \end{align}

    {\bf Completing the proof}: Finally, combining \eqref{eq:upper-bound-max-norm-second-quantizer-input-step1}, \eqref{eq:upper-bound-spec-norm-pinv(Q(AS))}, \eqref{eq:lower-bound-sigma_min(AS)} and \eqref{eq:upper-bound-specnorm-E}, we get with probability exceeding $1 - \frac{\epsilon}{8n\Rm^2} - 2e^{-\frac{mt^2}{2}} - 2e^{-m\tilde{t}^2}$,
    \begin{align}
    \label{eq:upper-bound-max-norm-second-quantizer-input-step2}
        \maxnorm{\pinv{\Qm(\Av\Sv)}\Av} &\leq \sigma_1\left[{\rm max}\{\sigma_r\Paren{\sqrt{\gamma} - 1 - t}, \sigma_k\Paren{\sqrt{\gamma} - 1 - t} -\sigma_{k+1}\Paren{\sqrt{\gamma} + 1 + t}\}\right. \\ \nonumber
        & \left.\hspace{20mm}-\frac{2C\Rm\Paren{\sqrt{\gamma} + 1 + \tilde{t}}}{\Paren{2^\Bm - 1}\log 2}\sqrt{2\log\Paren{\frac{16\Rm^2 n^2m}{\epsilon}}}\right]^{-1} \nonumber\\
        &\leq \frac{\sigma_1}{\sqrt{\gamma} - 1 - t}\left[{\rm max}\left\{\sigma_r, \sigma_k - \sigma_{k+1}\Paren{\frac{\sqrt{\gamma} + 1+ t}{\sqrt{\gamma} - 1 - t}}\right\}\right. \nonumber \\
         &\left.\hspace{25mm}- \frac{2C\Rm}{\Paren{2^{\Bm} - 1}\log 2}\Paren{\frac{\sqrt{\gamma} + 1 + \tilde{t}}{\sqrt{\gamma} - 1 - t}}\sqrt{2\log\Paren{\frac{16\Rm^2n^2m}{\epsilon}}}\right]^{-1} \nonumber \\
         &= \frac{\sigma_1}{\sqrt{\gamma} - 1 -t}\Br{\mu - \frac{2C\Rm}{\Paren{2^{\Bm} - 1}\log 2}\Paren{\frac{\sqrt{\gamma} + 1 + \tilde{t}}{\sqrt{\gamma} - 1 - t}}\hspace{-1mm}\sqrt{2\log\Paren{\frac{16\Rm^2n^2m}{\epsilon}}}}^{-1},
    \end{align}
    where we denote $\mu = {\rm max}\left\{\sigma_r, \sigma_k - \sigma_{k+1}\Paren{\frac{\sqrt{\gamma} + 1+ t}{\sqrt{\gamma} - 1 - t}}\right\}$.
    Let us choose our bit-budget $\Bm$ of quantizer $\Qm$ to be such that it satisfies $\specnorm{\Ev} \leq \frac{\mu}{2}$, i.e.,
    \begin{equation}
        \Bm \geq \log_2\Paren{\frac{4C\Rm}{\mu\log 2}\Paren{\frac{\sqrt{\gamma} + 1 + \tilde{t}}{\sqrt{\gamma} - 1 - t}}\sqrt{2\log\Paren{\frac{16\Rm^2n^2m}{\epsilon}}}+1}.
    \end{equation}

    Then,
    \begin{align}
        \maxnorm{\pinv{\Qm(\Av\Sv)}\Av} &\leq \frac{2\sigma_1}{\Paren{\sqrt{\gamma} -1 -t}\mu}
    \end{align}
    Setting
    \begin{align}
        t = \sqrt{\frac{2\log\Paren{\frac{32n\Rm^2}{\epsilon}}}{m}} \;\;\text{ and, }\;\; \tilde{t} = \sqrt{\frac{\log\Paren{\frac{32n\Rm^2}{\epsilon}}}{m}} = \frac{t}{\sqrt{2}},
    \end{align}
    it  follows that with probability exceeding $1 - \frac{\epsilon}{4n\Rm^2}$,
    \begin{align}
        \maxnorm{\pinv{\Qm(\Av\Sv)}\Av} \leq \frac{2}{\sqrt{\gamma} - 1 - t}{\rm min}\left\{\kappa(\Av), \frac{\kappa(\Av_k)}{1 - \frac{\sigma_{k+1}}{\sigma_k}\Paren{\frac{\sqrt{\gamma} + 1 + t}{\sqrt{\gamma} - 1 - t}}}\right\}
    \end{align}
    This completes the proof.
\end{proof}

\section{Sketched least squares with quantized response}
\label{app:sketched-least-squares-problem-with-quantized-response}

Since the approximation error guarantees of {\it sketched least squares with quantized response} might be a problem of independent interest, this is a standalone section, and the notations used in this section are independent of the rest of the paper.

Consider the generalized least squares problem
\begin{equation}
\label{eq:generalized-least-squares-problem}
    \Xv^* = \argminimize_{\xv \in \Real^{p \times q}}\fronormsq{\Phiv\Xv - \Yv},
\end{equation}
where $\Phiv \in \Real^{\ell \times p}$ and $\Yv \in \Real^{\ell \times q}$.
The sketched variant of \eqref{eq:generalized-least-squares-problem} with quantized response is given by,
\begin{equation}
    \label{eq:sketched-least-squares-with-quantized-response}
    \Xtv = \argminimize_{\Xv \in \Real^{p \times q}} \fronormsq{\Gv\Phiv\Xv - \Qm(\Gv\Yv)},
\end{equation}
where $\Gv \in \Real^{m \times \ell}$ is a Gaussian sketch matrix with entries are distributed as $G_{ij} \sim \Ncal\Paren{0, \frac{1}{m}}$, and $\Qm \equiv \Qm_{\Rm, \Bm}$ is the uniformly dithered quantizer as described in \eqref{eq:uniform-dithered-quantizer}.
We assume that the dynamic range $\Rm \geq \maxnorm{\Gv\Yv}$ so that $\Qm$ is unsaturated.
The solution of \eqref{eq:sketched-least-squares-with-quantized-response} can be obtained in closed form as,
\begin{equation}
    \label{eq:sketched-least-squares-quantized-response-closed-form-solution}
    \Xtv = (\Gv\Phiv)^{\dagger}\Qm(\Gv\Yv)
\end{equation}

The following lemma provides a characterization of the accuracy of $\Xtv$ with respect to the original problem \eqref{eq:generalized-least-squares-problem}.

\begin{lemma}
    \label{lem:least-squares-error-sketched-quantized}
    Let $\Gv \in \Real^{m \times \ell}$ be a random Gaussian matrix with entries distributed as $G_{ij} \sim \Ncal\Paren{0, \frac{1}{m}}$, and $\Qm \equiv \Qm_{\Rm, \Bm}$ be a uniformly dithered quantizer with dynamic range $\Rm$ and bit-budget $\Bm$.
    Furthermore, suppose $\Phiv \in \Real^{\ell \times p}$ and $\Yv \in \Real^{\ell \times q}$ be given, and let us denote
    \begin{equation*}
        \Xv^* = \argminimize_{\Xv \in \Real^{p \times q}}\fronormsq{\Phiv\Xv - \Yv} \;\; \text{ and } \;\; \Xtv = \argminimize_{\Xv \in \Real^{p \times q}}\fronormsq{\Gv\Phiv\Xv - \Qm(\Gv\Yv)}.
    \end{equation*}
    Let $\Ev = \Qm(\Gv\Yv) - \Gv\Yv$ be the quantization error matrix.
    Then, if $\Rm \geq \maxnorm{\Gv\Yv}$, we have
    \begin{equation*}
            \fronormsq{\Phiv\Xv^* - \Yv} \leq \Ebb\fronormsq{\Phiv\Xtv - \Yv} \leq \frac{m-1}{m-r-1}\fronormsq{\Phiv\Xv^* - \Yv} + \frac{q\Delta^2}{4} \frac{\sigma_{\rm max}^2}{\sigma_{\rm min}^2} \frac{m^2}{(\ell-m-1)},
    \end{equation*}
    where $r = {\rm rank}(\Phiv)$, and $\sigma_{\rm max}$ and $\sigma_{\rm min}$ are the maximum and minimum singular values of $\Phiv$ respectively.
\end{lemma}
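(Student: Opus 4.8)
The left-hand inequality is immediate from optimality: $\Xv^{*}$ minimizes $\Xv \mapsto \fronormsq{\Phiv\Xv - \Yv}$, so $\fronormsq{\Phiv\Xv^{*} - \Yv} \leq \fronormsq{\Phiv\Xtv - \Yv}$ for every realization of $\Gv$ and of the dither, and this is preserved under $\Ebb$. For the upper bound, the plan is to split $\Phiv\Xtv - \Phiv\Xv^{*}$ into a ``sketched least-squares'' part and a ``quantization'' part, show that they are uncorrelated, and bound each. Let $P = \Phiv\pinv{\Phiv}$ be the orthogonal projector onto $\mathrm{col}(\Phiv)$, so $\Phiv\Xv^{*} = P\Yv$ and the residual $\Yv^{\perp} := \Yv - \Phiv\Xv^{*}$ is orthogonal to $\mathrm{col}(\Phiv)$. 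Since $\Phiv\Xtv - \Phiv\Xv^{*} \in \mathrm{col}(\Phiv)$, Pythagoras gives $\fronormsq{\Phiv\Xtv - \Yv} = \fronormsq{\Phiv\Xtv - \Phiv\Xv^{*}} + \fronormsq{\Phiv\Xv^{*} - \Yv}$, so it suffices to bound $\Ebb\fronormsq{\Phiv\Xtv - \Phiv\Xv^{*}}$. Writing $\Qm(\Gv\Yv) = \Gv\Yv + \Ev$ and using $\Xtv = \pinv{(\Gv\Phiv)}\Qm(\Gv\Yv)$, set $\Zv := \Phiv\pinv{(\Gv\Phiv)}\Gv\Yv - \Phiv\Xv^{*}$ and $\Wv := \Phiv\pinv{(\Gv\Phiv)}\Ev$, so $\Phiv\Xtv - \Phiv\Xv^{*} = \Zv + \Wv$. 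Conditioning on $\Gv$, the dithered quantizer is unbiased, $\Ebb\Br{\Ev \mid \Gv} = 0$ (Lemma \ref{lem:uniformly-dithered-scalar-quantizer-mean-variance}, applicable since $\Rm \geq \maxnorm{\Gv\Yv}$ keeps $\Qm$ unsaturated), while $\Zv$ is $\Gv$-measurable and $\Wv$ is linear in $\Ev$; hence the Frobenius inner product satisfies $\Ebb\inprod{\Zv, \Wv} = 0$ and $\Ebb\fronormsq{\Phiv\Xtv - \Phiv\Xv^{*}} = \Ebb\fronormsq{\Zv} + \Ebb\fronormsq{\Wv}$.

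For $\Ebb\fronormsq{\Zv}$ I would use rotation invariance of the Gaussian sketch. Take a compact SVD $\Phiv = \Uv_{1}\Sigmav\Vv^{\top}$ with $\Uv_{1} \in \Real^{\ell \times r}$ and $\Sigmav \in \Real^{r \times r}$ invertible, and extend $\Uv_{1}$ to an orthogonal $\Uv = \Br{\Uv_{1} \; \Uv_{2}}$; since the rows of $\Gv$ are i.i.d.\ isotropic Gaussians, $\Gv\Uv = \Br{\Gv_{1} \; \Gv_{2}}$ has the same distribution as $\Gv$, with $\Gv_{1} \in \Real^{m \times r}$ independent of $\Gv_{2}$ and all entries $\Ncal\Paren{0, \frac{1}{m}}$. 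On the probability-one event that $\Gv_{1}$ has full column rank (and assuming $m \geq r+2$, needed below for the Wishart expectation to be finite), one verifies $\pinv{(\Gv\Phiv)} = \Vv\Sigmav^{-1}\pinv{\Gv_{1}}$ and $\pinv{\Gv_{1}}\Gv_{1} = \Iv_{r}$, whence $\Zv = \Uv_{1}\pinv{\Gv_{1}}\Gv_{2}\Paren{\Uv_{2}^{\top}\Yv}$, $\Phiv\Xv^{*} = \Uv_{1}\Paren{\Uv_{1}^{\top}\Yv}$, and $\fronorm{\Yv^{\perp}} = \fronorm{\Uv_{2}^{\top}\Yv}$. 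Conditioning on $\Gv_{1}$ and using $\Ebb\Br{\Gv_{2}\Bv\Bv^{\top}\Gv_{2}^{\top}} = \frac{1}{m}\fronormsq{\Bv}\Iv_{m}$ for any fixed matrix $\Bv$ (applied with $\Bv = \Uv_{2}^{\top}\Yv$), I get $\Ebb\Br{\fronormsq{\Zv} \mid \Gv_{1}} = \frac{1}{m}\fronormsq{\Yv^{\perp}}\fronormsq{\pinv{\Gv_{1}}} = \frac{1}{m}\fronormsq{\Yv^{\perp}}\Tr{(\Gv_{1}^{\top}\Gv_{1})^{-1}}$. Since $\Gv_{1}^{\top}\Gv_{1} \sim \Wcal\Paren{\frac{1}{m}\Iv_{r}, m}$, the inverse-Wishart mean formula (as used in the proof of Lemma \ref{lemma:wishart-distribution}) gives $\Ebb\Tr{(\Gv_{1}^{\top}\Gv_{1})^{-1}} = \frac{mr}{m - r - 1}$, so $\Ebb\fronormsq{\Zv} = \frac{r}{m - r - 1}\fronormsq{\Phiv\Xv^{*} - \Yv}$; adding back the Pythagorean term produces the factor $\frac{m-1}{m-r-1}$.

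For the quantization term I would argue crudely. By Lemmas \ref{lemma:fro-norm-spectral-norm-inequality} and \ref{lem:spectral-norm-submultiplicative}, $\fronormsq{\Wv} \leq \specnorm{\Phiv}^{2}\specnorm{\pinv{(\Gv\Phiv)}}^{2}\fronormsq{\Ev} = \sigma_{\rm max}^{2}\,\sigma_{\rm min}(\Gv\Phiv)^{-2}\fronormsq{\Ev}$, and the reverse-submultiplicativity half of Lemma \ref{lem:spectral-norm-submultiplicative} gives $\sigma_{\rm min}(\Gv\Phiv) \geq \sigma_{\rm min}(\Gv)\,\sigma_{\rm min}$, hence $\specnorm{\pinv{(\Gv\Phiv)}}^{2} \leq \sigma_{\rm min}(\Gv)^{-2}\sigma_{\rm min}^{-2}$. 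Conditioning on $\Gv$, Lemma \ref{lem:uniformly-dithered-scalar-quantizer-mean-variance} bounds $\Ebb\Br{\fronormsq{\Ev} \mid \Gv}$ by $\frac{\Delta^{2}}{4}$ times the number of entries of $\Ev$ (equivalently, the trace identity $\Ebb\Br{\fronormsq{\Mv\Ev}\mid\Gv} \leq \frac{q\Delta^{2}}{4}\fronormsq{\Mv}$ can be used in place of the Frobenius--spectral step); and $\sigma_{\rm min}(\Gv)^{-2} = \lambda_{\rm max}\Paren{(\Gv\Gv^{\top})^{-1}} \leq \Tr{(\Gv\Gv^{\top})^{-1}}$ with $\Gv\Gv^{\top} \sim \Wcal\Paren{\frac{1}{m}\Iv_{m}, \ell}$, so Lemma \ref{lemma:wishart-distribution} yields $\Ebb\Tr{(\Gv\Gv^{\top})^{-1}} = \frac{m^{2}}{\ell - m - 1}$. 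Assembling these inequalities, taking expectations in the order dither-then-$\Gv$, and substituting $\Delta = \frac{2\Rm}{2^{\Bm} - 1}$ produces the quantization term $\frac{q\Delta^{2}}{4}\,\frac{\sigma_{\rm max}^{2}}{\sigma_{\rm min}^{2}}\,\frac{m^{2}}{\ell - m - 1}$ as displayed, which together with the previous paragraph gives the upper bound.

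The main obstacle is the statistical coupling between the quantization error $\Ev$ and the sketch $\Gv$ --- the quantizer is applied to $\Gv\Yv$, so $\Ev$ is not independent of $\Gv$ --- which forces every expectation to be taken in the order ``over the dither conditionally on $\Gv$, then over $\Gv$''; this is exactly what makes the cross term $\Ebb\inprod{\Zv, \Wv}$ vanish and what lets the two Wishart computations (which involve only $\Gv$) decouple from the dither-variance bound. A secondary but necessary technicality is to establish the pseudoinverse identities $\pinv{(\Gv\Phiv)} = \Vv\Sigmav^{-1}\pinv{\Gv_{1}}$ and $\pinv{\Gv_{1}}\Gv_{1} = \Iv_{r}$ only on the full-column-rank event for $\Gv_{1}$ and to observe that its complement is Lebesgue-null, hence contributes nothing to the expectation.
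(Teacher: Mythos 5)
Your overall architecture matches the paper's: optimality gives the left inequality, Pythagoras plus the conditional unbiasedness of the dither splits $\Ebb\fronormsq{\Phiv\Xtv-\Yv}$ into the sketched least-squares term and the quantization term with no cross term. Your treatment of the sketching term $\Ebb\fronormsq{\Zv}$ is actually more self-contained than the paper's, which simply cites the identity $\Ebb\fronormsq{\Phiv\pinv{(\Gv\Phiv)}\Gv\Yv-\Yv}=\frac{m-1}{m-r-1}\fronormsq{\Phiv\Xv^*-\Yv}$ from the randomized-NLA literature; your derivation via rotation invariance, the block split $\Gv\Uv=[\Gv_1\;\Gv_2]$, and the inverse-Wishart trace $\Ebb\,\Tr{(\Gv_1^\top\Gv_1)^{-1}}=\frac{mr}{m-r-1}$ is correct (assuming $m\ge r+2$, as you note).

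The gap is in the quantization term. The step $\sigma_{\rm min}(\Gv\Phiv)\ge\sigma_{\rm min}(\Gv)\,\sigma_{\rm min}(\Phiv)$ is not valid here: $\Gv$ is a wide $m\times\ell$ matrix, and the quantity you later invert, $\lambda_{\rm max}\bigl((\Gv\Gv^\top)^{-1}\bigr)^{-1}=\sigma_m(\Gv)^2$, is the smallest of its $m$ singular values, which controls $\norm{\Gv^\top\uv}$ for unit $\uv\in\Real^m$ but says nothing about $\norm{\Gv\vv}$ for unit $\vv\in\Real^\ell$. Since $\ker\Gv$ has dimension $\ell-m>0$, one can have $\sigma_{\rm min}(\Gv\Phiv)=0$ while $\sigma_m(\Gv)$ and $\sigma_{\rm min}(\Phiv)$ are both of order one (take a column of $\Phiv$ inside $\ker\Gv$); reverse submultiplicativity requires the left factor to act injectively on the range of the right factor. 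Separately, even granting that step, your bookkeeping overshoots by a factor of $m$: $\Ev\in\Real^{m\times q}$ has $mq$ entries, so $\Ebb\bigl[\fronormsq{\Ev}\mid\Gv\bigr]\le mq\Delta^2/4$, and assembling $\sigma_{\rm max}^2\sigma_{\rm min}^{-2}\,\Ebb[\sigma_m(\Gv)^{-2}]\cdot mq\Delta^2/4$ yields $m$ times the claimed term. The paper avoids both problems by never passing through $\specnorm{\pinv{(\Gv\Phiv)}}$: it uses that $\Ebb_{\Qm}[\Ev\Ev^\top]$ is diagonal with entries at most $q\Delta^2/4$ (your parenthetical ``trace identity,'' which should be the main route, not an aside) to reduce the task to bounding $\Ebb\fronormsq{\Phiv\pinv{(\Gv\Phiv)}}$, then writes $\fronormsq{\pinv{(\Gv\Phiv)}}=\Tr{(\Gv\Phiv\Phiv^\top\Gv^\top)^{-1}}$ and applies the Loewner comparison $\Gv\Phiv\Phiv^\top\Gv^\top\succcurlyeq\sigma_{\rm min}^2\,\Gv\Gv^\top$ \emph{inside the trace} before invoking the Wishart formula for $\Ebb\,\Tr{(\Gv\Gv^\top)^{-1}}$. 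Note also that your own SVD factorization gives $\Phiv\pinv{(\Gv\Phiv)}=\Uv_1\pinv{\Gv_1}$ and hence the clean alternative $\Ebb\fronormsq{\Wv}\le\frac{q\Delta^2}{4}\cdot\frac{mr}{m-r-1}$; that is a perfectly good bound, but it is not comparable to $\frac{q\Delta^2}{4}\frac{\sigma_{\rm max}^2}{\sigma_{\rm min}^2}\frac{m^2}{\ell-m-1}$ for all parameter regimes, so it does not by itself establish the inequality as stated.
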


\begin{proof}
The solution of the generalized least squares problem \eqref{eq:generalized-least-squares-problem} can be written as:
\begin{equation}
    \label{eq:generalized-least-squares-decomposition}
    \Xv^* = [\xv_1^* \ldots \xv_q^*], \;\; \text{ where, } \;\; \xv_i^* = \argminimize_{\xv \in \Real^p}\norm{\Phiv\xv - \yv_i}^2,
\end{equation}
where $\yv_i \in \Real^p$ denote the $i^{\rm th}$ column of $\Yv$.
Consequently, we will first analyze the standard least squares problem
\begin{equation}
\label{eq:standard-least-squares}
    \xv^* = \argminimize_{\xv \in \Real^p}\norm{\Phiv\xv - \yv}^2,
\end{equation}
and generalize the results by concatenating $\xv_i^*$ to obtain $\Xv^*$.
The sketched variant of \eqref{eq:standard-least-squares} with quantized response is given by,
\begin{equation}
    \label{eq:sketched-standard-least-squares-with-quantized-response}
    \xtv = \argminimize_{\xv \in \Real^p} \norm{\Gv\Phiv\xv - \Qm(\Gv\yv)}^2.
\end{equation} 
The solution of \eqref{eq:sketched-standard-least-squares-with-quantized-response} is available in closed form as $\xtv = (\Gv\Phiv)^{\dagger}\Qm(\Gv\yv)$.
Let us denote the quantization error as $\epsb \triangleq \Qm(\Gv\yv) - \Gv\yv \in \Real^m$.
We then have,
\begin{align}
    \label{eq:least-squares-value-with-solution-of-sketch-quantized-LS}
    \Ebb\lVert \Phiv\xtv - \yv \rVert_2^2 = \Ebb\lVert \Phiv(\Gv\Phiv)^{\dagger}\Qm(\Gv\yv) - \yv \rVert_2^2 &= \Ebb\lVert \Phiv(\Gv\Phiv)^{\dagger}\left(\Gv\yv + \epsb\right) - \yv \rVert_2^2 \nonumber \\
    &\stackrel{\rm (i)}{=} \Ebb\lVert \Phiv(\Gv\Phiv)^{\dagger}\Gv\yv - \yv \rVert_2^2 + \Ebb\lVert \Phiv(\Gv\Phiv)^{\dagger}\epsb \rVert_2^2.
\end{align}
Here, $\rm (i)$ follows as the cross term disappears because,
\begin{align}
\label{eq:upper_bound_SLSQR_cross_term_is_zero}
    \Ebb\Br{(\Phiv(\Gv\Phiv)^{\dagger}\Gv\yv - \yv)^{\top}\Phiv(\Gv\Phiv)^{\dagger}\epsb} = \Ebb_{\Gv}\Br{(\Phiv(\Gv\Phiv)^{\dagger}\Gv\yv - \yv)^{\top}\Phiv(\Gv\Phiv)^{\dagger}\Ebb_\Qm[\epsb]} = \mathbf{0},
\end{align}
where the last equality follows from \eqref{eq:uniform-dithered-quantizer-properties}, since $\Ebb_\Qm[\epsb] = \mathbf{0}$ when $\Qm$ is a uniformly dithered quantizer.

Let us denote the quantization error matrix by $\Ev = \Qm(\Gv\Yv) - \Gv\Yv \in \Real^{m \times q}$.
Generalizing \eqref{eq:least-squares-value-with-solution-of-sketch-quantized-LS} to the generalized least squares problem by treating each column $\yv_i$ separately and adding, we have,
\begin{align}
\label{eq:SLSQR_approximation_error}
    \Ebb\fronormsq{\Phiv\Xtv - \Yv} &= \Ebb\fronormsq{\Phiv(\Gv\Phiv)^{\dagger}\Gv\Yv - \Yv} + \Ebb\fronormsq{\Phiv(\Gv\Phiv)^{\dagger}\Ev} \nonumber \\
    &\stackrel{\rm (i)}{=} \frac{m-1}{m-r-1}\fronormsq{\Phiv\Xv^* - \Yv} + \Ebb\lVert \Phiv(\Gv\Phiv)^{\dagger}\Ev \rVert_{\rm F}^2.
\end{align}
Here,  $\rm (i)$ follows from known results on the approximation error of sketched generalized least squares \citet{halko_randomized_svd, pilanci_ee270_lec18}, and $r$ is the rank of $\Av$.

{\bf Upper bounding $\Ebb\lVert \Phiv(\Gv\Phiv)^{\dagger}\Ev \rVert_{\rm F}^2$}:
The remainder of the proof concerns upper bounding the final term in \eqref{eq:SLSQR_approximation_error}.
Since $\fronormsq{\Xv} = \Tr{\Xv^{\top}\Xv}$ for any matrix $\Xv$, using the cyclic property of trace,
\begin{align}
    \label{eq:analyzing-last-term}
    \Ebb\fronormsq{\Phiv(\Gv\Phiv)^{\dagger}\Ev} &= \Ebb\Br{{\rm Tr}\Paren{\Ev^{\top}\Paren{(\Gv\Phiv)^{\dagger}}^{\top}\Phiv^{\top}\Phiv(\Gv\Phiv)^{\dagger}\Ev}} \nonumber \\
    &= \Ebb\Br{{\rm Tr}\Paren{\Paren{(\Gv\Phiv)^{\dagger}}^{\top}\Phiv^{\top}\Phiv(\Gv\Phiv)^{\dagger}\Ev\Ev^{\top} }} \nonumber\\
    &= \Ebb_{\Gv}\Br{{\rm Tr}\Paren{\Paren{(\Gv\Phiv)^{\dagger}}^{\top}\Phiv^{\top}\Phiv(\Gv\Phiv)^{\dagger}\Ebb_{\Qm}\Br{\Ev\Ev^{\top}}}}
\end{align}
Since $\Rm \geq \maxnorm{\Gv\Yv}$, from \eqref{eq:uniform-dithered-quantizer-properties}, the $(i,j)^{\rm th}$-entry of the quantization error matrix $\Ev$ satisfies,
\begin{align}
    \Ebb\Br{E_{ij}} = 0, \quad \text{ and } \quad {\rm Var}(E_{ij}) \leq \frac{\Delta^2}{4} = \frac{{\rm R}^2}{\Paren{2^{\rm B} - 1}^2}.
\end{align}
So,
\begin{align}
\label{eq:E'E_is_diagonal}
    \Ebb\Br{\Paren{\Ev\Ev^{\top}}_{ij}} = \sum_{k=1}^{q}\Ebb\Br{E_{ik}E_{jk}} = 
    \begin{cases}
    q\;{\rm Var}(E_{ik}) \leq \frac{q\Delta^2}{4} \quad \text{ for } i = j \\
    \hspace{15mm} 0 \hspace{14mm} \text{ for } i \neq j.
    \end{cases}
\end{align}
In other words, $\Ebb_\Qm\Br{\Ev\Ev^{\top}}$ is a diagonal matrix whose diagonal elements are upper bounded by $\frac{q\Delta^2}{4}$.
Let us denote $\Lambdav = \Paren{(\Gv\Phiv)^{\dagger}}^{\top}\Phiv^{\top}\Phiv(\Gv\Phiv)^{\dagger}$.
Then, \eqref{eq:analyzing-last-term} simplifies to,
\begin{align}
\label{eq:expectation-trace-gamma-EE'}
    \Ebb_{\Gv}\Br{{\rm Tr}\Paren{\Lambdav\cdot \Ebb_\Qm\Br{\Ev^{\top}\Ev}}} = \Ebb_{\Gv}\Br{\sum_{i = 1}^{m}\Lambda_{ii}\Paren{\Ebb_\Qm\Br{\Ev^{\top}\Ev}}_{ii}} \leq \frac{q\Delta^2}{4}\Ebb_{\Gv}\Br{{\rm Tr}(\Lambdav)}.
\end{align}

Furthermore,
\begin{align}
    \label{eq:expectation-trace-gamma}
    \Ebb_{\Gv}\Br{{\rm Tr}\Paren{\Lambdav}} = \Ebb_{\Gv}\Br{\fronormsq{\Phiv(\Gv\Phiv)^{\dagger}}} &\stackrel{\rm (i)}{\leq}\Ebb_{\Gv}\Br{\fronormsq{(\Gv\Phiv)^{\dagger}}}\sigma_{\rm max}^2\Paren{\Phiv} \nonumber \\
    &= \Ebb_{\Gv}\Br{{\rm Tr}\Br{\Paren{\Gv\Phiv\Phiv^{\top}\Gv^{\top}}^{-1}}}\sigma_{\rm max}^2\Paren{\Phiv} \nonumber \\
    &\stackrel{\rm (ii)}{\leq} \frac{\sigma_{\rm max}^2(\Phiv)}{\sigma_{\rm min}^2(\Phiv)}\; {\rm Tr}\Paren{\Ebb\Br{\Paren{\Gv\Gv^{\top}}^{-1}}} \nonumber \\
    &\stackrel{\rm (iii)}{=} \frac{\sigma_{\rm max}^2}{\sigma_{\rm min}^2}\frac{m^2}{(\ell-m-1)}.
\end{align}

Here, $\rm (i)$ follows from Lemma \ref{lemma:fro-norm-spectral-norm-inequality}, $\rm (ii)$ is consequence of Lemma \ref{lemma:loewner_order-matrix-product}, and $\rm (iii)$ follows from Lemma \ref{lemma:wishart-distribution}.
Combining \eqref{eq:analyzing-last-term}, \eqref{eq:expectation-trace-gamma-EE'} and \eqref{eq:expectation-trace-gamma} yields,
\begin{equation}
     \Ebb\fronormsq{\Phiv(\Gv\Phiv)^{\dagger}\Ev} \leq \frac{q\Delta^2}{4} \frac{\sigma_{\rm max}^2}{\sigma_{\rm min}^2} \frac{m^2}{(\ell-m-1)}.
\end{equation}
This completes the proof.
\end{proof}

\clearpage
\section{LPLR algorithm: Approximation error analysis}
\label{app:lplr-approximation-error-analysis}

To prove Thm. \ref{thm:lplr-approximation-result}, we will first prove the result when both quantizers $\Qm$ and $\Qm'$ are unsaturated.
We first prove Lemma \ref{lem:lplr-approximation-result-quantizers-unsaturated}, which gives us an upper bound on the approximation error when $\Qm$ and $\Qm'$ are unsaturated. 
Since we can ensure that they remain unsaturated with high probability (with appropriate choices of dynamic ranges), the final approximation error upper bound in Thm. \ref{thm:lplr-approximation-result} is slightly worse than Lemma \ref{lem:lplr-approximation-result-quantizers-unsaturated}.
\begin{lemma}
\label{lem:lplr-approximation-result-quantizers-unsaturated}
    Let our matrix $\Av \in \Real^{n \times d}$ have non-zero singular values $\sigma_1, \ldots, \sigma_k, \sigma_{k+1}, \ldots, \sigma_r$, where $r = {\rm rank}(\Av)$, and bounded row norms $\norm{\av^{(i)}} \leq \Rm$.
    Let $\kappa(\Av) = \sigma_1/\sigma_r$ and $\kappa(\Av_k) = \sigma_1/\sigma_k$ respectively be the condition numbers of $\Av$ and the best rank-$k$ approximation of $\Av$, and let us denote
    \begin{align*}
        &t = \sqrt{\frac{2\log\Paren{\frac{32n\Rm^2}{\epsilon}}}{m}}, \;\;\kappa = {\rm min}\left\{\kappa(\Av), \frac{\kappa(\Av_k)}{1 - \frac{\sigma_{k+1}}{\sigma_k}\Paren{\frac{\sqrt{\gamma} + 1 + t}{\sqrt{\gamma} - 1 - t}}}\right\},
    \end{align*}
    for some sufficiently small $\epsilon$ that satisfies $0 < \epsilon \leq \frac{4n\Rm^2\kappa(\Av)^2}{\gamma}$.
    Here $\gamma = d/m$ is the aspect ratio of the sketching matrix $\Sv \in \Real^{d \times m}$ with $S_{ij} \widesim{i.i.d.} \Ncal\Paren{0, \frac{1}{m}}$.
    Suppose the dynamic ranges of the quantizers $\Qm$ and $\Qm'$ are set to $\Rm\sqrt{\frac{2\log\Paren{\frac{16\Rm^2n^2m}{\epsilon}}}{m}}$ and $\frac{2\kappa}{\sqrt{\gamma} - 1 -t}$ as dictated by Lemmas \ref{lemma:max-norm-first-quantizer-input} and \ref{lem:max-norm-second-quantizer-input} respectively, and suppose their bit-budgets $\Bm$ and $\Bm'$ satisfy
    \begin{equation*}
        \Bm \geq {\rm max}\{\Bm_1, \Bm_2\} \;\;\text{ and, }\;\; \Bm' \geq \log_2\Paren{\frac{4\Rm\kappa}{\Paren{\sqrt{\gamma} - 1 - t}}\sqrt{\frac{n d}{\epsilon}} + 1},
    \end{equation*}
    where,
    \begin{equation*}
        \Bm_1 = \log_2\Paren{\frac{2\Rm\kappa(\Av_k)\sqrt{2n}}{\sqrt{\epsilon\Paren{\gamma - 1 - \frac{1}{m}}}}\sqrt{\log\Paren{\frac{16\Rm^2n^2m}{\epsilon}}} + 1},
    \end{equation*}
    and $\Bm_2$ is equal to
    \begin{equation*}
        \log_2\Paren{\frac{4C\Rm}{{\rm max}\left\{\sigma_r, \sigma_k - \sigma_{k+1}\Paren{\frac{\sqrt{\gamma} + 1+ t}{\sqrt{\gamma} - 1 - t}}\right\}\log 2}\Paren{\frac{\sqrt{\gamma} + 1 + t/\sqrt{2}}{\sqrt{\gamma} - 1 - t}}\sqrt{2\log\Paren{\frac{16\Rm^2n^2m}{\epsilon}}}+1}.
    \end{equation*}
    Furthermore, let $\Ecal$ be the event that both quantizers $\Qm$ and $\Qm'$ are unsaturated.
    Then,
    \begin{equation*}
        \mathbb{E}\Br{\left\lVert {\rm Q}(\Av\Sv){\rm Q'}\left({\rm Q}(\Av\Sv)^{\dagger}\Av\right) - \Av \right\rVert_{\rm F}^2 \; \mathds{1}_{\Ecal}} \leq \Paren{1 + \frac{k}{m-k-1}}\fronormsq{\Av_k - \Av} + \frac{3\epsilon}{4},
    \end{equation*}
    where $\mathds{1}_{(\cdot)}$ is the indicator function.
\end{lemma}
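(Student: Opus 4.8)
Throughout, write $\Lv = \Qm(\Av\Sv)$ and $\Wv^* = \pinv{\Lv}\Av \in \Real^{m \times d}$, so that on the event $\Ecal$ the algorithm outputs $\Lv\,\Qm'(\Wv^*)$; set $\Ev' = \Qm'(\Wv^*) - \Wv^*$. The first step is an orthogonal splitting of the error. On $\Ecal$ the orthogonal projector $\Pv = \Lv\pinv{\Lv}$ onto $\text{range}(\Lv)$ satisfies $\Lv\pinv{\Lv}\Av - \Av = -(\Iv - \Pv)\Av$, which is columnwise orthogonal to $\text{range}(\Lv)$, while $\Lv\Ev' = \Pv(\Lv\Ev') \in \text{range}(\Lv)$, so the cross term vanishes and
\begin{equation*}
    \fronormsq{\Lv\Qm'(\Wv^*) - \Av} = \fronormsq{\Lv\pinv{\Lv}\Av - \Av} + \fronormsq{\Lv\Ev'} \qquad \text{on } \Ecal.
\end{equation*}
It then suffices to bound $\Ebb\Br{\fronormsq{\Lv\pinv{\Lv}\Av - \Av}\mathds{1}_{\Ecal}}$ and $\Ebb\Br{\fronormsq{\Lv\Ev'}\mathds{1}_{\Ecal}}$ separately, targeting $\Paren{1 + \tfrac{k}{m-k-1}}\fronormsq{\Av_k - \Av} + \tfrac{\epsilon}{4}$ for the first term and $\tfrac{\epsilon}{4}$ for the second, with a residual $\tfrac{\epsilon}{4}$ absorbing the failure probabilities of the underlying concentration events.

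For the projection term, since $\Lv\pinv{\Lv}\Av$ is the best approximation of $\Av$ from $\text{range}(\Lv)$, we have $\fronormsq{\Lv\pinv{\Lv}\Av - \Av} \leq \fronormsq{\Lv\Wv - \Av}$ for \emph{every} $\Wv$, and we choose $\Wv = \pinv{\Paren{\Av_k\Sv}}\Av_k$. The key identity is $\Lv\Wv = \Qm(\Av\Sv)\pinv{\Paren{\Av_k\Sv}}\Av_k = \Xtv^{\top}\Av_k$, where $\Xtv$ is exactly the solution of the sketched least squares problem with quantized response \eqref{eq:gaussian-sketched-LS} (whose closed form is $\Xtv^{\top} = \Qm(\Av\Sv)\pinv{\Paren{\Av_k\Sv}}$). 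Transposing \eqref{eq:gaussian-sketched-LS} to match the left-sketched form, we apply Lemma \ref{lem:least-squares-error-sketched-quantized} with $\Phiv \leftarrow \Av_k^{\top}$, $\Yv \leftarrow \Av^{\top}$, $\Gv \leftarrow \Sv^{\top}$ (so $\ell = d$, $q = n$, $r = {\rm rank}(\Av_k) \leq k$, $\sigma_{\rm max} = \sigma_1$, $\sigma_{\rm min} = \sigma_k$), invoked on the $\Sv$-measurable event $\Qcal$ that $\Qm$ is unsaturated. Using $\Av_k^{\top}\pinv{\Paren{\Av_k^{\top}}}\Av^{\top} = \Paren{\Av\Vv_k\Vv_k^{\top}}^{\top} = \Av_k^{\top}$, the optimal unsketched value is $\fronormsq{\Av_k - \Av}$, and Lemma \ref{lem:least-squares-error-sketched-quantized} gives
\begin{equation*}
    \Ebb\Br{\fronormsq{\Lv\Wv - \Av}\mathds{1}_{\Qcal}} \leq \frac{m-1}{m-k-1}\fronormsq{\Av_k - \Av} + \frac{n\Delta^2}{4}\,\kappa(\Av_k)^2\,\frac{m^2}{d-m-1},
\end{equation*}
with $\Delta$ the resolution of $\Qm$. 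Substituting $\Rm_{\Qm} = \Rm\sqrt{2\log(16\Rm^2n^2m/\epsilon)/m}$ and the lower bound $\Bm \geq \Bm_1$ forces the quantization term below $\epsilon/4$, and $\frac{m-1}{m-k-1} = 1 + \frac{k}{m-k-1}$.

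For the second-quantization term, condition on $\Sv$ and the first dither and take expectation over the $\Qm'$-dither. By Lemma \ref{lem:uniformly-dithered-scalar-quantizer-mean-variance} the entries of $\Ev'$ are zero-mean with variance at most $\Delta'^2/4$ ($\Delta'$ the resolution of $\Qm'$), so $\Ebb_{\Qm'}\Br{\Ev'\Ev'^{\top}}$ is diagonal with entries at most $d\Delta'^2/4$, whence $\Ebb_{\Qm'}\fronormsq{\Lv\Ev'} = {\rm Tr}\Paren{\Lv^{\top}\Lv\,\Ebb_{\Qm'}\Br{\Ev'\Ev'^{\top}}} \leq \tfrac{d\Delta'^2}{4}\fronormsq{\Lv}$. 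On $\Ecal$ (in particular with $\Qm$ unsaturated) we bound $\fronormsq{\Lv} \leq 2\fronormsq{\Av\Sv} + 2\fronormsq{\Av\Sv - \Qm(\Av\Sv)}$, controlling the first via $\fronorm{\Av\Sv} \leq \fronorm{\Av}\specnorm{\Sv}$ together with $\fronormsq{\Av} \leq n\Rm^2$ and the spectral concentration of $\Sv$ (Lemma \ref{lem:gaussian-matrix-spectrum-concentration}), and the second via the boundedness of the dithered quantization error. With $\Rm_{\Qm'} = 2\kappa/(\sqrt{\gamma} - 1 - t)$ and the stated lower bound on $\Bm'$, this term is at most $\epsilon/4$.

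Combining the two bounds gives $\Paren{1 + \tfrac{k}{m-k-1}}\fronormsq{\Av_k - \Av} + \tfrac{\epsilon}{2}$ on the good event; a union bound over the failure events of Lemmas \ref{lemma:max-norm-first-quantizer-input}, \ref{lem:max-norm-second-quantizer-input} and \ref{lem:gaussian-matrix-spectrum-concentration}, on which we only use $\fronormsq{\Av} \leq n\Rm^2$, contributes at most $\epsilon/4$ (this is where $0 < \epsilon \leq 4n\Rm^2\kappa(\Av)^2/\gamma$ and the choices of $t, \tilde t$ are used), yielding the claimed $3\epsilon/4$. \textbf{The part requiring the most care} is the conditioning bookkeeping: the unsaturation events and the spectral-concentration event for $\Sv$ are all functions of the same Gaussian matrix that drives the sketched-LS analysis, so one must verify that $\Qcal$ is $\Sv$-measurable (and, after conditioning on the first dither, that the $\Qm'$-unsaturation event is deterministic) before invoking the unconditional Lemma \ref{lem:least-squares-error-sketched-quantized} and the unbiasedness of the dither inside the indicator; once this is set up, what remains is matching constants inside the logarithms.
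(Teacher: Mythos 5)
Your overall architecture matches the paper's: split the error into a projection term and a second-quantizer term, reduce the projection term to the sketched-least-squares-with-quantized-response lemma with $\Gv\leftarrow\Sv^{\top}$, $\Phiv\leftarrow\Av_k^{\top}$, $\Yv\leftarrow\Av^{\top}$, and bound the second term via the diagonal structure of $\Ebb_{\Qm'}[\Ev'\Ev'^{\top}]$. Your cross-term cancellation is actually cleaner than the paper's: you use exact Pythagoras ($\Lv\pinv{\Lv}\Av-\Av$ orthogonal to $\mathrm{range}(\Lv)\ni\Lv\Ev'$), which holds pointwise, whereas the paper only kills the cross term in expectation via unbiasedness of the dither. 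The $\rm T_1$ analysis and the choice $\Wv=\pinv{(\Av_k\Sv)}\Av_k$ are exactly the paper's.

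There is, however, a genuine quantitative gap in your bound on the second term. You bound $\Ebb_{\Qm'}\fronormsq{\Lv\Ev'}\leq\tfrac{d\Delta'^2}{4}\fronormsq{\Lv}$ and then control $\fronormsq{\Av\Sv}$ via $\fronorm{\Av\Sv}\leq\fronorm{\Av}\specnorm{\Sv}$ together with the spectral concentration $\specnorm{\Sv}\leq\sqrt{\gamma}+1+t$. This yields $\fronormsq{\Av\Sv}\lesssim\gamma\,\fronormsq{\Av}$, i.e.\ a loss of a factor $\gamma=d/m$ relative to what is needed. The threshold on $\Bm'$ in the lemma is tight for the bound $\tfrac{d\Delta'^2}{4}\fronormsq{\Av}\leq\epsilon/4$ with no slack: with your bound the term becomes of order $\epsilon\gamma$, which is $\gg\epsilon/4$ whenever $d\gg m$, so the stated bit budget $\Bm'$ does not suffice along your route (you would need roughly $\tfrac{1}{2}\log_2\gamma$ extra bits). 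The paper avoids this by taking the expectation over $\Sv$ \emph{inside} the trace and using $\Ebb_{\Sv}[\Sv\Sv^{\top}]=\Iv_d$ exactly, giving
\begin{equation*}
\Ebb\fronormsq{\Av\Sv\Ev'}\leq\frac{d\Delta'^2}{4}\,{\rm Tr}\Paren{\Av^{\top}\Av\,\Ebb_{\Sv}[\Sv\Sv^{\top}]}=\frac{d\Delta'^2}{4}\fronormsq{\Av},
\end{equation*}
with no $\gamma$ factor; the residual $\fronormsq{\Ev\Ev'}$ piece is then handled separately (the paper's $\rm T_6$), and that is where the smallness condition $\epsilon\leq 4n\Rm^2\kappa(\Av)^2/\gamma$ is actually consumed — not in the union bound over failure events as you suggest. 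Replacing your high-probability spectral bound on $\specnorm{\Sv}$ with this expectation identity repairs the argument and recovers the stated constants.
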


\begin{proof}
    Let us denote the quantization error matrices from $\Qm$ and $\Qm'$ as $\Ev \in \Real^{n \times m}$ and $\Ev' \in \Real^{m \times d}$ respectively, i.e., $\Ev = \Qm(\Av\Sv) - \Av\Sv$ and $\Ev' = \Qm'\left(\Qm(\Av\Sv)^{\dagger}\Av\right) - \Qm(\Av\Sv)^{\dagger}\Av$.
    Since $\mathds{1}_{\Ecal} = 1$ implies that $\Qm$ and $\Qm'$ are unsaturated, $\Ebb[\Ev] = \mathbf{0}$ and $\Ebb[\Ev'] = \mathbf{0}$.
    Then,
    \begin{align*}
        \mathbb{E}\Br{\lVert {\rm Q}(\Av\Sv){\rm Q'}\left({\rm Q}(\Av\Sv)^{\dagger}\Av\right) - \Av \rVert_{\rm F}^2 \; \mathds{1}_{\Ecal}} = \mathbb{E}_{\Sv}\Br{\Ebb_{\Qm,\Qm'}\Br{\lVert {\rm Q}(\Av\Sv){\rm Q'}\left({\rm Q}(\Av\Sv)^{\dagger}\Av\right) - \Av \rVert_{\rm F}^2} \mathds{1}_{\Ecal}}.
    \end{align*}
    Then,
    \begin{align}
    \label{eq:lplr-upper-bound-first-decomposition}
        &\mathbb{E}_{\Qm,\Qm'}\Br{\lVert \Qm(\Av\Sv)\Qm'\left(\Qm(\Av\Sv)^{\dagger}\Av\right) - \Av \rVert_{\rm F}^2}\mathds{1}_{\Ecal} \nonumber \\
        = \quad &\mathbb{E}_{\Qm,\Qm'}\Br{\lVert \Qm(\Av\Sv)\left(\Qm(\Av\Sv)^{\dagger}\Av + \Ev'\right) - \Av \rVert_{\rm F}^2}\mathds{1}_{\Ecal} \nonumber \\
        \stackrel{\rm (i)}{=} \quad &\underbrace{\mathbb{E}_{\Qm}\Br{\lVert \Qm(\Av\Sv)\Qm(\Av\Sv)^{\dagger}\Av - \Av \rVert_{\rm F}^2}\mathds{1}_{\Ecal}}_{\rm T_1} + \underbrace{\mathbb{E}_{\Qm,\Qm'}\Br{\lVert \Qm(\Av\Sv)\Ev' \rVert_{\rm F}^2 }\mathds{1}_{\Ecal}}_{\rm T_2}
    \end{align}
    The cross terms disappear in $\rm (i)$ because,
    \begin{align}
        &\Ebb_{\Qm, \Qm'}\Br{{\rm Tr}\left[\Paren{\Qm(\Av\Sv)\Qm(\Av\Sv)^{\dagger}\Av - \Av}^{\top}\Qm(\Av\Sv)\Ev'\right]}\mathds{1}_{\Ecal} \nonumber\\
        = \hspace{2mm} &{\rm Tr}\left(\Ebb_{{\rm Q}}\Br{\Paren{\Qm(\Av\Sv)\Qm(\Av\Sv)^{\dagger}\Av - \Av}^{\top}\Qm(\Av\Sv)\;\Ebb_{\rm Q'}[\Ev']}\right)\mathds{1}_{\Ecal} \stackrel{\rm (ii)}{=} 0,
    \end{align}
    where, $\rm (ii)$ follows from $\Ebb_{\rm Q'}[\Ev'] = \mathbf{0}$ as the quantizer $\rm Q'$ is unbiased when unsaturated.
    
    {\bf Analyzing the term $\rm T_1$}:
    Recall that $\Av_k$ is the best rank-$k$ approximation of $\Av$ obtained using computing the full-SVD of $\Av$ and subsequently making all singular values of $\Av$ less than $\sigma_k$ as $0$.
    We next analyze the first term in \eqref{eq:lplr-upper-bound-first-decomposition} as
    \begin{align}
    \label{eq:low-precision-low-rank-approximation-error-step1}
        \lVert {\rm Q}(\Av\Sv){\rm Q}(\Av\Sv)^{\dagger}\Av - \Av \rVert_{\rm F}^2 &\leq \lVert {\rm Q}(\Av\Sv)(\Av_k\Sv)^{\dagger}\Av_k - \Av \rVert_{\rm F}^2 \nonumber\\
        &= \lVert \Av^{\top}_k(\Sv^{\top}\Av_k^{\top})^{\dagger}{\rm Q}(\Sv^{\top}\Av^{\top}) - \Av^{\top} \rVert_{\rm F}^2 \nonumber \\
        &= \lVert \Av_k^{\top}\Xtv - \Av^{\top} \rVert_{\rm F}^2,
    \end{align}
    where $\Xtv \triangleq \argminimize_{\Xv} \lVert \Sv^{\top}\Av_k^{\top}\Xv - {\rm Q}(\Sv^{\top}\Av^{\top}) \rVert_{\rm F}^2$.
    This minimization problem is a Gaussian sketched variant of a generalized least squares problem with the response matrix quantized as seen in App. \ref{app:sketched-least-squares-problem-with-quantized-response}.
    Corresponding to the notations in App. \ref{app:sketched-least-squares-problem-with-quantized-response}, here we have $\Sv^{\top}$ instead of $\Gv$, $\Av_{k}^{\top}$ instead of $\Phiv$, and $\Av^{\top}$ instead of $\Yv$. 
    As a consequence of Lemma \ref{lem:least-squares-error-sketched-quantized}, we have the upper bound,
    \begin{equation}
        \label{eq:low-precision-low-rank-approximation-error-step2}
        \Ebb_{\Qm}\Br{\lVert {\rm Q}(\Av\Sv){\rm Q}(\Av\Sv)^{\dagger}\Av - \Av \rVert_{\rm F}^2}\mathds{1}_{\Ecal} \leq \frac{m-1}{m-k-1}\lVert \Av_k - \Av \rVert_{\rm F}^2 + \frac{n\Delta^2}{4}\frac{\sigma_1^2}{\sigma_k^2}\frac{m^2}{(d - m -1)}
    \end{equation}
    
    {\bf Analyzing the term $\rm T_2$}:
    The term $\rm T_2$ can be written as,
    \begin{align}
    \label{eq:low-precision-low-rank-approximation-error-step3}
        &\Ebb_{\Qm, \Qm'}\Br{\norm{{\rm Q}(\Av\Sv)\Ev'}_{\rm F}^2}\mathds{1}_{\Ecal} \nonumber\\
        = \quad &\Ebb_{\Qm, \Qm'}\Br{\norm{\Paren{\Av\Sv + \Ev}\Ev'}_{\rm F}^2}\mathds{1}_{\Ecal} \nonumber\\
        = \quad &\Ebb_{\Qm'}\Br{\norm{\Av\Sv\Ev'}_{\rm F}^2}\mathds{1}_{\Ecal} + \Ebb_{\Qm,\Qm'}\Br{\fronormsq{\Ev\Ev'}}\mathds{1}_{\Ecal} + {\rm Tr}\Paren{\Ebb_{\Qm'}\Br{\Paren{\Av\Sv\Ev'}^{\top}\Ebb_{\Qm}[\Ev]\;\Ev'}}\mathds{1}_{\Ecal} \nonumber \\
        \stackrel{\rm (i)}{=} \quad &\Ebb\Br{\norm{\Av\Sv\Ev'}_{\rm F}^2}\mathds{1}_{\Ecal} + \Ebb\Br{\fronormsq{\Ev\Ev'}}\mathds{1}_{\Ecal},
    \end{align}
    where, $\rm (i)$ follows once again since $\Ebb_{\rm Q'}[\Ev']\mathds{1}_{\Ecal} = \mathbf{0}$.
    The first term in \eqref{eq:low-precision-low-rank-approximation-error-step3} can be rewritten as
    \begin{align}
        \Ebb\Br{\fronormsq{\Av\Sv\Ev'}\mathds{1}_{\Ecal}} &= \Ebb\Br{{\rm Tr}\Paren{\Ev'^{\top}\Sv^{\top}\Av^{\top}\Av\Sv\Ev'}\mathds{1}_{\Ecal}} \nonumber \\
        &= \Ebb\Br{{\rm Tr}\Paren{\Sv^{\top}\Av^{\top}\Av\Sv\Ev'\Ev'^{\top}}\mathds{1}_{\Ecal}} \nonumber \\
        &= {\rm Tr}\Paren{\Ebb_{\Sv}\Br{\Sv^{\top}\Av^{\top}\Av\Sv \; \Ebb_{\rm Q'}\Br{\Ev'\Ev'^{\top}}\mathds{1}_{\Ecal}}} \nonumber \\
        &= \Ebb_{\Sv}\Br{{\rm Tr}\Paren{\Sv^{\top}\Av^{\top}\Av\Sv\;\Ebb_{\rm Q'}\Br{\Ev'\Ev'^{\top}}\mathds{1}_{\Ecal}}}
    \end{align}

    Similar to \eqref{eq:E'E_is_diagonal} in the proof of Lemma \ref{lem:least-squares-error-sketched-quantized}, $\Ebb_{\rm Q'}\Br{\Ev'\Ev'^{\top}}\mathds{1}_{\Ecal}$ is a diagonal matrix with
    \begin{align}
    \label{eq:E'E'T_is_diagonal}
        \Ebb_{\Qm'}\Br{\Paren{\Ev'\Ev'^{\top}}_{ij}}\mathds{1}_{\Ecal} = \sum_{k=1}^{d}\Ebb_{\Qm'}\Br{E'_{ik}E'_{jk}}\mathds{1}_{\Ecal} = 
        \begin{cases}
        d\cdot{\rm Var}(E_{ik}) \leq \frac{d\Delta'^2}{4} \quad \text{ for } i = j \\
        0 \qquad \text{ for } i \neq j.
        \end{cases}
    \end{align}
 
    Let us denote $\Gammav \triangleq \Sv^{\top}\Av^{\top}\Av\Sv$.
    Then using the fact that $\mathds{1}_{\Ecal} \leq 1$, we get,
    \begin{align}
        \Ebb_{\Sv}\Br{{\rm Tr}\Paren{\Gammav \; \Ebb_{\rm Q'}\Br{\Ev'\Ev'^{\top}}\mathds{1}_{\Ecal}}} &= \Ebb_{\Sv}\Br{\sum_{i=1}^{m}\Gamma_{ii}\Paren{\Ebb_{\rm Q'}\Br{\Ev'\Ev'^{\top}}}_{ii}}\mathds{1}_{\Ecal} \nonumber \\
        &\leq \frac{d\Delta'^2}{4}\;{\rm Tr}\Paren{\Ebb_{\Sv}\Gammav} \nonumber\\
        &= \frac{d\Delta'^2}{4}\;{\rm Tr}\Paren{\Av^{\top}\Av\Ebb_{\Sv}\Br{\Sv\Sv^{\top}}} \stackrel{\rm (i)}{=} \frac{d\Delta'^2}{4}\fronormsq{\Av}.
    \end{align}
    Here, the last equality follows as $\Ebb_{\Sv}\Br{\Sv\Sv^{\top}} = \Iv_d$.
    This is because the $(i, j)^{\rm th}$ entry of $\Ebb_{\Sv}\Br{\Sv\Sv^{\top}}$ is
    \begin{align*}
        \Ebb\Br{\Paren{\Sv\Sv^{\top}}_{ij}} = \sum_{k=1}^{m}\Ebb\Br{S_{ik}S_{jk}} = \begin{cases}
            m\;\Ebb\Br{S_{ik}^2} = m\;{\rm Var}\Paren{S_{ik}} = 1 \quad \text{ for } \quad i = j \\
            \sum_{i=1}^{m}\Ebb[S_{ik}]\Ebb[S_{jk}] = 0 \qquad \hspace{-1mm}\qquad \text{ for } \quad i \neq j.
        \end{cases}
    \end{align*}
    
    Furthermore, the second term $\Ebb_{\Qm, \Qm'}\fronormsq{\Ev\Ev'^{\top}}\mathds{1}_{\Ecal}$ can be simplified as follows,
    \begin{align}
        \Ebb_{\Qm, \Qm'}\fronormsq{\Ev\Ev'}\mathds{1}_{\Ecal} = \Ebb_{\Qm, \Qm'}\Br{{\rm Tr}\Paren{\Ev'^{\top}\Ev^{\top}\Ev\Ev'}}\mathds{1}_{\Ecal} &= \Ebb_{\Qm}\Br{{\rm Tr}\Paren{\Ev^{\top}\Ev \; \Ebb_{\rm Q'}\Paren{\Ev'\Ev'^{\top}}}}\mathds{1}_{\Ecal} \nonumber \\
        &\stackrel{\rm (i)}{=} \Ebb_{\Qm}\Br{\sum_{i=1}^{m}\Paren{\Ev^{\top}\Ev}_{ii}\Paren{\Ebb_{\rm Q'}\Br{\Ev'\Ev'^{\top}}}_{ii}}\mathds{1}_{\Ecal} \nonumber \\
        &\stackrel{\rm (ii)}{\leq} \frac{d\Delta'^2}{4}\sum_{i=1}^{m}\Ebb_{\Qm}\Br{\Paren{\Ev^{\top}\Ev}_{ii}}\mathds{1}_{\Ecal} \nonumber \\
        &\leq \frac{nmd\Delta^2\Delta'^2}{16}.
    \end{align}
    
    Here, $\rm (i)$ follows since $\Ebb_{\rm Q'}\Br{\Ev'\Ev'^{\top}}\mathds{1}_{\Ecal}$ is a diagonal matrix as seen before, and $\rm (ii)$ follows from the upper bound derived on $\Ebb\Br{\Paren{\Ev^{\top}\Ev}_{ii}}\mathds{1}_{\Ecal}$ in \eqref{eq:E'E_is_diagonal}.
    
    Combining all of the above, the approximation error of our low-precision low-rank approximation scheme can be upper bounded as,
    
    \begin{align}
    \label{eq:fron-norm-error-upper-bound-step1}
        &\mathbb{E}\lVert {\rm Q}(\Av\Sv){\rm Q'}\left({\rm Q}(\Av\Sv)^{\dagger}\Av\right) - \Av \rVert_{\rm F}^2 \nonumber \\
        & \hspace{2mm} \leq \underbrace{\frac{(m-1)}{(m-k-1)}\fronormsq{\Av_k - \Av}}_{\rm T_3} + \underbrace{\frac{n\Delta^2}{4}\frac{\sigma_1^2}{\sigma_k^2}\frac{m^2}{(d - m -1)}}_{\rm T_4} + \underbrace{\frac{d\Delta'^2}{4}\fronormsq{\Av}}_{\rm T_5} + \underbrace{\frac{nmd\Delta^2\Delta'^2}{16}}_{\rm T_6}.
    \end{align}           
    Here, the first term $\rm T_3$ is the low-rank approximation error, whereas the remaining terms, i.e., $\rm T_4, T_5$ and $\rm T_6$ appear due to quantization.
    We now analyze each of them separately.
    
    {\bf Analyzing the term $\rm T_4$}: Since we choose the dynamic range of quantizer $\rm Q$ to be $\Rm\sqrt{\frac{2\log\Paren{\frac{16\Rm^2n^2m}{\epsilon}}}{m}}$ (ref. Lemma \ref{lemma:max-norm-first-quantizer-input}), we have
    \begin{align}
    \label{eq:upper-bound-term-T4}
        \frac{n\Delta^2}{4}\frac{\sigma_1^2}{\sigma_k^2}\frac{m^2}{(d - m -1)} \leq \frac{2nm\Rm^2\kappa(\Av_k)^2}{\Paren{2^{\Bm} - 1}^2(d - m -1)}\log\Paren{\frac{16\Rm^2n^2m}{\epsilon}} \stackrel{\rm (i)}{\leq} \frac{\epsilon}{4}.
    \end{align}
    
    Here, we can ensure $\rm (i)$ holds true, i.e., that this term does not exceed $\epsilon/4$ if we set the bit-budget $\Bm$ to satisfy,
    \begin{align}
        \label{eq:bit-budget-Q-lower-bound}
        \Bm \geq \log_2\Paren{\frac{2\Rm\kappa(\Av_k)\sqrt{2n}}{\sqrt{\epsilon\Paren{\gamma - 1 - \frac{1}{m}}}}\sqrt{\log\Paren{\frac{16\Rm^2n^2m}{\epsilon}}} + 1}.
    \end{align}
    
    {\bf Analyzing the term $\rm T_5$}:
    Since we choose the dynamic range of the quantizer $\Qm'$ to be equal to $\frac{2\kappa}{\sqrt{\gamma} -1 -t}$ (ref. to Lemma \ref{lem:max-norm-second-quantizer-input}), where $t$ and $\kappa$ are defined as in \eqref{eq:t-and-kappa-definition}, we have
    \begin{align}
    \label{eq:upper-bound-term-T5}
        \frac{d\Delta'^2}{4}\fronormsq{\Av} \leq \frac{nd\Rm^2}{\Paren{2^{\Bm'} - 1}^2}\frac{4\kappa^2}{\Paren{\sqrt{\gamma} - 1 - t}^2} \stackrel{\rm (i)}{\leq} \frac{\epsilon}{4},
    \end{align}
    where we have made use of Asm. \ref{asm:bounded-row-norm}.
    Once again, in order to ensure $\rm (i)$, it suffices to choose the bit-budget $\Bm'$ to be 
    \begin{align}
        \label{eq:bit-budget-Q'-lower-bound}
        \Bm' \geq \log_2\Paren{\frac{4\Rm\kappa}{\Paren{\sqrt{\gamma} - 1 - t}}\sqrt{\frac{n d}{\epsilon}} + 1}
    \end{align}
    Note that in order for the dynamic range of quantizer $\Qm'$ to hold true in Lemma \ref{lem:max-norm-second-quantizer-input}, we also require $\Bm$ to satisfy \eqref{eq:lower-bound_bit-budget-Q_1}, i.e., 
    \begin{small}
        \begin{equation}
        \label{eq:bit-budget-Q-lower-bound-2}
            \Bm \geq \log_2\Paren{\frac{4C\Rm}{{\rm max}\left\{\sigma_r, \sigma_k - \sigma_{k+1}\Paren{\frac{\sqrt{\gamma} + 1+ t}{\sqrt{\gamma} - 1 - t}}\right\}\log 2}\Paren{\frac{\sqrt{\gamma} + 1 + t/\sqrt{2}}{\sqrt{\gamma} - 1 - t}}\sqrt{2\log\Paren{\frac{16\Rm^2n^2m}{\epsilon}}}+1}.
        \end{equation}
    \end{small}

    {\bf Analyzing the term $\rm T_6$}: Using bit-budgets $\Bm$ and $\Bm'$ as in \eqref{eq:bit-budget-Q-lower-bound} and \eqref{eq:bit-budget-Q'-lower-bound}, the final term is
    \begin{align}
    \label{eq:upper-bound-term-T6}
        nmd\frac{\Delta^2}{4}\frac{\Delta'^2}{4} &\leq nmd \frac{\epsilon}{4}\Paren{\frac{\gamma - 1 -\frac{1}{m}}{nm\kappa(\Av_k)}}\frac{\epsilon}{4nd\Rm^2} \leq \frac{\epsilon^2\gamma}{16n\Rm^2\kappa(\Av_k)^2} \stackrel{\rm (i)}{\leq} \frac{\epsilon}{4}
    \end{align}
    Here, $\rm (i)$ is ensured by choosing $\epsilon$ to be sufficiently small -- specifically, $\epsilon \leq \frac{4n\Rm^2\kappa(\Av)^2}{\gamma}$.

    {\bf Tying it all together}: Combining \eqref{eq:upper-bound-term-T4}, \eqref{eq:upper-bound-term-T5}, and \eqref{eq:upper-bound-term-T6}, with the bit-budgets set appropriately as dictated by \eqref{eq:bit-budget-Q-lower-bound}, \eqref{eq:bit-budget-Q-lower-bound-2}, and \eqref{eq:bit-budget-Q'-lower-bound}, eq. \eqref{eq:fron-norm-error-upper-bound-step1} simplifies to
    \begin{equation}
        \mathbb{E}\lVert {\rm Q}(\Av\Sv){\rm Q'}\left({\rm Q}(\Av\Sv)^{\dagger}\Av\right) - \Av \rVert_{\rm F}^2 \leq \Paren{1 + \frac{k}{m-k-1}}\fronormsq{\Av_k - \Av} + \frac{3\epsilon}{4}.
    \end{equation}
    This completes the proof.

\end{proof}

\subsection{LPLR approximation error: Proof of Thm. \ref{thm:lplr-approximation-result}}
\label{subapp:lplr-approximation-result}

We now formally state our main approximation result. 

\begin{theorem}
\label{thm:lplr-approximation-error-formal}
    {\bf (LPLR approximation error (formal))} 
    Let our matrix $\Av \in \Real^{n \times d}$ have non-zero singular values $\sigma_1, \ldots, \sigma_k, \sigma_{k+1}, \ldots, \sigma_r$, where $r = {\rm rank}(\Av)$, and bounded row norms $\norm{\av^{(i)}} \leq \Rm$.
    Let $\kappa(\Av) = \sigma_1/\sigma_r$ and $\kappa(\Av_k) = \sigma_1/\sigma_k$ respectively be the condition numbers of $\Av$ and the best rank-$k$ approximation of $\Av$, and let us denote
    \begin{align*}
        &t = \sqrt{\frac{2\log\Paren{\frac{32n\Rm^2}{\epsilon}}}{m}}, \;\;\kappa = {\rm min}\left\{\kappa(\Av), \frac{\kappa(\Av_k)}{1 - \frac{\sigma_{k+1}}{\sigma_k}\Paren{\frac{\sqrt{\gamma} + 1 + t}{\sqrt{\gamma} - 1 - t}}}\right\},
    \end{align*}
    for some sufficiently small $\epsilon$ that satisfies $0 < \epsilon \leq \frac{4n\Rm^2\kappa(\Av)^2}{\gamma}$.
    Here $\gamma = d/m$ is the aspect ratio of the sketching matrix $\Sv \in \Real^{d \times m}$ with $S_{ij} \widesim{i.i.d.} \Ncal\Paren{0, \frac{1}{m}}$.
    Suppose the dynamic ranges of the quantizers $\Qm$ and $\Qm'$ are set to $\Rm\sqrt{\frac{2\log\Paren{\frac{16\Rm^2n^2m}{\epsilon}}}{m}}$ and $\frac{2\kappa}{\sqrt{\gamma} - 1 -t}$ respectively, and suppose their bit-budgets $\Bm$ and $\Bm'$ satisfy
    \begin{equation*}
        \Bm \geq {\rm max}\{\Bm_1, \Bm_2\} \;\;\text{ and, }\;\; \Bm' \geq \log_2\Paren{\frac{4\Rm\kappa}{\Paren{\sqrt{\gamma} - 1 - t}}\sqrt{\frac{n d}{\epsilon}} + 1},
    \end{equation*}
    where,
    \begin{equation*}
        \Bm_1 = \log_2\Paren{\frac{2\Rm\kappa(\Av_k)\sqrt{2n}}{\sqrt{\epsilon\Paren{\gamma - 1 - \frac{1}{m}}}}\sqrt{\log\Paren{\frac{16\Rm^2n^2m}{\epsilon}}} + 1},
    \end{equation*}
    and $\Bm_2$ is equal to
    \begin{equation*}
        \log_2\Paren{\frac{4C\Rm}{{\rm max}\left\{\sigma_r, \sigma_k - \sigma_{k+1}\Paren{\frac{\sqrt{\gamma} + 1+ t}{\sqrt{\gamma} - 1 - t}}\right\}\log 2}\Paren{\frac{\sqrt{\gamma} + 1 + t/\sqrt{2}}{\sqrt{\gamma} - 1 - t}}\sqrt{2\log\Paren{\frac{16\Rm^2n^2m}{\epsilon}}}+1}.
    \end{equation*}
    Then, the low-precision and low-rank factorization returned by Alg. \ref{algo:lplr-gaussian-pseudocode} satisfies
    \begin{equation}
        \Ebb \fronormsq{\Lv\Rv - \Av} \leq \Paren{1 + \frac{k}{m-k-1}}\fronormsq{\Av_k - \Av} + \epsilon,
    \end{equation}
    where the expectation is over the random sketching matrix $\Sv$, as well as the inherent stochasticity from quantizers $\Qm$ and $\Qm'$.
\end{theorem}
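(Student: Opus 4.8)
The plan is to reduce the theorem to Lemma~\ref{lem:lplr-approximation-result-quantizers-unsaturated} via a conditioning argument on whether the two quantizers saturate, and then to control the saturation event using the dynamic-range lemmas. Recalling that $\Wv^* = \pinv{\Qm(\Av\Sv)}\Av$ is the closed-form solution computed in Alg.~\ref{algo:lplr-gaussian-pseudocode}, the returned factorization is $\Lv\Rv = \Qm(\Av\Sv)\,\Qm'\big(\pinv{\Qm(\Av\Sv)}\Av\big)$ when both quantizers remain within their dynamic ranges, and $\Lv\Rv = \mathbf{0}$ otherwise. Let $\Ecal$ be the event that both $\Qm$ and $\Qm'$ are unsaturated. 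The first step is the law of total expectation,
\begin{equation*}
    \Ebb\fronormsq{\Lv\Rv - \Av} = \Ebb\Br{\fronormsq{\Lv\Rv - \Av}\,\mathds{1}_{\Ecal}} + \Ebb\Br{\fronormsq{\Lv\Rv - \Av}\,\mathds{1}_{\Ecal^c}},
\end{equation*}
and I would bound the two terms separately.

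For the first term, on $\Ecal$ we have $\Lv\Rv = \Qm(\Av\Sv)\,\Qm'\big(\pinv{\Qm(\Av\Sv)}\Av\big)$, so this term is precisely $\Ebb\big[\|\Qm(\Av\Sv)\Qm'(\pinv{\Qm(\Av\Sv)}\Av) - \Av\|_{\rm F}^2\,\mathds{1}_{\Ecal}\big]$. Since the dynamic ranges for $\Qm$ and $\Qm'$ specified in the theorem are exactly those of Lemma~\ref{lem:lplr-approximation-result-quantizers-unsaturated}, the bit-budget hypotheses $\Bm \geq \max\{\Bm_1,\Bm_2\}$ and the stated lower bound on $\Bm'$ coincide with its hypotheses, and $\epsilon$ satisfies the same smallness requirement, Lemma~\ref{lem:lplr-approximation-result-quantizers-unsaturated} applies verbatim and gives a bound of $\big(1 + \tfrac{k}{m-k-1}\big)\fronormsq{\Av_k - \Av} + \tfrac{3\epsilon}{4}$ on this term.

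For the second term, on $\Ecal^c$ the algorithm outputs $\Lv\Rv = \mathbf{0}$, so $\fronormsq{\Lv\Rv - \Av} = \fronormsq{\Av} \le n\Rm^2$, where the inequality uses Asm.~\ref{asm:bounded-row-norm} (each of the $n$ rows has squared norm at most $\Rm^2$). It then remains to bound $\Prob(\Ecal^c)$. The dynamic range of $\Qm$ is set to the high-probability upper bound on $\maxnorm{\Av\Sv}$ from Lemma~\ref{lemma:max-norm-first-quantizer-input}, so $\Qm$ is saturated with probability at most $\tfrac{\epsilon}{8n\Rm^2}$; the dynamic range of $\Qm'$ is set to $\tfrac{2\kappa}{\sqrt{\gamma}-1-t}$, the upper bound on $\maxnorm{\pinv{\Qm(\Av\Sv)}\Av}$ from Lemma~\ref{lem:max-norm-second-quantizer-input}, whose stated failure probability $\tfrac{\epsilon}{4n\Rm^2}$ already absorbs the event that $\Qm$ saturated (it was obtained by union-bounding the $\Qm$-saturation event together with two Gaussian singular-value concentration events). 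Hence the event ``$\Qm$ unsaturated, the singular-value concentration of $\Sv$ holds, and the max-norm bound on $\pinv{\Qm(\Av\Sv)}\Av$ holds'' is contained in $\Ecal$ and has probability at least $1 - \tfrac{\epsilon}{4n\Rm^2}$, giving $\Prob(\Ecal^c) \le \tfrac{\epsilon}{4n\Rm^2}$. Therefore the second term is $\fronormsq{\Av}\,\Prob(\Ecal^c) \le n\Rm^2 \cdot \tfrac{\epsilon}{4n\Rm^2} = \tfrac{\epsilon}{4}$, and adding the two contributions yields the claimed bound $\big(1 + \tfrac{k}{m-k-1}\big)\fronormsq{\Av_k - \Av} + \epsilon$.

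All the genuine difficulty is upstream of this theorem: it lives in the choices of dynamic range (Lemmas~\ref{lemma:max-norm-first-quantizer-input} and~\ref{lem:max-norm-second-quantizer-input}, which exploit the equalization property of $\Sv$ and the control of $\specnorm{\pinv{\Qm(\Av\Sv)}}$ via $\sigma_{\rm min}(\Av\Sv)$) and in the sketched-least-squares-with-quantized-response analysis (Lemma~\ref{lem:least-squares-error-sketched-quantized}) that drives Lemma~\ref{lem:lplr-approximation-result-quantizers-unsaturated}. Given those ingredients, the only care needed here is bookkeeping: counting the saturation probabilities without double-charging so that the residual $\epsilon/4$ budget is respected --- which is precisely why Lemma~\ref{lem:max-norm-second-quantizer-input} is stated with the $\Qm$-saturation event folded in --- and using the crude but sufficient bound $\fronormsq{\Av} \le n\Rm^2$ on the error incurred when the trivial factorization is returned.
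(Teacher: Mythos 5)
Your proposal is correct and is essentially identical to the paper's own proof: the paper also conditions on the event $\Ecal$ that both quantizers are unsaturated, applies Lemma~\ref{lem:lplr-approximation-result-quantizers-unsaturated} on $\Ecal$, bounds the error on $\Ecal^c$ by $\fronormsq{\Av}\le n\Rm^2$ via Asm.~\ref{asm:bounded-row-norm}, and charges $\Prob(\Ecal^c)\le\tfrac{\epsilon}{4n\Rm^2}$ to obtain the residual $\epsilon/4$. Your bookkeeping of the saturation probabilities (noting that the failure probability of Lemma~\ref{lem:max-norm-second-quantizer-input} already subsumes that of Lemma~\ref{lemma:max-norm-first-quantizer-input}) is in fact slightly cleaner than the paper's stated union bound, but arrives at the same number.
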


\begin{proof}
    For the purpose of analysis, we assume that Alg. \ref{algo:lplr-gaussian-pseudocode} returns $\Lv = \mathbf{0}$ and $\Rv = \mathbf{0}$ if either quantizer $\Qm$ or $\Qm'$ gets saturated. 
    In practical implementation, it can easily be checked if either quantizer $\Qm$ or $\Qm'$ gets saturated or not, and the algorithm can be repeated again with a new realization of the sketching matrix $\Sv$ and stochastic quantizer $\Qm$.
    Since the choice of dynamic ranges for $\Qm$ and $\Qm'$ ensures that they remain unsaturated with a high probability, ``reasonably few" realizations of $\Sv$ would suffice to get at least one good realization in which $\Qm$ and $\Qm'$ are unsaturated.
    
    However, in what follows, we assume that that if either quantizer $\Qm$ or $\Qm'$ gets saturated, then the algorithm returns $\mathbf{0}$ as an estimate of $\Av$, resulting in a Frobenius norm error of $\fronorm{\Av}$.
    Since this happen with a very small probability, we show that its contribution to the expected Frobenius norm error of Alg. \ref{algo:lplr-gaussian-pseudocode} is small as well.
    With this in mind, the expected approximation error can be written as
    \begin{align}
    \label{eq:lplr-approximation-error-step-1}
        \Ebb \fronormsq{\Lv\Rv - \Av} &= \Ebb\Br{\fronormsq{\Qm\Paren{\Av\Sv}\Qm'\Paren{\Qm\Paren{\Av\Sv}^{\dagger}\Av} - \Av}\mathds{1}_{\Ecal}} + \Ebb\Br{\fronormsq{\Av}\mathds{1}_{\Ecal^C}} \nonumber \\
        &\stackrel{\rm (i)}{\leq} \Ebb\Br{\fronormsq{\Qm\Paren{\Av\Sv}\Qm'\Paren{\Qm\Paren{\Av\Sv}^{\dagger}\Av} - \Av}\mathds{1}_{\Ecal}} + n\Rm^2\Prob\Paren{\mathds{1}_{\Ecal^C}}.
    \end{align}
    Inequality $\rm (i)$ follows from Asm. \ref{asm:bounded-row-norm} and the fact that the expectation of indicator function of an event is the probability of the event.
    From Lemma \ref{lem:lplr-approximation-result-quantizers-unsaturated}, the first term can be upper bounded as:
    \begin{equation}
        \mathbb{E}\Br{\left\lVert {\rm Q}(\Av\Sv){\rm Q'}\left({\rm Q}(\Av\Sv)^{\dagger}\Av\right) - \Av \right\rVert_{\rm F}^2 \; \mathds{1}_{\Ecal}} \leq \Paren{1 + \frac{k}{m-k-1}}\fronormsq{\Av_k - \Av} + \frac{3\epsilon}{4}.
    \end{equation}

    Since Lemmas \ref{lemma:max-norm-first-quantizer-input} and \ref{lem:max-norm-second-quantizer-input} state the probabilities of quantizers $\Qm$ and $\Qm'$ being unsaturated, $\Prob\Paren{\mathds{1}_{\Ecal^C}}$ can be obtained by an application of union bound as
    \begin{equation}
        \Pr\Paren{\mathds{1}_{\Ecal^C}} \leq \frac{\epsilon}{8n\Rm^2} + \frac{\epsilon}{8n\Rm^2} = \frac{\epsilon}{4n\Rm^2}.
    \end{equation}

    Then, \eqref{eq:lplr-approximation-error-step-1} can be written as:
    \begin{equation}
        \Ebb \fronormsq{\Lv\Rv - \Av} \leq \Paren{1 + \frac{k}{m-k-1}}\fronormsq{\Av_k - \Av} + \epsilon.
    \end{equation}
    This completes the proof.   
\end{proof}

\subsection{Informal version of Thm. \ref{thm:lplr-approximation-error-formal}}
\label{subapp:derivation-informal-version-lplr-approx-result}

From Thm. \ref{thm:lplr-approximation-error-formal}, we can get a (simplified) asymptotic dependence of the bit-budgets $\Bm$ and $\Bm'$.
We have $\Bm \geq {\rm max}\{\Bm_1, \Bm_2\}$,
\begin{align}
\label{eq:asymptotic_B1}
    \Bm_1 &= \log_2\Paren{\frac{2\Rm\kappa(\Av_k)\sqrt{2n}}{\sqrt{\epsilon\Paren{\gamma - 1 - \frac{1}{m}}}}\sqrt{\log\Paren{\frac{16\Rm^2n^2m}{\epsilon}}} + 1} \nonumber\\
    &\stackrel{\rm (i)}{=} \frac{1}{2}\log_2\Paren{\frac{\kappa(\Av_k)^2nm}{\epsilon\Paren{d - m - 1}}\log\Paren{\frac{mn^2}{\epsilon}}} \nonumber \\
    &\stackrel{\rm (ii)}{=} \frac{1}{2}\log_2\Paren{\frac{\kappa(\Av_k)^2}{\epsilon}\frac{nm}{d}\log\Paren{\frac{mn^2}{\epsilon}}},
\end{align}
where we have ignored the constant terms inside the $\log_2(\cdot)$ in $\rm (i)$ and considered the regime $m \ll d$ in $\rm (ii)$.
Furthermore, $\Bm_2$ is
{
\small
\begin{align}
\label{eq:asymptotic_B2}
    \Bm_2 &= \log_2\Paren{\frac{4C\Rm}{{\rm max}\left\{\sigma_r, \sigma_k - \sigma_{k+1}\Paren{\frac{\sqrt{\gamma} + 1+ t}{\sqrt{\gamma} - 1 - t}}\right\}\log 2}\Paren{\frac{\sqrt{\gamma} + 1 + t/\sqrt{2}}{\sqrt{\gamma} - 1 - t}}\sqrt{2\log\Paren{\frac{16\Rm^2n^2m}{\epsilon}}}+1} \nonumber \\
    & \stackrel{\rm (i)}{=} \frac{1}{2}\log_2\Paren{\frac{1}{\Paren{{\rm max}\left\{\sigma_r, \sigma_k - \sigma_{k+1}c'\right\}}^2}\log\Paren{\frac{mn^2}{\epsilon}}}.
\end{align}
}
where $c'$ is approximately a constant. 
Here, $\rm (i)$ follows because when $m \ll d$, we have
\begin{align}
    \frac{\sqrt{\gamma} + 1+ t}{\sqrt{\gamma} - 1 - t} = \frac{\sqrt{d} + \sqrt{m} + \Om\Paren{\sqrt{\log\Paren{\frac{n}{\epsilon}}}}}{\sqrt{d} - \sqrt{m} - \Om\Paren{\sqrt{\log\Paren{\frac{n}{\epsilon}}}}} = \Om(1).
\end{align}
Similarly, $\frac{\sqrt{\gamma} + 1 + t/\sqrt{2}}{\sqrt{\gamma} - 1 - t} = \Om(1)$.
Comparing \eqref{eq:asymptotic_B1} and \eqref{eq:asymptotic_B2}, we have
\begin{equation}
    \Bm \geq \frac{1}{2}\log_2\Paren{\frac{\kappa(\Av_k)^2}{\epsilon}\frac{nm}{d}\log\Paren{\frac{mn^2}{\epsilon}}}.
\end{equation}

Moreover, the bit-budget $\Bm'$ is
\begin{align}
    \Bm' &\geq \log_2\Paren{\frac{4\Rm\kappa}{\Paren{\sqrt{\gamma} - 1 - t}}\sqrt{\frac{n d}{\epsilon}} + 1} \nonumber \\
    &\stackrel{\rm (i)}{=} \log_2\Paren{\frac{\kappa \sqrt{ndm}}{\sqrt{\epsilon}\Paren{\sqrt{d} - \sqrt{m} - \sqrt{\frac{n}{\epsilon}}}}} \stackrel{\rm (ii)}{=} \frac{1}{2}\log_2\Paren{\kappa^2\frac{nm}{\epsilon}},
\end{align}
where once again, to get $\rm (i)$ we have ignored the constants, and $\rm (ii)$ holds true when $m \ll d$.
Moreover, suppose $n \approx d$ 
(if they are not equal, then the total bit-budget should be computed as a weighted average, since the $\Lv$ has  $nm$ elements and the second low-rank factor has $md$ elements).
Then, for some constant $c_3$ that depends on $\Rm$, the total bit-budget is
\begin{equation}
    \Bm \geq \frac{1}{4}\log_2\Paren{\frac{c_3^2\kappa(\Av_k)^2\kappa^2}{\epsilon^2}\frac{n^2m^2}{d}\log\Paren{\frac{mn^2}{\epsilon}}} \nonumber \\
    = \frac{1}{2}\log_2\Paren{\frac{c_3\kappa(\Av_k)\kappa}{\epsilon}\frac{nm}{\sqrt{d}}\sqrt{\log\Paren{\frac{mn^2}{\epsilon}}}}.
\end{equation}

Furthermore, the dynamic range of quantizer $\Qm$ is $c_1\sqrt{\frac{\log\Paren{\frac{n}{\epsilon}}}{m}}$ and that of quantizer $\Qm'$ when $m \ll d$ is
\begin{align*}
    \frac{2\kappa}{\sqrt{\gamma} - 1 - t} = 2\sqrt{\frac{m}{d}}{\rm min}\left\{\kappa(\Av), \frac{\kappa(\Av_k)}{1 - c_4\frac{\sigma_{k+1}}{\sigma_k}}\right\},
\end{align*}
for constants $c_1$, $c_2$ and $c_4$ that depend on $\Rm$, where additive logarithmic terms are ignored with respect to $m$.
This gives us the result in Thm. \ref{thm:lplr-approximation-result}.

\section{Direct-SVD Quantization: Approximation error analysis}
\label{app:direct-svd-quant-approximation-error-analysis}

In this section, we consider the baseline scheme for obtaining low-precision low-rank factorization by first computing the optimal low-rank factorization using SVD and individually quantizing the low-rank factors with uniform scalar quantizers.
For any given matrix, $\Av \in \Real^{n \times d}$, we compute the full SVD as $\Av = \Uv\Sigmav\Vv^{\top}$.
The best (unquantized) rank-$k$ approximation can be obtained from this by considering the singular vectors corresponding to the top-$k$ singular vectors, and constructing the matrix $\Av_k = \Paren{\Uv\Sigmav}_k\Vv_k^{\top}$.
Here, $\Paren{\Uv\Sigmav}_k \in \Real^{n \times k}$ is the sub-matrix obtained by selecting the first $k$ columns of $\Uv\Sigmav$, and $\Vv_k^{\top} \in \Real^{k \times d}$ is the sub-matrix obtained by selecting the first $k$ rows of $\Vv^{\top}$.
Subsequently, these low-rank factors are quantized with uniform scalar quantizers $\Qm$ and $\Qm'$ (having bit-budgets $\Bm$ and $\Bm'$ respectively).
The algorithm pseudocode is given in Alg. \ref{algo:direct-svd-quant-pseudocode} and we provide an upper bound to the approximation error in Proposition \ref{prop:direct-svd-quant-approx-error}.
We use the notation $\Utv = \Uv\Sigmav \in \Real^{n \times d}$.

\begin{algorithm}
    \caption{{\bf Direct-SVD quant.}: Directly quantizing the optimal low-rank factorization}
    \label{algo:direct-svd-quant-pseudocode}
    
    \SetAlgoLined
    \SetKwInOut{Input}{Input}
    \SetKwInOut{Output}{Output}
    
    \Input{Matrix $\Av \in \Real^{n \times d}$, target rank $k$, Quantizers $\Qm$ and $\Qm'$}
    \Output{Factorization: $\Lv\Rv$ where $\Lv \in \Real^{n \times k}$, $\Rv \in \Real^{k \times d}$}

    \BlankLine
    Compute SVD and get $\Av = \Uv\Sigmav\Vv^{\top}$ \\
    Extract the top-$k$ left (scaled) and right singular vectors and get $\Utv_k = \Paren{\Uv\Sigmav}_k$ and $\Vv_k^{\top}$ \\
    Quantize the factors individually and get $\Lv \gets \Qm(\Utv_k)$ and $\Rv \gets \Qm'\Paren{\Vv_k}$ \\
    
    \BlankLine
    \Return{$\Lv \in \Real^{n \times k}, \Rv \in \Real^{k \times d} $}
\end{algorithm}

\begin{proposition}
\label{prop:direct-svd-quant-approx-error}
{\bf (Direct-SVD quant. approximation error (formal))}
    Let our matrix $\Av \in \Real^{n \times d}$ have maximum singular value $\sigma_1$.
    Suppose our target rank is $k$, and for some small $\epsilon$ such that $0 < \epsilon \leq 3k\sigma_1^2$, the dynamic ranges of quantizers $\Qm$ and $\Qm'$ are set to be $\sigma_1$ and $1$ respectively with bit-budgets satisfying
    \begin{equation*}
        \Bm \geq \log_2\Paren{\sigma_1\sqrt{\frac{3nk}{\epsilon}} + 1} \quad \text{ and, } \quad \Bm' \geq \log_2\Paren{\sigma_1\sqrt{\frac{3dk}{\epsilon}} + 1}.
    \end{equation*}
    Then, the factorization returned by direct-SVD quantization in Alg. \ref{algo:direct-svd-quant-pseudocode} satisfies
    \begin{equation*}
        \Ebb\fronormsq{\Lv\Rv - \Av} \leq \fronormsq{\Av_k - \Av} + \epsilon,
    \end{equation*}
    where the expectation is over the inherent stochasticity of the quantizers $\Qm$ and $\Qm'$.
\end{proposition}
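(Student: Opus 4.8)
The plan is to expand the quantized product into the optimal rank-$k$ approximant plus quantization perturbations, show that all cross terms vanish in expectation, and bound the three remaining quadratic terms by $\epsilon/3$ each. First I would check that neither quantizer saturates: the entries of $\Utv_k = \Uv_k\Sigmav_k$ have the form $\sigma_j U_{ij}$ with $|U_{ij}| \le 1$, so $\maxnorm{\Utv_k} \le \sigma_1 = \Rm_{\Qm}$, and likewise $\maxnorm{\Vv_k} \le 1 = \Rm_{\Qm'}$ since $\Vv$ is orthogonal. Hence by \eqref{eq:uniform-dithered-quantizer-properties} the error matrices $\Ev = \Qm(\Utv_k) - \Utv_k \in \Real^{n\times k}$ and $\Ev' = \Rv - \Vv_k^\top \in \Real^{k\times d}$ are entrywise unbiased, with second moments bounded by $\Delta^2/4 = \sigma_1^2/(2^{\Bm}-1)^2$ and $\Delta'^2/4 = 1/(2^{\Bm'}-1)^2$ respectively, and they are mutually independent. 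I would then write
\begin{equation*}
\Lv\Rv - \Av = (\Av_k - \Av) + \Utv_k\Ev' + \Ev\Vv_k^\top + \Ev\Ev'
\end{equation*}
and take $\Ebb\fronormsq{\cdot}$ of both sides.

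In the resulting expansion, the three inner products of $\Av_k - \Av$ with $\Utv_k\Ev'$, $\Ev\Vv_k^\top$, $\Ev\Ev'$ vanish because $\Ebb[\Ev] = \mathbf{0}$ and $\Ebb[\Ev'] = \mathbf{0}$; $\langle \Utv_k\Ev', \Ev\Vv_k^\top\rangle$ vanishes by independence and zero mean; and the two inner products involving $\Ev\Ev'$ vanish because one of the two factors appears only linearly and is independent of everything else (take $\Ebb_{\Qm}$ first against $\Utv_k\Ev'$, and $\Ebb_{\Qm'}$ first against $\Ev\Vv_k^\top$). For the surviving quadratic terms: $\fronormsq{\Ev\Vv_k^\top} = \fronormsq{\Ev}$ since $\Vv_k^\top\Vv_k = \Iv_k$, giving $\Ebb\fronormsq{\Ev\Vv_k^\top} \le nk\sigma_1^2/(2^{\Bm}-1)^2$; by Lemma \ref{lemma:fro-norm-spectral-norm-inequality} applied to transposes, $\fronormsq{\Utv_k\Ev'} \le \specnorm{\Utv_k}^2\fronormsq{\Ev'}$ with $\specnorm{\Utv_k} = \sigma_1$, giving $\Ebb\fronormsq{\Utv_k\Ev'} \le \sigma_1^2 kd/(2^{\Bm'}-1)^2$; and for the cross-quadratic term, conditioning on $\Ev$ and using that $\Ebb_{\Qm'}[\Ev'\Ev'^\top]$ is diagonal with diagonal entries at most $d\Delta'^2/4$ (exactly as in \eqref{eq:E'E_is_diagonal}), one gets $\Ebb\fronormsq{\Ev\Ev'} = \Ebb\,\Tr{\Ev^\top\Ev\,\Ebb_{\Qm'}[\Ev'\Ev'^\top]} \le \tfrac{d\Delta'^2}{4}\Ebb\fronormsq{\Ev} \le nkd\Delta^2\Delta'^2/16$.

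Finally I would substitute the bit-budgets. The choice $\Bm \ge \log_2(\sigma_1\sqrt{3nk/\epsilon}+1)$ gives $(2^{\Bm}-1)^2 \ge 3nk\sigma_1^2/\epsilon$, so the $\Ev\Vv_k^\top$ term is at most $\epsilon/3$; similarly $\Bm' \ge \log_2(\sigma_1\sqrt{3dk/\epsilon}+1)$ makes the $\Utv_k\Ev'$ term at most $\epsilon/3$; and the product term becomes $nkd\Delta^2\Delta'^2/16 \le \frac{\epsilon^2}{9k\sigma_1^2} \le \epsilon/3$, where the last step uses precisely the hypothesis $\epsilon \le 3k\sigma_1^2$. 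Summing the three bounds yields $\Ebb\fronormsq{\Lv\Rv - \Av} \le \fronormsq{\Av_k - \Av} + \epsilon$. This proof is largely mechanical; the only points requiring care are verifying non-saturation so that the unbiasedness in \eqref{eq:uniform-dithered-quantizer-properties} applies, tracking that all six cross terms are mean-zero, and the diagonal-covariance handling of the second-order (double-quantization) term $\Ev\Ev'$ — for which the role of the hypothesis $\epsilon \le 3k\sigma_1^2$ is to absorb it into $\epsilon/3$. I expect that second-order term to be the only genuinely non-routine step.
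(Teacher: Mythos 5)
Your proposal is correct and follows essentially the same route as the paper's proof: verify non-saturation so the dither errors $\Ev,\Ev'$ are unbiased, expand into $(\Av_k-\Av)+\Utv_k\Ev'+\Ev\Vv_k^{\top}+\Ev\Ev'$ with all cross terms vanishing, and bound the three quadratic terms by $\epsilon/3$ each, with the hypothesis $\epsilon \leq 3k\sigma_1^2$ absorbing the second-order term $\Ev\Ev'$. The only (immaterial) differences are presentational — you do the expansion in one shot rather than the paper's nested two-step decomposition, and you bound $\fronormsq{\Utv_k\Ev'}$ via $\specnorm{\Utv_k}^2\fronormsq{\Ev'}$ instead of the trace/diagonal-covariance computation, arriving at the identical bound $dk\sigma_1^2\Delta'^2/4$.
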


\begin{proof}
    Let us denote the quantization error matrices as $\Ev = \Qm(\Utv_k) - \Utv_k$ and $\Ev' = \Qm'(\Vv_k^{\top}) - \Vv_k^{\top}$.
    The approximation error is given by
    \begin{align}
    \label{eq:direct-svd-quant-fronorm-error-step1}
        \Ebb\fronormsq{\Qm(\Utv_k)\Qm'\Paren{\Vv_k^{\top}} - \Av} &= \Ebb\fronormsq{\Qm(\Utv_k)\Paren{\Vv_k^{\top} + \Ev'} - \Av} \nonumber \\
        &\stackrel{\rm (i)}{=} \underbrace{\Ebb\fronormsq{\Qm(\Utv_k)\Vv_k^{\top} - \Av}}_{\rm T_1}  + \underbrace{\Ebb\fronormsq{\Qm(\Utv_k)\Ev'}}_{\rm T_2},
    \end{align}
    where the cross term disappears in $\rm (i)$ because when $\Qm'$ is unsaturated, $\Ebb_{\Qm'}[\Ev'] = \mathbf{0}$, and
    \begin{align*}
        \Ebb_{\Qm'}\Br{{\rm Tr}\Paren{\Paren{\Qm(\Utv_k)\Vv_k^{\top} - \Av}^{\top}\Qm(\Utv_k)\;\Ebb_{\Qm'}[\Ev']}} = 0.
    \end{align*}

    {\bf Analyzing term $\rm T_1$}: The first term in \eqref{eq:direct-svd-quant-fronorm-error-step1} can be upper bounded as
    \begin{align}
    \label{eq:direct-svd-quant-term-approx-error-step2}
        \Ebb\fronormsq{\Qm(\Utv_k)\Vv_k^{\top} - \Av} = \Ebb\fronormsq{\Paren{\Utv_k + \Ev}\Vv_k^{\top} - \Av} \stackrel{\rm (i)}{=} \fronormsq{\Av_k - \Av} + \Ebb\fronormsq{\Ev\Vv_k^{\top}}.
    \end{align}
    Here, in $\rm (i)$, we utilize the fact that $\Av_k = \Utv_k \Vv_k^{\top}$ and the cross term vanishes once again as quantizer $\Qm$ is unsaturated, i.e., $\Ebb[\Ev] = \mathbf{0}$ and,
    \begin{equation*}
        {\rm Tr}\Paren{\Paren{\Utv_k\Vv_k^{\top} - \Av}^{\top}\Ebb_{\Qm}[\Ev] \Vv_k^{\top}} = 0.
    \end{equation*}

    The second term in \eqref{eq:direct-svd-quant-term-approx-error-step2} is
    \begin{align}
        \Ebb\fronormsq{\Ev\Vv_k^{\top}} = \Ebb\Br{{\rm Tr}\Paren{\Ev\Vv_k^{\top}\Vv_k\Ev^{\top}}} \stackrel{\rm (i)}{=} {\rm Tr}\Paren{\Ebb[\Ev^{\top}\Ev]} = \sum_{i = 1}^{k}\Paren{\Ebb[\Ev^{\top}\Ev]}_{ii} \stackrel{\rm (ii)}{\leq} nk \frac{\Delta^2}{4}.
    \end{align}
    Here, $\rm (i)$ follows because $\Vv_k^{\top}\Vv_k = \Iv_k$, and $\rm (ii)$ follows because $\Ebb[\Ev^{\top}\Ev]$ is a diagonal matrix as
    \begin{equation}
    \label{eq:direct-svd-quant-E'E_is diagonal}
        \Paren{\Ebb[\Ev^{\top}\Ev]}_{ij} = \sum_{\ell = 1}^{n}\Ebb[E_{\ell i}E_{\ell j}] = \begin{cases}
            n\;\Var{E_{\ell i}^2} \leq \frac{n\Delta^2}{4} \qquad \quad \text{ for } i = j, \\
            \sum_{\ell = 1}^{n}\Ebb[E_{\ell i}]\Ebb[E_{\ell j}] = 0 \hspace{0.5mm}\quad \text{ for } i \neq j.
        \end{cases}
    \end{equation}
    So, \eqref{eq:direct-svd-quant-term-approx-error-step2} can be upper bounded as
    \begin{equation}
    \label{eq:direct-svd-quant-term-approx-error-step3}
        \Ebb\fronormsq{\Qm(\Utv_k)\Vv_k^{\top} - \Av} \leq \fronormsq{\Av_k - \Av} + nk\frac{\Delta^2}{4}.
    \end{equation}

    {\bf Analyzing term $\rm T_2$}: We upper bound the second term in \eqref{eq:direct-svd-quant-fronorm-error-step1} as
    \begin{align}
    \label{eq:direct-svd-quant-fronorm-error-step4}
        \Ebb\fronormsq{\Qm(\Utv_k)\Ev'} = \Ebb\fronormsq{\Paren{\Utv_k + \Ev}\Ev'} \stackrel{\rm (i)}{=} \Ebb\fronormsq{\Utv_k\Ev'} + \Ebb\fronormsq{\Ev\Ev'},
    \end{align}
    where the cross term vanishes in $\rm (i)$ because $\Ebb_{\Qm'}\Br{{\rm Tr}\Paren{(\Utv_k\Ev')^{\top}\;\Ebb_{\Qm}[\Ev]\;\Ev'}} = 0$.
    The first term in \eqref{eq:direct-svd-quant-fronorm-error-step4} is
    \begin{align}
        \Ebb\fronormsq{\Utv_k\Ev'} &= \Ebb\Br{{\rm Tr}\Paren{\Ev'^{\top}\Utv_k^{\top}\Utv_k\Ev'}} \nonumber \\
        &= \Ebb\Br{{\rm Tr}\Paren{\Utv_k^{\top}\Utv_k\Ev'\Ev'^{\top}}} \nonumber \\
        &\stackrel{\rm (i)}{=} \sum_{i=1}^{k}\Paren{\Utv_k^{\top}\Utv_k}_{ii}\Paren{\Ebb[\Ev'\Ev'^{\top}]}_{ii} \nonumber \\
        &\stackrel{\rm (ii)}{\leq} d\frac{\Delta'^2}{4}\sum_{i=1}^{k}\Paren{\Utv_k^{\top}\Utv_k}_{ii} \stackrel{\rm (iii)}{\leq} d\frac{\Delta'^2}{4}\sum_{i=1}^{k}\sigma_i^2 \leq dk\sigma_1^2\frac{\Delta'^2}{4}.
    \end{align}
    Here, $\rm (i)$ and $\rm (ii)$ follow because $\Ebb[\Ev'\Ev'^{\top}]$ is a diagonal matrix following the same argument as \eqref{eq:direct-svd-quant-E'E_is diagonal}, i.e.,
    \begin{equation}
    \label{eq:direct-svd-quant-E'E'T_is diagonal}
        \Paren{\Ebb[\Ev'\Ev'^{\top}]}_{ij} = \sum_{\ell = 1}^{n}\Ebb[E'_{i \ell}E'_{j \ell}] = \begin{cases}
            d\;\Var{{E_{i \ell}'^2}} \leq \frac{d\Delta'^2}{4} \qquad \quad \text{ for } i = j, \\
            \sum_{\ell = 1}^{d}\Ebb[E'_{i \ell}]\Ebb[E'_{j \ell}] = 0 \hspace{0.7mm}\quad \text{ for } i \neq j.
        \end{cases}
    \end{equation}
    Finally, $\rm (iii)$ follows because the $i^{\rm th}$ column of $\Utv_k$ is $\sigma_i \uv_i$, so $\Paren{\Utv_k^{\top}\Utv_k}_{ii}  = \Paren{\sigma_i \uv_i}^{\top}\Paren{\sigma_i \uv_i} = \sigma_i^2\norm{\uv_i}_2^2 = \sigma_i^2$.
    
    The second term in \eqref{eq:direct-svd-quant-fronorm-error-step4} is
    \begin{align}
        \Ebb\fronormsq{\Ev\Ev'} &= \Ebb\Br{{\rm Tr}\Paren{\Ev'^{\top}\Ev^{\top}\Ev\Ev'}} \nonumber \\
        &= \Ebb_{\Qm}\Br{{\rm Tr}\Paren{\Ev^{\top}\Ev \; \Ebb_{\Qm'}\Br{\Ev'\Ev'^{\top}}}} \nonumber \\
        &\stackrel{\rm (i)}{=} \Ebb_{\Qm}\Br{\sum_{i=1}^{k}\Paren{\Ev^{\top}\Ev}_{ii}\Paren{\Ebb_{\Qm'}[\Ev'\Ev'^{\top}]}_{ii}} \nonumber \\
        &\leq \frac{d\Delta'^2}{4}\sum_{i=1}^{k}\Ebb_{\Qm}\Br{\Paren{\Ev^{\top}\Ev}_{ii}} \stackrel{\rm (ii)}{\leq} k\frac{n\Delta^2}{4}\frac{d\Delta'^2}{4}. 
    \end{align}
    Here, $\rm (i)$ and $\rm (ii)$ follow because $\Ebb[\Ev'\Ev'^{\top}]$ and $\Ebb[\Ev^{\top}\Ev]$ are diagonal, which can be seen using similar arguments as in \eqref{eq:direct-svd-quant-E'E_is diagonal}.
    So, \eqref{eq:direct-svd-quant-fronorm-error-step4} is
    \begin{align}
    \label{eq:direct-svd-quant-fronorm-error-step5}
        \Ebb\fronormsq{\Qm(\Utv_k)\Qm'\Paren{\Vv_k^{\top}} - \Av} \leq \fronormsq{\Av_k  - \Av} + nk\frac{\Delta^2}{4} + dk\sigma_1^2\frac{\Delta'^2}{4} + ndk\frac{\Delta^2}{4}\frac{\Delta'^2}{4}.
    \end{align}

    {\bf Choice of dynamic range for quantizers $\Qm$ and $\Qm'$}:
    Using Lemma \ref{lem:max-norm-spectral-norm-inequality}, the max-norm of the input to the first quantizer can be upper bounded as $\lVert\Utv_k\rVert_{\rm max} \leq \lVert\Utv_k\rVert_2 = \sigma_1$, and that of the second quantizer is $\maxnorm{\Vv_k} \leq \specnorm{\Vv_k} = 1$.
    Therefore, choosing the dynamic ranges of $\Qm$ and $\Qm'$ to be $\sigma_1$ and $1$ respectively, would ensure that they remain unsaturated, and \eqref{eq:direct-svd-quant-fronorm-error-step5} can be rewritten as
    \begin{align}
        \label{eq:direct-svd-quant-fronorm-error-step6}        \Ebb\fronormsq{\Qm(\Utv_k)\Qm'\Paren{\Vv_k^{\top}} - \Av} \leq \fronormsq{\Av_k  - \Av} + \frac{nk\sigma_1^2}{\Paren{2^{\Bm} - 1}^2} + \frac{dk\sigma_1^2}{\Paren{2^{\Bm'} - 1}^2} + k\frac{n\Delta^2}{4}\frac{d\Delta'^2}{4}.
    \end{align}

    {\bf Choice of bit-budgets $\Bm$ and $\Bm'$}: For a given $\epsilon > 0$, suppose we choose $\Bm$ and $\Bm'$ so that
    \begin{equation*}
        \Bm \geq \log_2\Paren{\sigma_1\sqrt{\frac{3nk}{\epsilon}} + 1} \quad \text{ and, } \quad \Bm' \geq \log_2\Paren{\sigma_1\sqrt{\frac{3dk}{\epsilon}} + 1}.
    \end{equation*}
    With such a choice, the second and third terms in \eqref{eq:direct-svd-quant-fronorm-error-step6} would not exceed $\epsilon$.
    Furthermore, the last term of \eqref{eq:direct-svd-quant-fronorm-error-step6} will be
    \begin{equation*}
        k\frac{n\Delta^2}{4}\frac{d\Delta'^2}{4} \leq dnk \frac{\epsilon}{3nk} \frac{\epsilon}{3dk\sigma_1^2} \leq \frac{\epsilon}{3} \quad \text{ whenever } \epsilon < 3k\sigma_1^2.
    \end{equation*}
    This completes the proof.
\end{proof}

Prop. \ref{prop:direct-svd-quant-approx-error} states that when $n \approx d$, in order to guarantee an $\epsilon$-additive error approximation with respect to the best rank-$k$ approximation, we require (ignoring constant multiplicative factors inside the $\log_2(\cdot)$), a total budget (ref. to Tab. \ref{tab:comparison-with-existing-baselines}) of 
\begin{align}
    \frac{n}{n+d}\Bm + \frac{d}{n + d}\Bm' \approx \frac{1}{2}\Paren{\Bm + \Bm'} = \frac{1}{2}\log_2\Paren{\frac{3k\sigma_1^2}{\epsilon}\sqrt{nd}} \hspace{2mm} \text{bits}.
\end{align}

\section{LPLR-SVD algorithm: Approximation error analysis}
\label{app:lplr-svd-approximation-error-analysis}

LPLR-SVD is a simpler variant of our LPLR algorithm in which instead of approximating the range space of $\Av$ using $\Av\Sv$, we compute the full-SVD of $\Av \in \Real^{n \times d}$ as $\Av = \Uv\Sigmav\Vv^{\top}$, where $\Uv \in \Real^{n \times n}$, $\Sigmav \in \Real^{n \times d}$, and $\Vv \in \Real^{d \times d}$.
The pseudocode of LPLR-SVD is provided in Alg. \ref{algo:lplr-svd-pseudocode}.

We first compute the SVD, $\Av = \Uv\Sigmav\Vv^{\top}$, followed by quantizing the singular vectors corresponding to the top-$k$ singular values, scaled by the singular values.
We denote $\Utv = \Uv\Sigmav$, and the sub-matrix formed by the first $k$ columns as $\Utv_k$.
We quantize this basis to get $\Lv = \Qm(\Utv_k)$ as the first low-rank factor.
Subsequently, we project the columns of $\Av$ onto the subspace spanned by this quantized basis, i.e., we solve the optimization problem
\begin{equation}
    \min_{\Wv \in \Real^{k \times d}} \fronormsq{\Qm(\Utv_k)\Wv - \Av}
\end{equation}
The solution of this is given by $\Wv^* = \Qm(\Utv_k)^{\dagger}\Av$.
The second low rank factor is given by quantizing it again as $\Rv = \Qm'\Paren{\Qm(\Utv_k)^{\dagger}\Av}$.

\begin{algorithm}
  \caption{{\bf LPLR-SVD}: LPLR factorization via Singular Value Decomposition}
  \label{algo:lplr-svd-pseudocode}
  
  \SetAlgoLined
  \SetKwInOut{Input}{Input}
  \SetKwInOut{Output}{Output}
  
  \Input{Matrix $\Av \in \Real^{n \times d}$, target rank $k$, Quantizers $\Qm$ and $\Qm'$}
  \Output{Factorization: $\Lv\Rv$ where $\Lv \in \Real^{n \times k}$, $\Rv \in \Real^{k \times d}$}
  
  \BlankLine
  Compute SVD and get $\Av = \Uv\Sigmav\Vv^{\top}$ \\
  Extract the top-$k$ left to get an approximate basis for the range space of $\Av$ as $\Utv_k = \Paren{\Uv\Sigmav}_k$ \\
  Quantize and get an approximate basis $\Qm(\Utv_k)$ \\
  Find $\Wv^* = \argminimize_{\Wv}\fronormsq{\Qm(\Utv_k)\Wv - \Av}$ \\
  Quantize $\Wv^*$ using quantizer $\Qm'$ to get $\Qm'(\Wv^*)$    
  
  \BlankLine
  \Return{Low-rank and low-precision approximation $\Lv\Rv$ where $\Lv = \Qm(\Utv_k)$, $\Rv = \Qm'(\Wv^*)$.}
\end{algorithm}

The following result provides an upper bound on the approximation error of LPLR-SVD.
The proof of this result is similar to Thm. \ref{thm:lplr-approximation-result} without the complications arising from Gaussian concentration. 

\begin{theorem}
    \label{thm:lplr-svd-approximation-error-formal}
    {\bf (LPLR-SVD approximation error (formal))}
Let our matrix $\Av \in \Real^{n \times d}$ have singular values $\sigma_1 \geq \ldots \sigma_k \geq \ldots$, and bounded row norms $\norm{\av^{(i)}} \leq \Rm$.
Let $\kappa(\Av_k) = \sigma_1/\sigma_k$ be the condition number of the $\Av_k$, i.e., the best rank-$k$ approximation of $\Av$.
Consider some sufficiently small $\epsilon$ that satisfies $0 < \epsilon < 4k\sigma_1^2$, and suppose the dynamic ranges of quantizers $\Qm$ and $\Qm'$ are set to be $\sigma_1$ and $2\kappa(\Av_k)$ respectively.
Furthermore, suppose the bit-budgets of the quantizers satisfy
\begin{equation*}
    \Bm \geq {\rm max}\;\{\Bm_1, \Bm_2\} \quad \text{ and, } \quad \Bm' \geq \log_2\Paren{4\sigma_1\kappa(\Av_k)\sqrt{\frac{dk}{\epsilon}}+ 1},
\end{equation*}
where,
\begin{equation*}
    \Bm_1 = \log_2\Paren{2\sigma_1\sqrt{\frac{nk}{\epsilon}} + 1},
\end{equation*}
and,
\begin{equation*}
    \Bm_2 = \log_2\Paren{\frac{4C\kappa(\Av_k)}{\log 2}\Paren{\sqrt{n} + \sqrt{k} + \sqrt{\log\Paren{\frac{8n\Rm^2}{\epsilon}}}} + 1}.
\end{equation*}
Then, the low-precision and low-rank factorization returned by Alg. \ref{algo:lplr-svd-pseudocode} satisfies
\begin{equation*}
    \Ebb\fronormsq{\Lv\Rv - \Av} \leq \fronormsq{\Av_k - \Av} + \epsilon.
\end{equation*}
\end{theorem}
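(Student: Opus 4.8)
The plan is to mirror the analysis of Theorem~\ref{thm:lplr-approximation-error-formal}, but stripped of the Gaussian-sketch concentration machinery, since LPLR-SVD uses the exact truncated basis $\Utv_k = (\Uv\Sigmav)_k$ in place of $\Av\Sv$. Write $\Ev = \Qm(\Utv_k) - \Utv_k$ and $\Ev' = \Qm'(\Wv^*) - \Wv^*$ for the two quantization-error matrices, where $\Wv^* = \Qm(\Utv_k)^{\dagger}\Av$. First I would observe that $\Qm$ never saturates, because $\maxnorm{\Utv_k}\le\specnorm{\Utv_k}=\sigma_1$ equals its dynamic range (Lemma~\ref{lem:max-norm-spectral-norm-inequality}), so $\Ev$ is always zero-mean with $\Ebb[\Ev^{\top}\Ev]$ diagonal and entries at most $n\Delta^2/4$, exactly as in Proposition~\ref{prop:direct-svd-quant-approx-error}. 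Then I would condition on the event $\Ecal$ (made precise below) on which $\Qm'$ is unsaturated, so that $\Ebb_{\Qm'}[\Ev']=\mathbf 0$ and $\Ebb_{\Qm'}[\Ev'\Ev'^{\top}]$ is diagonal with entries at most $d\Delta'^2/4$.

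Next I would expand $\fronormsq{\Qm(\Utv_k)\Qm'(\Wv^*) - \Av} = \fronormsq{\Qm(\Utv_k)(\Wv^*+\Ev') - \Av}$; the cross term vanishes because $\Ebb_{\Qm'}[\Ev']=\mathbf 0$, leaving $\fronormsq{\Qm(\Utv_k)\Qm(\Utv_k)^{\dagger}\Av - \Av} + \fronormsq{\Qm(\Utv_k)\Ev'}$. Since $\Qm(\Utv_k)\Qm(\Utv_k)^{\dagger}$ is the orthogonal projector onto $\mathrm{range}(\Qm(\Utv_k))$, the first term is at most $\fronormsq{\Qm(\Utv_k)\Vv_k^{\top} - \Av}$; writing $\Qm(\Utv_k)=\Utv_k+\Ev$ and $\Utv_k\Vv_k^{\top}=\Av_k$, and killing the cross term with $\Ebb[\Ev]=\mathbf 0$, its expectation is at most $\fronormsq{\Av_k-\Av} + \Ebb\fronormsq{\Ev\Vv_k^{\top}} \le \fronormsq{\Av_k-\Av} + nk\Delta^2/4$ (using $\Vv_k^{\top}\Vv_k=\Iv_k$ and the trace identity). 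For the second term, $\Ebb_{\Qm'}\fronormsq{\Qm(\Utv_k)\Ev'} = \Tr\!\big(\Qm(\Utv_k)^{\top}\Qm(\Utv_k)\,\Ebb_{\Qm'}[\Ev'\Ev'^{\top}]\big) \le \tfrac{d\Delta'^2}{4}\fronormsq{\Qm(\Utv_k)}$, and $\Ebb_{\Qm}\fronormsq{\Qm(\Utv_k)} \le \fronormsq{\Utv_k} + nk\Delta^2/4 \le k\sigma_1^2 + nk\Delta^2/4$ (the $i$-th column of $\Utv_k$ is $\sigma_i\uv_i$). Collecting terms, on $\Ecal$ the error is at most $\fronormsq{\Av_k-\Av} + \tfrac{nk\Delta^2}{4} + \tfrac{dk\sigma_1^2\Delta'^2}{4} + \tfrac{ndk\Delta^2\Delta'^2}{16}$ — the same four-term bound as direct-SVD.

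It then remains to justify the dynamic ranges and bit-budget thresholds. Here $\maxnorm{\Wv^*} \le \specnorm{\Qm(\Utv_k)^{\dagger}}\,\sigma_1 = \sigma_1/\sigma_{\min}(\Qm(\Utv_k))$, and by Lemma~\ref{lem:lower-bound-minimum-singular-value-matrix-sum}, $\sigma_{\min}(\Qm(\Utv_k)) \ge \sigma_k - \specnorm{\Ev}$. The entries of $\Ev$ are bounded by $\Delta$, hence subgaussian with $\subgaussnorm{E_{ij}}\le\Delta/\log 2$, so Lemma~\ref{lem:spectral-norm-subgaussian-matrices} with $\tilde t = \sqrt{\log(8n\Rm^2/\epsilon)}$ gives $\specnorm{\Ev} \le \tfrac{C\Delta}{\log 2}(\sqrt n+\sqrt k+\tilde t)$ with probability at least $1-\epsilon/(4n\Rm^2)$; call this event $\Ecal$. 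Choosing $\Bm\ge\Bm_2$ forces $\specnorm{\Ev}\le\sigma_k/2$, whence $\sigma_{\min}(\Qm(\Utv_k))\ge\sigma_k/2$ and $\maxnorm{\Wv^*}\le 2\sigma_1/\sigma_k = 2\kappa(\Av_k)$, so $\Qm'$ does not saturate on $\Ecal$, and $\Pr(\Ecal^c)\le\epsilon/(4n\Rm^2)$. With dynamic ranges $\sigma_1$ and $2\kappa(\Av_k)$ we have $\Delta = 2\sigma_1/(2^{\Bm}-1)$ and $\Delta' = 4\kappa(\Av_k)/(2^{\Bm'}-1)$; plugging in $\Bm\ge\Bm_1$ and $\Bm'$ as stated makes $\tfrac{nk\Delta^2}{4}\le\tfrac{\epsilon}{4}$ and $\tfrac{dk\sigma_1^2\Delta'^2}{4}\le\tfrac{\epsilon}{4}$, while the cross term $\tfrac{ndk\Delta^2\Delta'^2}{16} = \tfrac{nk\Delta^2}{4}\cdot\tfrac{d\Delta'^2}{4} \le \tfrac{\epsilon}{4}\cdot\tfrac{\epsilon}{4\sigma_1^2k}\le\tfrac{\epsilon}{4}$ precisely because $\epsilon < 4k\sigma_1^2$. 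Adding the saturation contribution $\fronormsq{\Av}\,\Pr(\Ecal^c)\le n\Rm^2\cdot\epsilon/(4n\Rm^2)=\epsilon/4$ yields $\Ebb\fronormsq{\Lv\Rv-\Av}\le\fronormsq{\Av_k-\Av}+\epsilon$.

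The hard part will be the single genuinely nontrivial step: controlling $\specnorm{\Ev}$ so that $\Qm(\Utv_k)^{\dagger}$ stays bounded — i.e. showing the quantized basis $\Qm(\Utv_k)$ is not much more ill-conditioned than $\Utv_k$ — which is exactly where the subgaussian spectral-norm bound and the threshold $\Bm_2$ enter, and which dictates the dynamic range $2\kappa(\Av_k)$ for $\Qm'$. Everything else is routine bookkeeping with the diagonal expected error matrices, essentially identical in spirit to the direct-SVD and LPLR proofs.
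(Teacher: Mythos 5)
Your proposal is correct and follows essentially the same route as the paper's proof: the same decomposition into the projection term and the $\Qm'$-error term, the same reduction of the first term to the direct-SVD bound via $\fronormsq{\Qm(\Utv_k)\Vv_k^{\top}-\Av}$, the same subgaussian spectral-norm control of $\Ev$ to bound $\sigma_{\min}(\Qm(\Utv_k))$ and justify the dynamic range $2\kappa(\Av_k)$, and the same $\epsilon/4$ accounting including the saturation event. The only (immaterial) difference is that you bound $\Ebb\fronormsq{\Qm(\Utv_k)\Ev'}$ via $\fronormsq{\Qm(\Utv_k)}$ directly rather than expanding $(\Utv_k+\Ev)\Ev'$ into two terms, which yields the identical four-term bound.
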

\begin{proof}
    Since the solution of $\Wv^* = \argminimize_{\Wv}\fronormsq{\Qm(\Utv_k)\Wv - \Av}$ is available in closed form  as $\Wv^* = \pinv{\Qm(\Utv_k)}\Av$, we have the low rank factorization as $\Av \approx \Qm(\Utv_k)\Qm'\Paren{\Qm(\Utv_k)^{\dagger}\Av}$.
    Let us denote the quantization error matrices as $\Ev' = \Qm'\Paren{\Qm(\Utv_k)^{\dagger}\Av} - \Qm(\Utv_k)^{\dagger}\Av$, and $\Ev = \Qm(\Utv_k) - \Utv_k$.
    Then, the approximation error is
    \begin{align}                    \Ebb\fronormsq{\Qm(\Utv_k)\Qm'\Paren{\Qm(\Utv_k)^{\dagger}\Av} - \Av} &=\Ebb\fronormsq{\Qm(\Utv_k)\Paren{\Qm(\Utv_k)^{\dagger}\Av + \Ev'} - \Av} \nonumber \\
    &= \underbrace{\Ebb\fronormsq{\Qm(\Utv_k)\Qm(\Utv_k)^{\dagger}\Av - \Av}}_{\rm T_1} + 
    \underbrace{\Ebb\fronormsq{\Qm(\Utv_k)\Ev'}}_{\rm T_2},
    \end{align}
    where the last equality follows the because the quantizer $\Qm'$ is unbiased.
    Here, the $\Ebb$ is over the stochasticity of the quantizers.
    This decomposition is similar to \eqref{eq:lplr-upper-bound-first-decomposition}, and can be treated similarly.
    The term $\rm T_1$ consists of the low rank approximation error and the error from the first quantizer, whereas the term $\rm T_2$ consists of error from the second quantizer.
    We follow steps similar to the proof of Lemma \ref{lem:lplr-approximation-result-quantizers-unsaturated} which are detailed below.

    {\bf Analyzing term $\rm T_1$}: Using similar reasoning as \eqref{eq:low-precision-low-rank-approximation-error-step1}, this term can be written as
    \begin{align}
        \label{eq:lplr-svd-approx-error-decomposition-step-1}
        \Ebb\fronormsq{\Qm(\Utv_k)\Qm(\Utv_k)^{\dagger}\Av - \Av} \leq \Ebb\fronormsq{\Qm(\Utv_k)\Utv_k^{\dagger}\Av_k - \Av} = \Ebb \fronormsq{\Qm(\Utv_k)\Vv_k^{\top} - \Av}
    \end{align}
    This is the same as \eqref{eq:direct-svd-quant-term-approx-error-step3} in the analysis of direct-SVD quant., and hence can be upper bounded as
    \begin{align}
    \label{eq:lplr-avd-analysis-termT1}
        \Ebb\fronormsq{\Qm(\Utv_k)\Qm(\Utv_k)^{\dagger}\Av - \Av} \leq \fronormsq{\Av_k - \Av} + nk\frac{\Delta^2}{4}.
    \end{align}

{\bf Analyzing term $\rm T_2$}: This term is again the same as \eqref{eq:direct-svd-quant-fronorm-error-step4} in the analysis of direct-SVD quant., and can be upper bounded as
\begin{equation}
\label{eq:lplr-avd-analysis-termT2}
    \Ebb\fronormsq{\Qm(\Utv_k)\Ev'} \leq dk\sigma_1^2\frac{\Delta'^2}{4} + k\frac{n\Delta^2}{4}\frac{d\Delta'^2}{4}.
\end{equation}
Using \eqref{eq:lplr-avd-analysis-termT1} and \eqref{eq:lplr-avd-analysis-termT2}, we get:
\begin{equation}
\label{eq:lplr-svd-approx-error-decomposition-step-2}
    \Ebb\fronormsq{\Qm(\Utv_k)\Qm'\Paren{\Qm(\Utv_k)^{\dagger}\Av} - \Av} \leq \fronormsq{\Av_k - \Av} + \underbrace{nk\frac{\Delta^2}{4}}_{\rm T_3} + \underbrace{dk\sigma_1^2\frac{\Delta'^2}{4}}_{\rm T_4} + \underbrace{k\frac{n\Delta^2}{4}\frac{d\Delta'^2}{4}}_{\rm T_5}.
\end{equation}

{\bf Choice of dynamic range for quantizers $\Qm$}: Since the input to the quantizer is $\Utv_k$, we need an upper bound on $\lVert\Utv_k\rVert_{\rm max}$.
The $i^{\rm th}$ column of $\Utv_k$ is $(\Utv_k)_i = \sigma_i\uv_i$, where $1 \leq i \leq k$.
From Lemma \ref{lem:max-norm-spectral-norm-inequality},
\begin{align}
    \lVert\Utv_k\rVert_{\rm max} \leq \lVert \Utv_k \rVert_{2} = \sigma_1.
\end{align}
Note that this upper bound is tight in the worst-case sense, i.e., if we consider all matrices with bounded spectral norm, we have $\sup_{\Xv \; : \; \Xv = \Uv\Sigmav\Vv^{\top}, \specnorm{\Xv} \leq r} \lVert\Utv_k\rVert_{\rm max} = r$.
Hence, the dynamic range of quantizer $\Qm$ is set as $\sigma_1$.

{\bf Choice of dynamic range for quantizers $\Qm'$}: On the other hand, the input to quantizer $\Qm'$ is $\pinv{\Qm(\Utv_k)}\Av$.
We adopt an approach similar to the proof of Lemma \ref{lem:max-norm-second-quantizer-input} to upper bound the max-norm of $\pinv{\Qm(\Utv_k)}\Av$ as follows:
\begin{equation}
    \maxnorm{\Qm(\Utv_k)^{\dagger}\Av} \stackrel{\rm (i)}{\leq} \specnorm{\Qm(\Utv_k)^{\dagger}\Av} \stackrel{\rm (ii)}{\leq} \specnorm{\Qm(\Utv_k)^{\dagger}}\sigma_1,
\end{equation}
where $\rm (i)$ and $\rm (ii)$ follow from Lemmas \ref{lem:max-norm-spectral-norm-inequality} and \ref{lem:spectral-norm-submultiplicative} respectively.
We upper bound $\specnorm{\Qm(\Utv_k)^{\dagger}}$ as:
\begin{align}
\label{eq:lplr-svd-second-quantizer-dynamic-range-step1}
    \specnorm{\Qm(\Utv_k)^{\dagger}} \stackrel{\rm (i)}{\leq} \Paren{\sigma_{\rm min}\Paren{\Qm(\Utv_k)}}^{-1} &= \Paren{\sigma_{\rm min}\Paren{\Utv_k + \Ev}}^{-1} \nonumber\\
    &\stackrel{\rm (ii)}{\leq} \Paren{\sigma_{\rm min}(\Utv_k) - \specnorm{\Ev}}^{-1} = (\sigma_k - \specnorm{\Ev})^{-1}.
\end{align}
Here, $\rm (i)$ follows because the singular values of $\Qm(\Utv_k)^{\dagger}$ are inverses of the singular values of $\Qm(\Utv_k)$, and $\rm (ii)$ follows from Lemma \ref{lem:lower-bound-minimum-singular-value-matrix-sum}. 

We are now left with upper bounding $\specnorm{\Ev}$.
Since a choice of $\sigma_1$ for the dynamic range of $\Qm$ ensures that quantizer $\Qm$ remains unsaturated, the entries of $\Ev$ are bounded as:
\begin{align}
    |E_{ij}| \leq \Delta = \frac{2\sigma_1}{2^{\Bm} - 1} \quad \text{ for all } i \in [n] \text{ and } j \in [k].
\end{align}
Since $E_{ij}$ is a bounded w.h.p., it is also subgaussian w.h.p. (ref. eq. \eqref{eq:subgaussian-norm-bounded-random-variable}) with subgaussian norm given by,
\begin{align}
    \norm{\Ev}_{\psi_2} \leq \frac{2\sigma_1}{\Paren{2^{\Bm} - 1}\log 2}.
\end{align}
From Lemma \ref{lem:spectral-norm-subgaussian-matrices} we get for some absolute constant $C$, 
\begin{equation}
    \specnorm{\Ev} \leq \frac{2C\sigma_1}{\Paren{2^{\Bm} - 1}\log 2}\Paren{\sqrt{n} + \sqrt{k} + t} \text{ with probability exceeding } 1 - 2e^{-t^2}.
\end{equation}
Setting $t = \sqrt{\log\Paren{\frac{8n\Rm^2}{\epsilon}}}$, we get:
\begin{equation}
\label{eq:}
    \specnorm{\Ev} \leq \frac{2C\sigma_1}{\Paren{2^{\Bm} - 1}\log 2}\Paren{\sqrt{n} + \sqrt{k} + \sqrt{\log\Paren{\frac{8n\Rm^2}{\epsilon}}}} \text{ with probability exceeding } 1 - \frac{\epsilon}{4n\Rm^2}.
\end{equation}
Let us choose our bit-budget $\Bm$ of quantizer $\Qm$ to be such that $\specnorm{\Ev}\leq \frac{\sigma_k}{2}$, i.e.,
\begin{align}
\label{eq:lplr-svd-bit-budget-first-quantizer-lower-bound-2}
    \Bm \geq \log_2\Paren{\frac{4C\kappa(\Av_k)}{\log 2}\Paren{\sqrt{n} + \sqrt{k} + \sqrt{\log\Paren{\frac{8n\Rm^2}{\epsilon}}}} + 1},
\end{align}
where $\kappa(\Av_k) = \sigma_1/\sigma_k$ is the condition number of the best rank-$k$ approximation of $\Av$.
Then, using \eqref{eq:lplr-svd-second-quantizer-dynamic-range-step1}, we have:
\begin{align}
\label{eq:lplr-svd-dynamic-range-second-quantizer}
    \maxnorm{\Qm(\Utv_k)^{\dagger}\Av} \leq 2\kappa(\Av_k) \quad \text{with probability exceeding } 1 - \frac{\epsilon}{4n\Rm^2}.
\end{align}
So, if we choose the dynamic range of quantized $\Qm'$ to be $2\kappa(\Av_k)$, it will remain unsaturated with probability exceeding $1 - \frac{\epsilon}{4n\Rm^2}$, provided that the bit-budget of first quantizer satisfies $\Bm$ satisfies \eqref{eq:lplr-svd-bit-budget-first-quantizer-lower-bound-2}.

{\bf Choice of bit-budgets $\Bm$ and $\Bm'$}: Referring to \eqref{eq:lplr-svd-approx-error-decomposition-step-2}, the term $\rm T_3$ is,
\begin{align}
\label{eq:lplr-svd-term-T3-upper-bound}
    nk\frac{\Delta^2}{4} = nk\frac{\sigma_1^2}{\Paren{2^{\Bm} - 1}^2} \leq \frac{\epsilon}{4} \quad \text{if } \Bm \geq \log_2\Paren{2\sigma_1\sqrt{\frac{nk}{\epsilon}} + 1}.
\end{align}

The term $\rm T_4$ is:
\begin{align}
\label{eq:lplr-svd-term-T4-upper-bound}
    dk\sigma_1^2\frac{\Delta'^2}{4} = dk\sigma_1^2\frac{4\kappa(\Av_k)^2}{\Paren{2^{\Bm'} - 1}^2} \leq \frac{\epsilon}{4} \quad \text{if } \Bm' \geq \log_2\Paren{4\sigma_1\kappa(\Av_k)\sqrt{\frac{dk}{\epsilon}}+ 1}.
\end{align}

With the above choices of $\Bm$ and $\Bm'$, the term $\rm T_5$ is:
\begin{align}
\label{eq:lplr-svd-term-T5-upper-bound}
    ndk \frac{\Delta^2}{4}\frac{\Delta'^2}{4} \leq ndk \frac{\epsilon}{4nk} \frac{\epsilon}{4dk\sigma_1^2} \leq \frac{\epsilon}{4} \quad \text{ if } \quad \epsilon \leq 4k\sigma_1^2.
\end{align}

{\bf Tying it all together}: Similar to what is done in the proof of Thm. \ref{thm:lplr-approximation-error-formal}, for the purpose of analysis, we assume that Alg. \ref{algo:lplr-svd-pseudocode} returns $\Lv = \mathbf{0}$ and $\Rv = \mathbf{0}$ if quantizer $\Qm'$ gets saturated.
In practical implementation, it can easily be checked if either quantizer $\Qm$ or $\Qm'$ gets saturated or not, and the algorithm can be repeated again with a new realization of the stochastic quantizer $\Qm$.
Since the choice of dynamic ranges for $\Qm$ ensures that it remains unsaturated with a high probability, ``reasonably few" realizations would suffice to get at least one good realization in which $\Qm$ is unsaturated.

However, in what follows, we assume that that if quantizer $\Qm$ gets saturated, then the algorithm returns $\mathbf{0}$ as an estimate of $\Av$, resulting in a Frobenius norm error of $\fronorm{\Av}$.
Since this happen with a very small probability, we show that its contribution to the expected Frobenius norm error of Alg. \ref{algo:lplr-svd-pseudocode} is small as well.
Let us denote the event that quantizer $\Qm$ is unsaturated as $\Ecal$.
Then, the expected approximation error can be written as
\begin{align}
    \Ebb\fronormsq{\Lv\Rv - \Av} &= \Ebb\Br{\fronormsq{\Qm(\Utv_k)\Qm'\Paren{\Qm(\Utv_k)^{\dagger}\Av} - \Av} \mathds{1}_{\Ecal}} + \Ebb\Br{\fronormsq{\Av} \mathds{1}_{\Ecal^C}} \nonumber \\
    &\stackrel{\rm (i)}{\leq} \Ebb\Br{\fronormsq{\Qm(\Utv_k)\Qm'\Paren{\Qm(\Utv_k)^{\dagger}\Av} - \Av} \mathds{1}_{\Ecal}} + n\Rm^2\Prob\Paren{\mathds{1}_{\Ecal^C}} \nonumber \\
    &\stackrel{\rm (ii)}{\leq} \fronormsq{\Av_k - \Av} + nk\frac{\Delta^2}{4} + dk\sigma_1^2\frac{\Delta'^2}{4} + k\frac{n\Delta^2}{4}\frac{d\Delta'^2}{4} + n\Rm^2\Prob\Paren{\mathds{1}_{\Ecal^C}},
\end{align}
where inequality $\rm (i)$ follows from Asm. \ref{asm:bounded-row-norm} and the fact that the expectation of indicator function of an event is the probability of the event, and $\rm (ii)$ follows from \eqref{eq:lplr-svd-approx-error-decomposition-step-2}.
Using appropriate choices of $\Bm$ and $\Bm'$ and small enough $\epsilon$, from  \eqref{eq:lplr-svd-dynamic-range-second-quantizer}, \eqref{eq:lplr-svd-term-T3-upper-bound}, \eqref{eq:lplr-svd-term-T4-upper-bound}, and \eqref{eq:lplr-svd-term-T5-upper-bound}, we have
\begin{align}
    \Ebb\fronormsq{\Lv\Rv - \Av} \leq \fronormsq{\Av_k - \Av} + \frac{\epsilon}{4} + \frac{\epsilon}{4} + \frac{\epsilon}{4} + n\Rm^2\frac{\epsilon}{4n\Rm^2} = \fronormsq{\Av_k - \Av} + \epsilon.
\end{align}
Note that from \eqref{eq:lplr-svd-term-T3-upper-bound} and \eqref{eq:lplr-svd-bit-budget-first-quantizer-lower-bound-2}, the bit-budget of the quantizers must satisfy:
\begin{align}
    \Bm \geq {\rm max}\;\{\Bm_1, \Bm_2\}, \quad \text{ and } \Bm' \geq \log_2\Paren{4\sigma_1\kappa(\Av_k)\sqrt{\frac{dk}{\epsilon}}+ 1}
\end{align}
where,
\begin{equation}
    \Bm_1 = \log_2\Paren{2\sigma_1\sqrt{\frac{nk}{\epsilon}} + 1},
\end{equation}
and,
\begin{equation}
    \Bm_2 = \log_2\Paren{\frac{4C\kappa(\Av_k)}{\log 2}\Paren{\sqrt{n} + \sqrt{k} + \sqrt{\log\Paren{\frac{8n\Rm^2}{\epsilon}}}} + 1}.
\end{equation}
This completes the proof.
\end{proof}

\subsection{Informal version of Thm. \ref{thm:lplr-svd-approximation-error-formal}}
\label{subapp:lplr-svd-approx-error-informal-version}

Consider $n \approx d$.
From Thm. \ref{thm:lplr-svd-approximation-error-formal}, we can get a (simplified) asymptotic dependence of the bit-budgets $\Bm$ and $\Bm'$.
In what follows, we ignore the constant multiplicative factors inside $\log_2(\cdot)$.
Comparing the expressions for $\Bm_1$ and $\Bm_2$, we have
\begin{equation}
    \Bm \geq \log_2\Paren{\sigma_1\sqrt{\frac{nk}{\epsilon}}},
\end{equation}
and,
\begin{equation}
    \Bm' \geq \log_2\Paren{\sigma_1\kappa(\Av_k)\sqrt{\frac{dk}{\epsilon}}}.
\end{equation}
This gives a total bit-budget of
\begin{equation}
    \frac{1}{2}\log_2\Paren{\sigma_1^2\kappa(\Av_k)\frac{k}{\epsilon}\sqrt{nd}} \hspace{3mm} {\rm bits}.
\end{equation}

\section{Additional numerical simulations}
\label{app:additional-numerical-simulations}

\subsection{Image Compression}
\label{subapp:image-compression}

In addition to Fig. \ref{fig:shepp} in the main paper, we also provide additional image compression experiments on images of different dimensions in Figs. \ref{fig:shepp-appendix-Bnq-1}, \ref{fig:shepp-appendix-Bnq-2}, \ref{fig:jupiter-Bnq-1}, and \ref{fig:mri-human-brain-Bnq-1}.
The values of $\Bm$ and $\Bm'$ are provided in the respective captions of these figures.
For every image, the sketch size/target rank $m$ and the bit-budgets $\Bm$ and $\Bm'$ are chosen so as to ensure parity with a na\"ive quantizer with bit budget $\Bm_{\rm nq}$.
In other words, for an $n \times d$ dimensional image, $\Bm\cdot nm + \Bm'\cdot md = \Bm_{\rm nq}\cdot nd$.
The values of $n$ and $d$ are also provided in the corresponding captions.

For image compression, we also perform a normalize and shift operation of our DSVD/LPLR/LSVD output in order to adjust the scale and bias of the output image, as described next.

{\bf Normalize and shift}:
Let us denote the factorization obtained from executing DSVD/LPLR/LSVD on the input matrix $\Xv \in \Real^{n \times d}$ by $\Yv = \Lv\Rv$.
Then, we can improve the estimate provided by this factorization by considering $\alpha\Yv + \beta{\Iv}_{n \times d}$.
The overhead in storing these two additional scalars in full-precision is negligible when $n$ and $d$ are large.
The optimal values of $\alpha$ and $\beta$ can be found as:
\begin{equation}
    (\alpha^*, \beta^*) = \argminimize_{\alpha, \beta} \fronormsq{\alpha\Yv + \beta\Iv - \Xv}.
\end{equation}
Closed-form expressions for $(\alpha^*, \beta^*)$ can be obtained by equating the derivatives of $\fronormsq{\alpha\Yv + \beta\Iv - \Xv}$ to zero, and solving the resulting linear equations.
Doing this algebra yields,
\begin{align}
    &\alpha^* = \Paren{\fronormsq{\Yv} - \frac{\left\langle\Yv, \Iv\right\rangle^2}{nd}}^{-1}\Paren{\left\langle\Xv, \Yv\right\rangle - \frac{\left\langle\Xv, \Iv\right\rangle\left\langle\Yv, \Iv\right\rangle}{nd}}, \nonumber\\
    &\hspace{15mm}\text{and,} \hspace{5mm} \beta^* = \frac{\left\langle\Xv, \Iv\right\rangle}{nd} - \frac{\left\langle\Yv, \Iv\right\rangle}{nd}\alpha^*.
\end{align}

\subsection{Wall-clock time for compressing image embeddings}
\label{subapp:compressing-image-embeddings}

We now report the wall-clock time for computing the low-rank quantized factors via LPLR, LSVD, and DSVD for embedding compression in Tab. \ref{tab:dataset-preprocess}.
This reinforces the validity of our theoretical assertions concerning the computational advantage of LPLR when compared to its alternative methods.

\begin{table}[thbp!]
\centering
\caption{\small Comparison of wall-clock time for computing low rank quantized factors via LPLR, LSVD and DSVD. Each image embedding forms a row of the input matrix, with low rank factors computed for \textbf{each} class of the input dataset. Bit-budgets used are ${\rm B} = {\rm B'} = 8, {\rm B_{nq} = 1}$. We report mean and standard deviation.}
\vspace{3mm}
\label{tab:dataset-preprocess}
\begin{tabular}{@{}cccc@{}}
\toprule
\multicolumn{1}{c}{\textbf{Dataset}} & \multicolumn{1}{c}{\textbf{LPLR}} & \multicolumn{1}{c}{\textbf{DSVD}} & \multicolumn{1}{c}{\textbf{LSVD}} \\ \midrule
CIFAR-10 & $71 \pm 11$ ms & $306 \pm 26$ ms & $312 \pm 8$ ms \\ \midrule
CIFAR-100 & $14 \pm  3$ ms & $51 \pm 3$ ms & $56 \pm 3$ ms \\ \bottomrule
\end{tabular}
\end{table}

\subsection{High-resolution figures for Llama weight compression}
\label{subapp:llama-weight-compress}

We now provide higher resolutions figures of Figs. \ref{fig:llama-8-main} and \ref{fig:llama-4-main} as Figs. \ref{fig:llama-8} and \ref{fig:llama-4}, respectively.
The code to reproduce these numerical simulations is available at our github repository \href{https://github.com/pilancilab/matrix-compressor}{\texttt{https://github.com/pilancilab/matrix-compressor}}.

\begin{figure}[htbp]
    \centering
    \includegraphics[width=\textwidth]{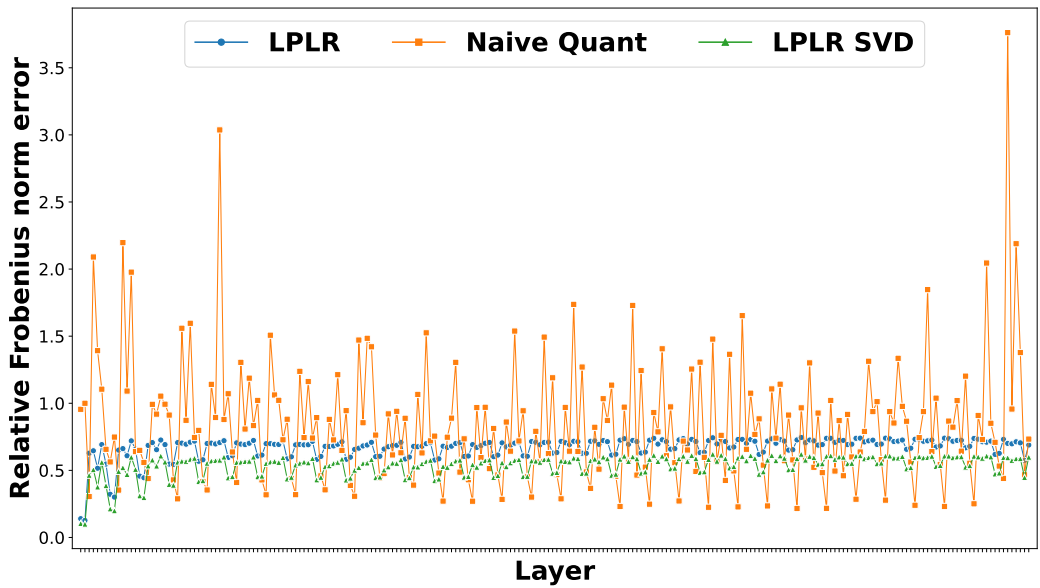}
    \caption{\small Comparison of LPLR and LPLR-SVD on LlaMa weight matrices with $\Bm = \Bm' = 8$ bits, $\Bm_{\rm nq} = 4$ bits, ordered by the original sequence of layers on the ``Layer" - axis. We observe consistently better Frobenius norm error using LPLR and LPLR-SVD, with the exception of specific layers which lend themselves to na\"ive quantization.}
    \label{fig:llama-8}
\end{figure}

\begin{figure}[htbp]
    \centering
    \includegraphics[width=\textwidth]{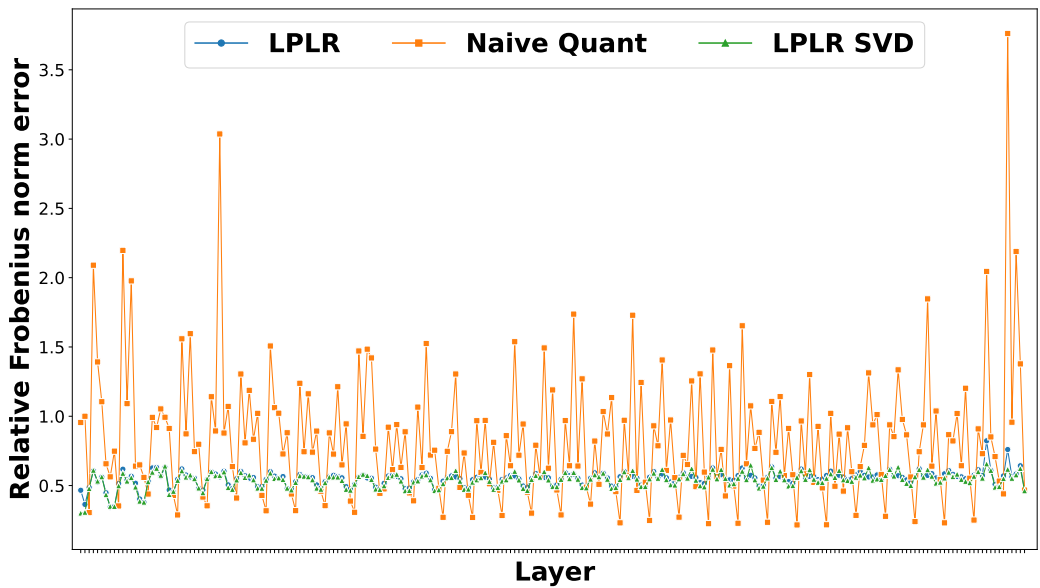}
    \caption{\small Comparison of LPLR and LPLR-SVD on LlaMa weight matrices with $\Bm = \Bm' = \Bm_{\rm nq} = 4$ bits, ordered by the original sequence of layers on the ``Layer" - axis. LPLR and LPLR-SVD demonstrate equivalent performance on a uniform compression budget of $4$ bits, while outperforming na\"ive quant. on almost all layers.}
    \label{fig:llama-4}
\end{figure}

\pagebreak

\begin{figure}[!tbp]
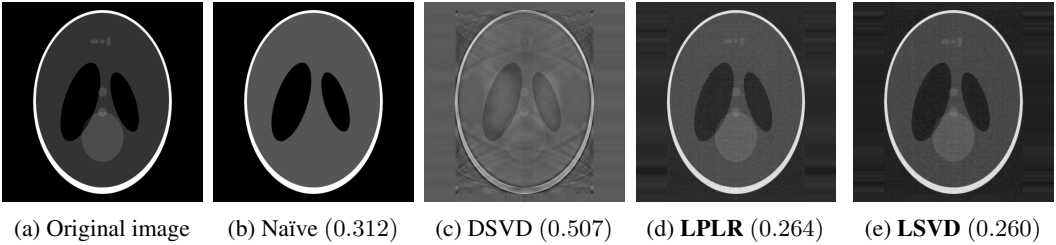

  \centering
  \begin{subfigure}[b]{0.19\textwidth}
    \includegraphics[width=\linewidth]{images/original.png}
    \caption{Original image}
  \end{subfigure}%
  \hfill
  \begin{subfigure}[b]{0.19\textwidth}
    \includegraphics[width=\linewidth]{images/shepp-logan-rank-166_b1-4_b2-8_b0-2/nq.png}
    \caption{Na\"ive $(0.312)$}
  \end{subfigure}%
  \hfill
  \begin{subfigure}[b]{0.19\textwidth}
    \includegraphics[width=\linewidth]{images/shepp-logan-rank-166_b1-4_b2-8_b0-2/dsvd.png}
    \caption{DSVD $(0.507)$}
  \end{subfigure}%
  \hfill
  \begin{subfigure}[b]{0.19\textwidth}
    \includegraphics[width=\linewidth]{images/shepp-logan-rank-166_b1-4_b2-8_b0-2/lplr.png}
    \caption{\textbf{LPLR} $(0.264)$}
  \end{subfigure}
  \hfill
  \begin{subfigure}[b]{0.19\textwidth}
    \includegraphics[width=\linewidth]{images/shepp-logan-rank-166_b1-4_b2-8_b0-2/lplr_svd.png}
    \caption{\textbf{LSVD} $(0.260)$}
  \end{subfigure}
  \caption{\small Compression of Shepp-Logan phantom (a standard test image for medical image reconstruction, courtesy: \href{https://pypi.org/project/phantominator/}{Phantominator, PyPI}). $\Bm = 4$, $\Bm' = 8$, $\Bm_{\rm nq} = 2, m = 166$. Orig. image dim.: $1000 \times 1000$}
  \label{fig:shepp-appendix-Bnq-2}
\end{figure}

\begin{figure}[!tbp]
  \centering
  \begin{subfigure}[b]{0.19\textwidth}
    \includegraphics[width=\linewidth]{images/original.png}
    \caption{Original image}
  \end{subfigure}%
  \hfill
  \begin{subfigure}[b]{0.19\textwidth}
    \includegraphics[width=\linewidth]{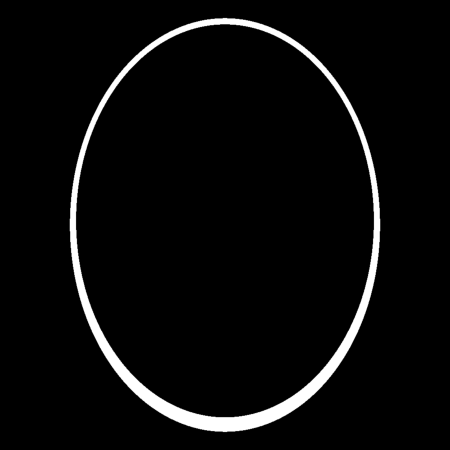}
    \caption{Na\"ive $(0.508)$}
  \end{subfigure}%
  \hfill
  \begin{subfigure}[b]{0.19\textwidth}
    \includegraphics[width=\linewidth]{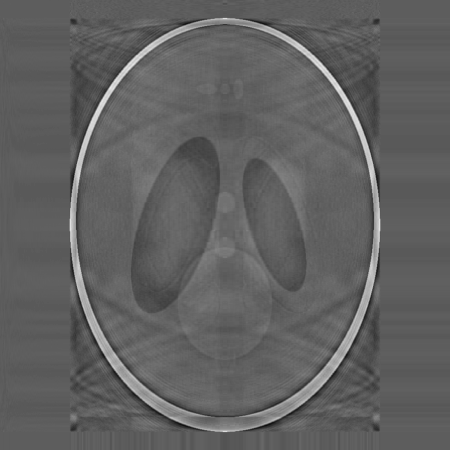}
    \caption{DSVD $(0.508)$}
  \end{subfigure}%
  \hfill
  \begin{subfigure}[b]{0.19\textwidth}
    \includegraphics[width=\linewidth]{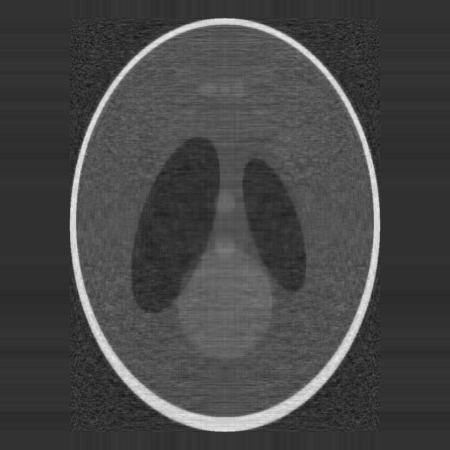}
    \caption{\textbf{LPLR} $(0.311)$}
  \end{subfigure}
  \hfill
  \begin{subfigure}[b]{0.19\textwidth}
    \includegraphics[width=\linewidth]{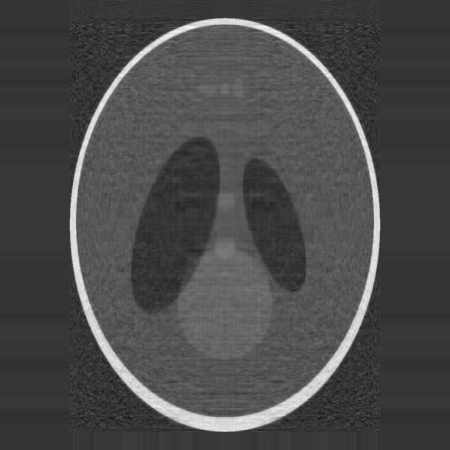}
    \caption{\textbf{LSVD} $(0.301)$}
  \end{subfigure}
  \caption{\small Compression of Shepp-Logan phantom (a standard test image for medical image reconstruction, courtesy: \href{https://pypi.org/project/phantominator/}{Phantominator, PyPI}). $\Bm = 4$, $\Bm' = 8$, $\Bm_{\rm nq} = 1, m = 83$. Orig. image dim.: $1000 \times 1000$}
  \label{fig:shepp-appendix-Bnq-1}
\end{figure}

\begin{figure}[!tbp]
  \centering
  \begin{subfigure}[b]{0.19\textwidth}
    \includegraphics[width=\linewidth]{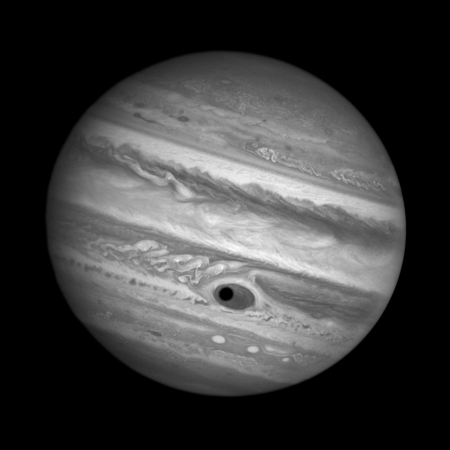}
    \caption{Original image}
  \end{subfigure}%
  \hfill
  \begin{subfigure}[b]{0.19\textwidth}
    \includegraphics[width=\linewidth]{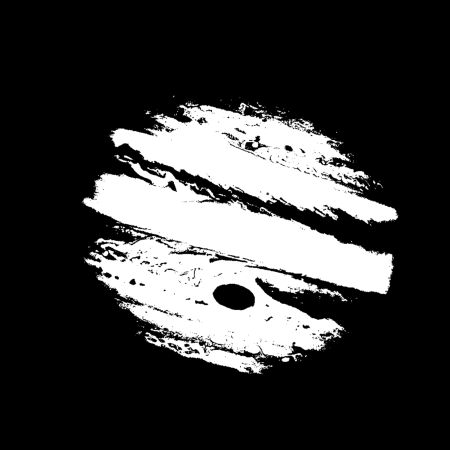}
    \caption{Na\"ive $(0.701)$}
  \end{subfigure}%
  \hfill
  \begin{subfigure}[b]{0.19\textwidth}
    \includegraphics[width=\linewidth]{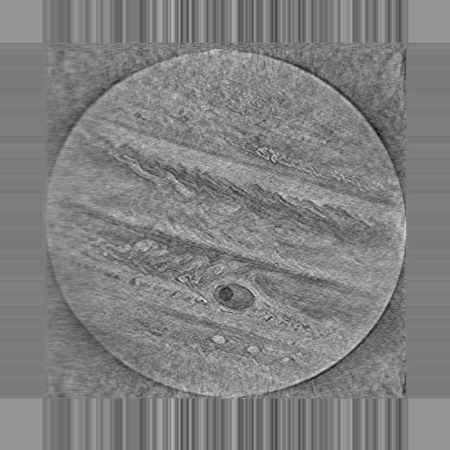}
    \caption{DSVD $(0.724)$}
  \end{subfigure}%
  \hfill
  \begin{subfigure}[b]{0.19\textwidth}
    \includegraphics[width=\linewidth]{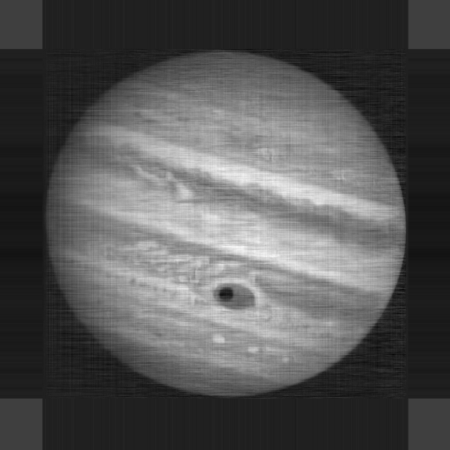}
    \caption{\textbf{LPLR} $(0.346)$}
  \end{subfigure}
  \hfill
  \begin{subfigure}[b]{0.19\textwidth}
    \includegraphics[width=\linewidth]{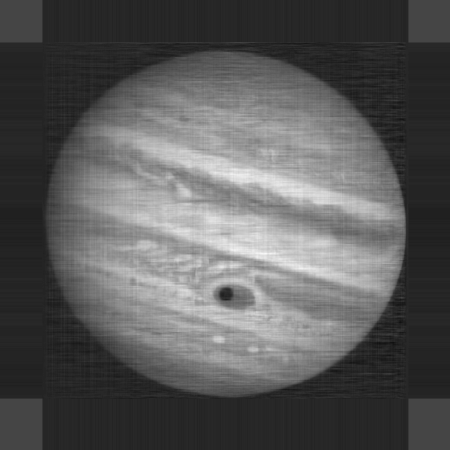}
    \caption{\textbf{LSVD} $(0.351)$}
  \end{subfigure}
  \caption{\small Compression of a Jupiter image showing its Great Red Spot and Ganymede's shadow (courtesy: \href{https://cdn.spacetelescope.org/archives/images/large/opo1431a.jpg}{NASA/ESA Hubble Space Telescope}). $\Bm = 2$, $\Bm' = 8$, $\Bm_{\rm nq} = 1, m = 110$. Orig. image dim.: $1102 \times 1102$}
  \label{fig:jupiter-Bnq-1}
\end{figure}

\begin{figure}[!tbp]
  \centering
  \begin{subfigure}[b]{0.19\textwidth}
    \includegraphics[width=\linewidth]{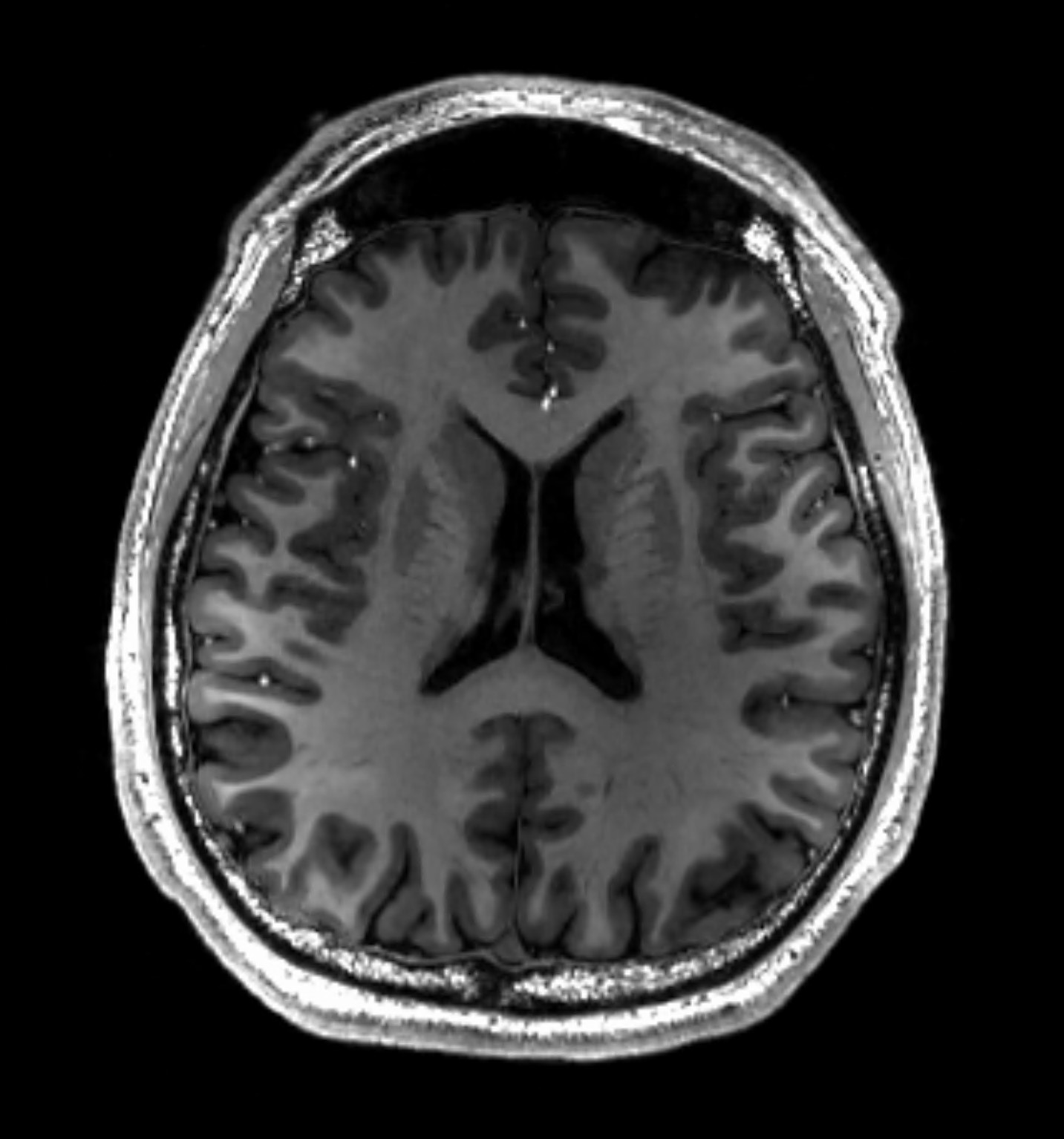}
    \caption{Original image}
  \end{subfigure}%
  \hfill
  \begin{subfigure}[b]{0.19\textwidth}
    \includegraphics[width=\linewidth]{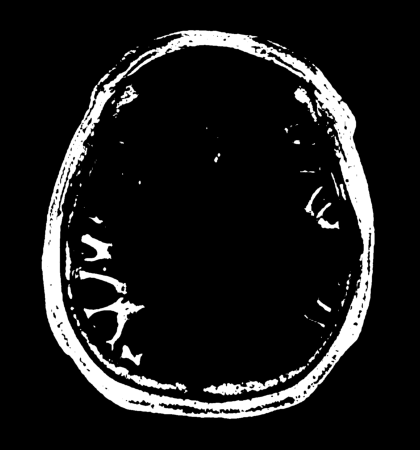}
    \caption{Na\"ive $(0.711)$}
  \end{subfigure}%
  \hfill
  \begin{subfigure}[b]{0.19\textwidth}
    \includegraphics[width=\linewidth]{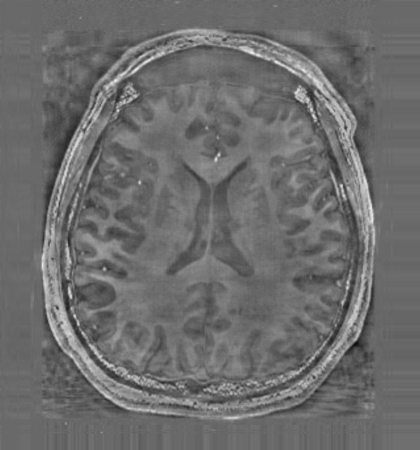}
    \caption{DSVD $(0.556)$}
  \end{subfigure}%
  \hfill
  \begin{subfigure}[b]{0.19\textwidth}
    \includegraphics[width=\linewidth]{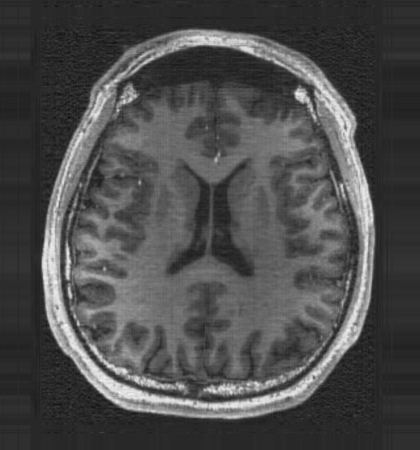}
    \caption{\textbf{LPLR} $(0.329)$}
  \end{subfigure}
  \hfill
  \begin{subfigure}[b]{0.19\textwidth}
    \includegraphics[width=\linewidth]{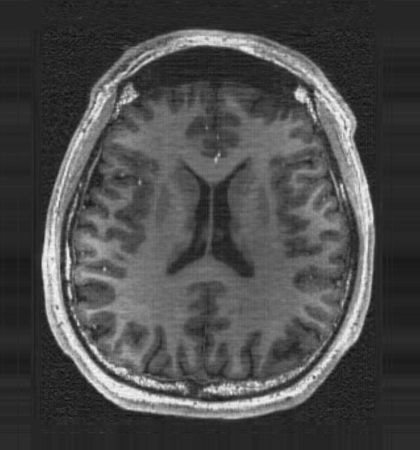}
    \caption{\textbf{LSVD} $(0.321)$}
  \end{subfigure}
  \caption{\small Compression of an MR image of the human brain (courtesy: User Asnaebsa, \href{https://en.wikipedia.org/wiki/Magnetic_resonance_imaging_of_the_brain}{Wikipedia}). $\Bm = 4$, $\Bm' = 8$, $\Bm_{\rm nq} = 1, m = 124$ maintains parity. Orig. image dim.: $1534 \times 1433$}
  \label{fig:mri-human-brain-Bnq-1}
\end{figure}

\section{Limitations and further discussions}
\label{app:limitations-and-further-discussions}

{\bf On uniformly dithered quantizer}: We would like to point out that the approximation error upper bound for our algorithm (derived in Thm. \ref{thm:lplr-approximation-error-formal}), as well as for the baselines such as direct-SVD quantization (refer to Prop. \ref{prop:direct-svd-quant-approx-error}) holds true only for randomized rounding, or uniformly dithered quantizers.
Dithering is preferred due to its ability to produce an unbiased estimate of the input. 
It offer an advantage by introducing a non-zero probability of quantizing an input to either its ceiling or floor, resulting in reduced variance when averaging multiple independent realizations. 
Moreover, the unbiasedness of the quantizer output simplifies the analysis.
In our experiments, we did not observe any difference while using deterministic rounding instead of dithered rounding.

{\bf Comparison with LPLR with direct-SVD and LPLR-SVD}: As we see from the numerical evaluations (ref. to Tab. \ref{tab:imdb} and Tab. \ref{tab:emotion}), direct-SVD quant. and LPLR-SVD performs better than our proposed LPLR algorithm (that computes a Gaussian sketch).
Intuitively, one would expect the performance of LPLR-SVD and LPLR (with Gaussian sketch) to be similar because the core idea behind both of them is the same -- Quantize the first low-rank factor, and then project the columns of A onto the quantized (first) low-rank factor.
For LPLR-SVD the basis being quantized (top-k singular vectors) is the exact top-k rangefinder (for the column space of A), whereas in LPLR, we sketch using the Gaussian matrix to get an approximate rangefinder.
This sketching error might make LPLR worse over LPLR-SVD. But because sketching also introduces ``equalization effect of the Gaussian matrices", we can improve the resolution of the uniform scalar quantizers by choosing a smaller dynamic range.
Hence, there is a tradeoff when comparing LPLR and LPLR-SVD -- a tradeoff between sketching error and the quantization error.

Let us now focus on LPLR-SVD and direct-SVD quant. 
We will extrapolate this intuition to a comparison between LPLR and direct-SVD quant.
The factorization obtained using direct-SVD quant. is $\Av \approx \Qm(\Utv_k)\Qm'(\Vv_k^{\top})$, where the approximation obtained using LPLR-SVD is $\Av \approx \Qm(\Utv_k)\Qm'\Paren{\pinv{\Qm(\Utv_k)}\Av}$.
Here, $\Utv_k$ is the $n \times k$ matrix obtained using the first $k$ columns of $\Uv\Sigmav$, where $\Av = \Uv\Sigmav\Vv^{\top}$ is the full SVD of $\Av$.
Let us assume $\Bm' \to \infty$ so that we don't have to worry about the second quantizer. Then, we are left to compare the approximations: $\Av \approx \Qm(\Utv_k)\Vv_k^{\top}$ and $\Av \approx \Qm(\Utv_k)\pinv{\Qm(\Utv_k)}\Av$.
Clearly, the latter has smaller Frobenius norm error because it is the minima of the minimization problem $\argminimize_{\Wv}\fronormsq{\Qm(\Utv_k)\Wv - \Av}$.
So, for $\Bm' \to \infty$, it is evident that direct-SVD can only perform as good as LPLR-SVD, and not better than that.
Furthermore, note that this difference in performance will be small when the bit-budget of the first quantizer, $\Bm$ is large, and it will be large when $\Bm$ is small.
When $\Bm \to \infty$, this difference will be exactly zero.
In other words, direct-SVD quant. and LPLR-SVD will be the same factorization when $\Bm \to \infty$ and $\Bm' \to \infty$.
When both $\Bm$ and $\Bm'$ are finite, direct-SVD can outperform LPLR-SVD depending on which one out of $\Qm'\Paren{\pinv{\Qm(\Utv_k)}\Av}$ and $\Qm'(\Vv_k^{\top})$ is closer to the optimal solution, i.e., $\Qm(\Utv_k)^{\dagger}\Av$.
A limitation of our analysis is that we cannot always definitively predict which one will perform better, because we derive {\it upper bounds} on the approximation error.

{\bf Comparison of LPLR with naive quantization}: 
In Tables $1$ and $2$, we have compared LPLR with other benchmarks. 
In the column denoting approximation errors, aside from the common quantization error $\epsilon$ present across all rows, DSVD and LPLR introduce an additional term representing the error from low-rank approximation. 
Consequently, it might appear that naive quantization consistently outperforms these methods. 
However, it's essential to recognize that compression techniques based on low-rank factorization hold value exclusively when the matrix being compressed is inherently low-rank, meaning that $\lVert \mathbf{A}_k - \mathbf{A} \rVert_{\rm F}^2$ starts off as small.
If $\lVert \mathbf{A}_k - \mathbf{A} \rVert_{\rm F} = 0$, LPLR can achieve identical error levels as naive quantization, while demanding fewer bits than the latter. 
In other words, given the same bit-budget, LPLR can achieve a lower level of error compared to naive quantization.

{\bf Matrix compression with and without any computation constraints}: In situations where computational resources are limited, employing the naive strategy emerges as the most cost-effective approach for matrix quantization. 
Nevertheless, due to its failure to capitalize on a matrix's inherent low-rank arrangement, naive quantization may prove considerably suboptimal. 
As matrices increase in dimension, accommodating them in memory becomes impractical, rendering approaches like DSVD or LSVD unviable.
In such scenarios, our LPLR method stands as a viable alternative, demanding slightly more computational effort than naive quantization, yet capable of harnessing the low-rank structure for improved approximation accuracy.

If there is no scarcity of computation resources in being able to compute the SVD, and our goal is to just compress an input matrix $\Av$, then it is possible to compress the matrix with all the strategies, and choose the best one.
However, it must be kept in mind that LPLR provides significant savings with respect to computational complexity, i.e., $\Om(ndm)$, compared to LPLR-SVD, which has a complexity of $\Om(nd^2)$.
Consequently, if the matrix being compressed is very large, it might not even be feasible to do direct-SVD quant. or LPLR-SVD, given the current memory limitations of available GPUs,  and LPLR (with Gaussian sketch) becomes our only option.

Furthermore, a careful examination of Tabs. \ref{tab:imdb} and \ref{tab:emotion} shows that even though LPLR-SVD and direct-SVD quant. have a smaller Frobenius norm error than LPLR, the accuracy and weighted F1 scores are still comparable.
One aspect of quantization that we have not studied in detail is its inherent regularization effect when testing the performance on test datasets.
Whether LPLR leads to a better regularization effect that the other baselines is an open question that will be studied in future works.

{\bf Comparison with existing works on quantized random projections for approximate nearest neighbor search}: Prior works of \citet{li2019generalization} and \citet{li2019random} consider the idea of quantized random projections specifically for the the application of approximate nearest neighbor search.
Although related, the goals of these works are different from matrix compression considered in our work.
These prior works approximate the matrix vector product $\Av\xv$ by $\Qm(\Av)\Qm'(\Sv^{\top}\xv)$.
For their setup, in addition to $\Qm(\Av\Sv)$, the random matrix $\Sv$ needs to be stored in full-precision, in order to process an incoming query $\xv$.
So, their storage requirement is $nm$ quantized and $dm$ full-precision parameters.
In contrast, {\bf LPLR} stores only the quantized entries of $\Lv$ and $\Rv$, i.e., $m(n+d)$ quantized entries, and $\Av \approx \Lv\Rv\xv$, implying a smaller storage requirement.
Furthermore, since {\bf LPLR} involves computation of $\Rv\xv$, where $\Rv$ consists of $md$ quantized values, it can leverage modern advancements in hardware primitives for speeding up low-precision computations, such as half and mixed-precision compute.

{\bf Extension to streaming settings}: LPLR as presented in this paper, is used for compression when the entire matrix $\Av$ is available.
An interesting extension is to consider a variant of LPLR to streaming data settings in order to handle new data points.
It is possible to do so using sketching based low rank approximations \cite{tropp-low-rank-streaming}.
The left factor $\Lv$ is updated with the concatenation of an additional row corresponding to the newly arriving datapoint.
The second factor $\Rv$, which is the solution of a new least squares minimization problem, can be updated using Woodbury matrix inversion lemma.

\end{document}